\newcommand*{\ICLR}{}
\newcommand*{\CAMREADY}{}
\definecolor{myblue}{rgb}{0.15,0.35,0.8}
\def\papertitle{Why is Your Language Model a Poor Implicit Reward Model?} 
\title{\papertitle}
\newcommand{\aff}[1]{\textsuperscript{\normalfont #1}}
\author{
	Noam Razin\aff{$\dagger$}, Yong Lin\aff{$\dagger$}, Jiarui Yao\aff{$\ddagger$}, Sanjeev Arora\aff{$\dagger$}\\[2mm]
	\textsuperscript{$\dagger$ }Princeton Language and Intelligence, Princeton University \\  \textsuperscript{$\ddagger$ }University of Illinois Urbana-Champaign
}
	\definecolor{myblue}{rgb}{0.176,0.372,0.85}
	\definecolor{myblue}{rgb}{0.176,0.372,0.85}
	\definecolor{myblue}{rgb}{0.176,0.372,0.85}
\crefname{assumption}{Assumption}{Assumptions}
	\newtheorem{claim}[theorem]{Claim}
	\newtheorem{fact}[theorem]{Fact}
	\newtheorem{procedure}{Procedure}
	\newtheorem{conjecture}{Conjecture}	
	\newtheorem{hypothesis}{Hypothesis}	
	\newcommand{\qed}{\hfill\ensuremath{\blacksquare}}
	\newtheorem{lemma}{Lemma}
	\newtheorem{corollary}{Corollary}
	\newtheorem{theorem}{Theorem}
	\theoremstyle{definition}
	\newtheorem{definition}{Definition}
	\newtheorem{assumption}{Assumption}
\definecolor{green}{rgb}{0.0, 0.5, 0.0}
\definecolor{xcolor-gray}{gray}{0.45}
\newcommand\gray[1]{{\color{xcolor-gray}#1}}
\definecolor{colorexrm}{rgb}{0.29411764705882354,0.37254901960784315,0.6666666666666666}
\definecolor{colorimrm}{rgb}{0.9490196078431372,0.3764705882352941.0.09411764705882353}
\definecolor{colorexgrm}{rgb}{0.3137254901960784,0.5686274509803921,0.40784313725490196}
\newcommand{\colorexrm}[1]{\textcolor{colorexrm}{#1}}
\newcommand{\colorimrm}[1]{\textcolor{colorimrm}{#1}}
\newcommand{\colorexgrm}[1]{\textcolor{colorexgrm}{#1}}
\definecolor{bg_lightblue}{rgb}{0.9529,0.9725,1}
\definecolor{darkolive}{rgb}{0.1607,0.1803,0.1176}
\newtcolorbox[auto counter]{takeawaybox}[2][]{
	takeawaybox,
	title=Takeaway~\thetcbcounter: #2,#1
}
\newtcolorbox[auto counter]{nicebox}[2][]{
	takeawaybox,
	title=#2,#1
}
\def\be{\begin{equation}}
	\def\ee{\end{equation}}
\def\beas{\begin{eqnarray*}}
	\def\eeas{\end{eqnarray*}}
\def\bea{\begin{eqnarray}}
	\def\eea{\end{eqnarray}}
\newcommand{\hbf}{{\mathbf h}}
\newcommand{\xbf}{{\mathbf x}}
\newcommand{\ybf}{{\mathbf y}}
\newcommand{\zbf}{{\mathbf z}}
\newcommand{\ubf}{{\mathbf u}}
\newcommand{\vbf}{{\mathbf v}}
\newcommand{\ebf}{{\mathbf e}}
\newcommand{\Ubf}{{\mathbf U}}
\newcommand{\D}{{\mathcal D}}
\newcommand{\E}{{\mathcal E}}
\renewcommand{\S}{{\mathcal S}}
\newcommand{\T}{{\mathcal T}}
\newcommand{\X}{{\mathcal X}}
\newcommand{\OO}{{\mathcal O}}
\renewcommand{\L}{\mathcal{L}}
\newcommand{\R}{{\mathbb R}}
\newcommand{\N}{{\mathbb N}}
\newcommand{\indc}[1]{\mathds{1} \brk[s]*{ #1 }}
\newcommand{\inprod}[2]  {\left\langle{#1},{#2}\right\rangle}
\newcommand{\inprodbig}[2]  {\big \langle{#1},{#2} \big \rangle}
\newcommand{\inprodnoflex}[2]{\langle{#1},{#2}\rangle}
\DeclareMathOperator*{\argmin}{argmin}
\newcommand{\smax}{\mathrm{softmax}}
\newcommand{\acc}{\mathrm{acc}}
\newcommand{\unembed}{\Ubf}
\newcommand{\rmhead}{\ubf}
\newcommand{\piref}{\pi_{\mathrm{ref}}}
\newcommand{\vocab}{{\mathcal{V}}}
\newcommand{\rep}{{\hbf}}
\newcommand{\trainset}{{\D_{\T}}}
\newcommand{\unseenset}{{\D_{\E}}}
\newcommand{\yw}{\ybf^{+}}
\newcommand{\yl}{\ybf^{-}}
\newcommand{\exrmparams}{ { \theta_{\mathrm{EX}} } }
\newcommand{\imrmparams}{ { \theta_{\mathrm{IM}} } }
\newcommand{\grmparams}{ { \theta_{\mathrm{G}} } }
\newcommand{\exrm}{{ r_{\exrmparams} }}
\newcommand{\imrm}{{ r_{\imrmparams} }}
\newcommand{\imrmnoparams}{{ r_{\mathrm{IM}} }}
\newcommand{\grm}{{ r_{\grmparams} }}
\newcommand{\loss}{ { \L } }
\newcommand{\correct}{\mathcal{C}}
\newcommand{\ellgrm}{ \ell^{\mathrm{G}} }
\newcommand{\grmloss}{ \L^{\mathrm{G}} }
\newcommand{\linembedex}{ \phi_{\mathrm{EX}} }
\newcommand{\linembedim}{ \phi_{\mathrm{IM}} }
\DeclareFontFamily{U}{mathx}{\hyphenchar\font45}
\DeclareFontShape{U}{mathx}{m}{n}{<-> mathx10}{}
\DeclareSymbolFont{mathx}{U}{mathx}{m}{n}
\DeclareMathAccent{\widebar}{0}{mathx}{"73}
\definecolor{darkspringgreen}{rgb}{0.09, 0.45, 0.27}
	\renewcommand{\endnote}[1]{\null} 
	\newcommand*{\ABBR}{}
	\newcommand*{\ABBR}{}
	\newcommand*{\ABBR}{}
	\newcommand*{\ABBR}{}
	\newcommand*{\ABBR}{}
	\newcommand{\eg}{{\it e.g.}}
	\newcommand{\ie}{{\it i.e.}}
	\newcommand{\cf}{{\it cf.}}
\begin{document}
	
	
	\ifdefined\ARXIV
		\maketitle
	\fi
	\ifdefined\NEURIPS
	\title{Paper Title}
		\author{
			Author 1 \\
			Author 1 Institution \\	
			\texttt{author1@email} \\
			\And
			Author 1 \\
			Author 1 Institution \\	
			\texttt{author1@email} \\
		}
		\maketitle
	\fi
	\ifdefined\CVPR
		\title{Paper Title}
		\author{
			Author 1 \\
			Author 1 Institution \\	
			\texttt{author1@email} \\
			\and
			Author 2 \\
			Author 2 Institution \\
			\texttt{author2@email} \\	
			\and
			Author 3 \\
			Author 3 Institution \\
			\texttt{author3@email} \\
		}
		\maketitle
	\fi
	\ifdefined\AISTATS
		\twocolumn[
		\aistatstitle{Paper Title}
		\ifdefined\CAMREADY
			\aistatsauthor{Author 1 \And Author 2 \And Author 3}
			\aistatsaddress{Author 1 Institution \And Author 2 Institution \And Author 3 Institution}
		\else
			\aistatsauthor{Anonymous Author 1 \And Anonymous Author 2 \And Anonymous Author 3}
			\aistatsaddress{Unknown Institution 1 \And Unknown Institution 2 \And Unknown Institution 3}
		\fi
		]	
	\fi
	\ifdefined\ICML
		\icmltitlerunning{Paper Title}
		\twocolumn[
		\icmltitle{Paper Title} 
		\icmlsetsymbol{equal}{*}
		\begin{icmlauthorlist}
			\icmlauthor{Author 1}{inst} 
			\icmlauthor{Author 2}{inst}
		\end{icmlauthorlist}
		\icmlaffiliation{inst}{Some Institute}
		\icmlcorrespondingauthor{Author 1}{author1@email}
		\icmlkeywords{}
		\vskip 0.3in
		]
		\printAffiliationsAndNotice{} 
	\fi
	\ifdefined\ICLR
		\title{Why is Your Language Model \\
		a Poor Implicit Reward Model?}
		
        \newcommand{\aff}[1]{\textsuperscript{\normalfont #1}}

		\author{
			Noam Razin\aff{$\dagger$}, Yong Lin\aff{$\dagger$}, Jiarui Yao\aff{$\ddagger$}, Sanjeev Arora\aff{$\dagger$}\\[2mm]
			\textsuperscript{$\dagger$ }Princeton Language and Intelligence, Princeton University \\  \textsuperscript{$\ddagger$ }University of Illinois Urbana-Champaign
		}
		\maketitle
	\fi
	\ifdefined\COLT
		\title{Paper Title}
		\coltauthor{
			\Name{Author 1} \Email{author1@email} \\
			\addr Author 1 Institution
			\And
			\Name{Author 2} \Email{author2@email} \\
			\addr Author 2 Institution
			\And
			\Name{Author 3} \Email{author3@email} \\
			\addr Author 3 Institution}
		\maketitle
	\fi

	\vspace{-0.5mm}
\begin{abstract}
\vspace{-0.5mm}
Reward models are key to language model post-training and inference pipelines.
Conveniently, recent work showed that every language model defines an \emph{implicit reward model (IM-RM)}, without requiring any architectural changes.
However, such IM-RMs tend to generalize worse, especially out-of-distribution, compared to \emph{explicit reward models (EX-RMs)} that apply a dedicated linear head over the hidden representations of a language model.
The existence of a generalization gap is puzzling, as EX-RMs and IM-RMs are nearly identical.
They can be trained using the same data, loss function, and language model, and differ only in how the reward is computed.
Toward a fundamental understanding of the implicit biases underlying different reward model types, we investigate the root cause of this gap.
Our main finding, backed by theory and experiments, is that IM-RMs rely more heavily on superficial token-level cues.
Consequently, they often generalize worse than EX-RMs under token-level distribution shifts, as well as in-distribution.
Furthermore, we provide evidence against alternative hypotheses for the generalization gap.
Most notably, we challenge the claim that IM-RMs struggle in tasks where generation is harder than verification because they can operate both as a verifier and a generator.
Overall, our results highlight that seemingly minor design choices can substantially impact the generalization behavior of reward models.
\end{abstract}

	\ifdefined\COLT
		\medskip
		\begin{keywords}
			\emph{TBD}, \emph{TBD}, \emph{TBD}
		\end{keywords}
	\fi

	
	\section{Introduction}
\label{sec:intro}

Language model post-training and inference pipelines often rely on \emph{reward models} to assess the quality of generated responses \citep{cobbe2021training,achiam2023gpt,dubey2024llama,team2024gemma,qwen2024technicalreport,snell2025scaling}.
Yet, little is known about the relative advantages and disadvantages of different reward model types.
Two prevalent, nearly identical types are \emph{explicit reward models (EX-RMs)} \citep{ouyang2022training} and \emph{implicit reward models (IM-RMs)} \citep{rafailov2023direct}.
EX-RMs and IM-RMs can be trained based on the same language model $\pi_\theta$, using the same data and loss function.
They differ only in how the reward is computed: EX-RMs apply a linear head over the hidden representation that $\pi_\theta$ produces for a prompt-response pair $(\xbf, \ybf)$, while the reward of an IM-RM is implicitly defined by $\pi_\theta$ through $\ln \pi_\theta (\ybf | \xbf)$~---~see \cref{fig:illustration_ex_vs_im_rm}.

\begin{figure}[H]
	\vspace{0mm}
	\begin{center}
		\includegraphics[width=0.9\textwidth]{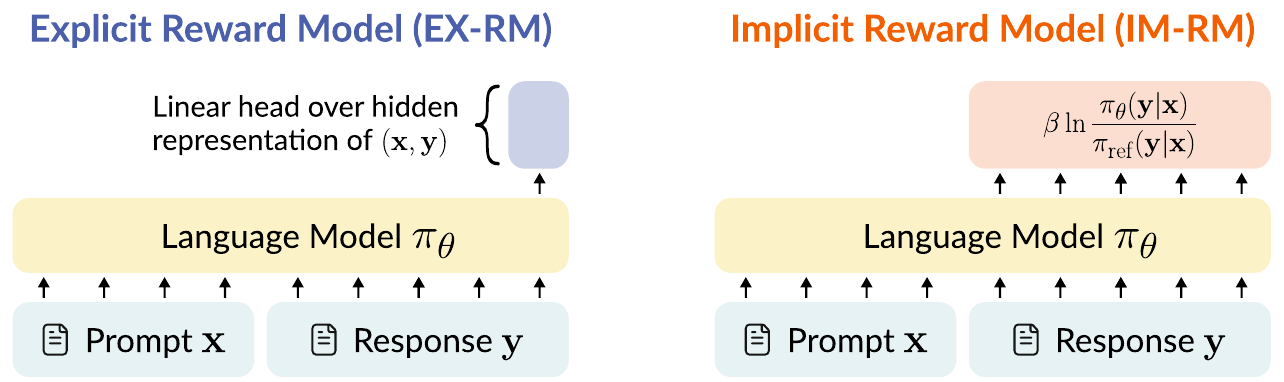}
	\end{center}
	\vspace{-2mm}
	\caption{
		\textbf{Explicit vs implicit reward models.}
		To compute the reward for a prompt-response pair $(\xbf, \ybf)$, an EX-RM applies a linear head to the hidden representation that the language model $\pi_\theta$ produces for $(\xbf, \ybf)$.
		In contrast, the reward of an IM-RM is implicitly defined by $\pi_\theta$ through $\smash{\beta \ln \frac{ \pi_\theta (\ybf | \xbf) }{ \piref (\ybf | \xbf) } }$, where $\beta \in \R_{> 0}$ is a fixed coefficient and $\piref$ is a reference distribution (\cf~\citet{rafailov2023direct}).
	}
	\label{fig:illustration_ex_vs_im_rm}
\end{figure}

Despite the vast similarity of EX-RMs and IM-RMs, prior work~\citep{lin2024limited,lambert2025rewardbench,swamy2025all} observed empirically that IM-RMs tend to generalize worse, especially out-of-distribution, as measured by \emph{accuracy} in ranking candidate responses.
The existence of a \emph{generalization gap} is puzzling.
Why would a seemingly minor difference in how the reward is computed substantially affect the resulting accuracy of a reward model?

Toward a fundamental understanding of the implicit biases underlying different reward model types, we investigate the root cause for the generalization gap between EX-RMs and IM-RMs.
Our main finding, established through theory and experiments, is that IM-RMs rely more heavily on superficial token-level cues.
Consequently, IM-RMs typically generalize worse than EX-RMs to token-level distribution shifts (\ie, to responses that are semantically similar to in-distribution responses, but have different surface forms), as well to in-distribution responses.
On the other hand, when subject to domain shifts, IM-RMs can perform comparably to or better than EX-RMs~---~see \cref{fig:ex_vs_im_rm_generalization_uf,fig:ex_vs_im_rm_generalization_math}.

\begin{figure*}[t]
	\vspace{-4mm}
	\begin{center}
		\includegraphics[width=1\textwidth]{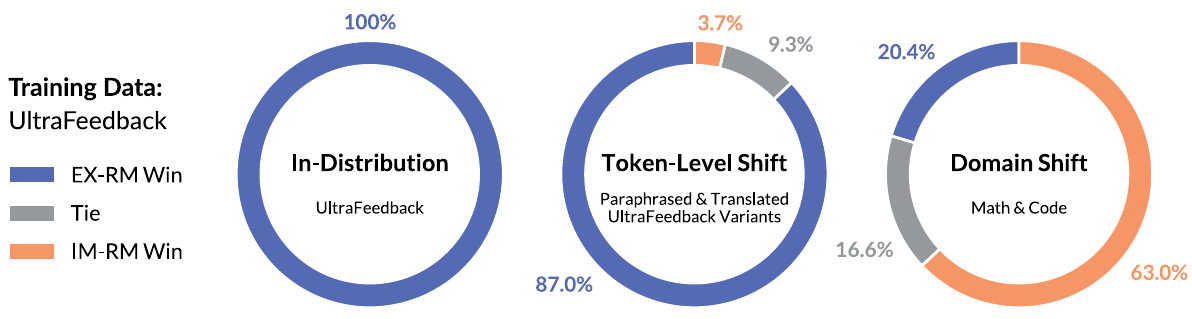}
	\end{center}
	\vspace{-2mm}
	\caption{
		\textbf{IM-RMs are less robust than EX-RMs to token-level distribution shifts, but perform comparably or better under domain shifts.}
		We trained EX-RMs and IM-RMs on UltraFeedback~\citep{cui2024ultrafeedback}, using the same initial language models, and evaluated their accuracy in-distribution (UltraFeedback test set), under token-level shifts (three UltraFeedback variants, in which responses were either paraphrased or translated to another language), and under domain shifts (two math and one code datasets).
		Reported are the win-rates, \ie, the percentage of evaluations in which either the EX-RM or IM-RM achieved a higher accuracy.
		If the accuracies were within $1\%$ of each other, we considered it a tie.
		The experiment included three random seeds per configuration and six language models: Gemma-2-2B-IT~\citep{team2024gemma}, Qwen-2.5-1.5B-Instruct, Qwen-2.5-3B-Instruct~\citep{qwen2024technicalreport}, Llama-3.2-1B-Instruct, Llama-3.2-3B-Instruct, and Llama-3.1-8B-Instruct~\citep{dubey2024llama}.
		See \cref{sec:empirical:real_world} for additional details.
	}
	\label{fig:ex_vs_im_rm_generalization_uf}
\end{figure*}

Before arriving at this conclusion, we first consider an alternative hypothesis for the generalization gap, alluded to in the literature (\cf~\citet{dong2024rlhf,singhal2024d2po}): IM-RMs are harder to learn in tasks with a \emph{generation-verification gap} due to their dual role as a verifier and a generator (\cref{sec:gen_verification}).
Specifically, in tasks where responses can be categorized into correct and incorrect, an IM-RM is trained not only to assign a high reward to correct responses, but also to generate them via its underlying language model.
If generating correct responses is harder than verifying their correctness, then the (verification) accuracy of IM-RMs should intuitively lag behind that of EX-RMs, which need only verify responses.
However, we challenge this intuitive argument by proving that learning to verify with IM-RMs does not require learning to generate.
Experiments on a Hamiltonian cycle verification task corroborate our theory.

Then, to identify what drives the generalization gap between EX-RMs and IM-RMs, we theoretically characterize their learning dynamics, \ie, the evolution of rewards during gradient-based training (\cref{sec:analysis:reward_dynamics}).
Our analysis reveals that the learning dynamics of EX-RMs depends on responses primarily through their hidden representations, whereas IM-RMs are more sensitive to the specific tokens appearing in the responses.
In particular, for IM-RMs, increasing the reward of a response may not affect, or even decrease, the reward of a semantically similar response that consists of different tokens.
This leads to our conclusion that IM-RMs often underperform EX-RMs since they rely more strongly on superficial token-level cues.
We further substantiate this claim: \emph{(i)} theoretically, by providing settings in which IM-RMs provably fail to generalize to unseen tokens, while EX-RMs generalize successfully when hidden representations are well-structured (\cref{sec:analysis:generalization_gap}), and \emph{(ii)} empirically across controlled and real-world settings (\cref{sec:empirical}).

Overall, our results highlight that seemingly minor design choices can have an outsized effect on how reward models generalize.
We hope insights from this work will spur further research into the implicit biases of different reward model types and facilitate enhancing their robustness.
	
	\section{Preliminaries}
\label{sec:prelim}

Let $\vocab$ be a finite vocabulary of tokens and $\vocab^*$ denote the set of all finite-length token sequences.
Language models can be decomposed into two parts.
First, a neural network backbone that intakes a sequence of tokens $\vbf \in \vocab^*$ and produces a \emph{hidden representation} $\rep_\vbf \in \R^D$ (\eg, a Transformer~\citep{vaswani2017attention}).
Second, an \emph{unembedding matrix} $\unembed \in \R^{\abs{\vocab} \times D}$ that converts the hidden representation into logits for the next-token distribution.
Given a prompt $\xbf \in \vocab^*$, a language model $\pi_\theta$ assigns probabilities to responses $\ybf \in \vocab^*$ autoregressively:
\[
	\pi_\theta (\ybf | \xbf) = \prod\nolimits_{k = 1}^{\abs{\ybf}} \pi_\theta (\ybf_k | \xbf, \ybf_{< k}) = \prod\nolimits_{k = 1}^{ 
		\abs{\ybf} } \smax \brk*{ \unembed \rep_{ \xbf, \ybf_{< k} } }_{\ybf_k}
	\text{\,,}
\]
where $\theta$ stands for the language model's parameters (\ie, it includes the parameters of the neural network backbone and the unembedding matrix), $\ybf_{< k}$ and $\ybf_k$ denote the first $k - 1$ tokens and $k$th token of $\ybf$, respectively, and $\smax (\zbf)_v := \exp (\zbf_v) / \sum\nolimits_{v' \in \vocab} \exp (\zbf_{v'})$ for $\zbf \in \R^{\abs{\vocab}}$.

\subsection{Reward Models}
\label{sec:prelim:rm}

Reward models are typically initialized from a preexisting language model $\pi_\theta$ and trained to predict a scalar reward that indicates the quality of a response $\ybf$ to a prompt $\xbf$.
Two prevalent reward model types are \emph{explicit reward models (EX-RMs)} \citep{ouyang2022training} and \emph{implicit reward models (IM-RMs)} \citep{rafailov2023direct}.
As detailed below, EX-RMs and IM-RMs are almost identical.
They are trained using the same data, loss function, and language model $\pi_\theta$, and differ only in how the reward is computed based on $\pi_\theta$.
This work is devoted to understanding why, despite these vast similarities, EX-RMs and IM-RMs generalize differently.

\textbf{Explicit reward model (EX-RM).}
To compute the reward for a prompt-response pair $(\xbf, \ybf)$, an EX-RM applies a linear head $\rmhead \in \R^D$ over the hidden representation $\rep_{\xbf, \ybf}$ that $\pi_\theta$ produces:
\be
	\exrm (\xbf, \ybf) := \inprod{ \rmhead }{ \rep_{\xbf, \ybf} }
	\text{\,,}
	\label{eq:ex_rm}
\ee
where $\exrmparams$ stands for the trainable parameters of the EX-RM (\ie, it includes the parameters of the neural network backbone and the linear head).\footnote{
A nascent EX-RM variant, which we refer to as \emph{explicit generative reward model (EX-GRM)}~\citep{zhang2025generative}, directly asks the language model $\pi_\theta$ to verify whether $\ybf$ is a good response to~$\xbf$.
Then, the probability assigned to the token $\mathrm{Yes}$ is taken as the reward.
For brevity, we focus on EX-RMs in the main text and defer an extension of our theoretical and empirical analyses for EX-GRMs to \cref{app:ex_grms,app:experiments_further}, respectively.
Notably, we find that the main conclusions stated for EX-RMs hold for EX-GRMs as well.
}

\textbf{Implicit reward model (IM-RM).}
As shown in \citet{rafailov2023direct}, every language model $\pi_\theta$ defines an IM-RM through the log probabilities that it assigns to responses:
\be
	\imrm (\xbf, \ybf ) := \beta \ln \frac{ \pi_\imrmparams (\ybf | \xbf) }{ \piref (\ybf | \xbf) }
	\text{\,,}
	\label{eq:im_rm}
\ee
where $\imrmparams = \theta$ denotes the trainable parameters of the IM-RM, $\beta \in \R_{> 0}$ is a fixed coefficient, and the reference distribution $\piref$ is canonically the language model from which the IM-RM was initialized.
Note that besides assigning rewards, an IM-RM can generate responses via $\pi_\imrmparams$.
Moreover, increasing the reward of a response entails increasing its probability under $\pi_\imrmparams$.

\textbf{Training objective.}
Let $\trainset$ be a training set containing preferences $(\xbf, \yw, \yl)$, where $\xbf$ is a prompt, $\yw$ is a chosen response to $\xbf$, and $\yl$ is a rejected response to $\xbf$.
EX-RMs and IM-RMs are usually trained by minimizing a Bradley-Terry log-likelihood loss~\citep{bradley1952rank}:
\be
	\loss ( r ) := \frac{1}{\abs{\trainset}} \sum\nolimits_{(\xbf, \yw, \yl) \in \trainset} - \ln \sigma \brk*{ r (\xbf, \yw) - r (\xbf, \yl) }
\text{\,,}
\label{eq:rm_loss}
\ee
where $r: \vocab^* \times \vocab^* \to \R$ can be either $\exrm$ or $\imrm$ and $\sigma: \R \to [0, 1]$ denotes the sigmoid function.

\textbf{Outcome vs process rewards.}
In certain domains, such as math and reasoning, reward models have been used for providing feedback on intermediate steps of a response~\citep{uesato2022solving,lightman2024let}.
Both EX-RMs and IM-RMs can be adapted to evaluate partial responses, the former by applying the linear head over the hidden representation of each intermediate step and the latter by using the conditional log probabilities of these steps.
For conciseness, we focus on settings where the reward is assigned to complete responses.

\subsection{Measuring Generalization via Accuracy}
\label{sec:prelim:accuracy}

In accordance with~\citet{lin2024limited,lambert2025rewardbench,zhou2025rmb,frick2025evaluate,liu2025rm}, we measure generalization via the \emph{accuracy} of a reward model in ranking responses over preference data unseen in training.

\begin{definition}
\label{def:accuracy}
For a finite set $\S$ containing preferences $(\xbf, \yw, \yl)$, where $\xbf$ is a prompt, $\yw$ is a chosen response, and $\yl$ is a rejected response, the \emph{accuracy} of $r : \vocab^* \times \vocab^* \to \R$ over $\S$ is:
\[
	\acc_\S (r) := \frac{1}{\abs{\S}} \sum\nolimits_{(\xbf, \yw, \yl) \in \S} \indc{ r (\xbf, \yw) > r (\xbf, \yl) } + \frac{1}{2} \cdot \indc{ r (\xbf, \yw) = r (\xbf, \yl) }
	\text{\,,}
\]
where $\indc{\cdot}$ is an indicator function.
Note that the maximal accuracy is one and the minimal is zero.
\end{definition}
	
	\section{Are IM-RMs Harder to Learn in Tasks With a Generation-Verification Gap?}
\label{sec:gen_verification}

A potential explanation for why IM-RMs often underperform EX-RMs, alluded to in the literature (\cf~\citet{dong2024rlhf,singhal2024d2po}), is that IM-RMs are harder to learn in tasks with a \emph{generation-verification gap}.
Namely, in tasks where responses can be categorized into correct and incorrect, an IM-RM is trained not only to assign a high reward to correct responses, but also to generate them via its underlying language model.
Thus, if generating correct responses is harder than verifying their correctness in a given task, then the accuracy of IM-RMs should intuitively fall below that of EX-RMs, which need only verify responses.

We prove that this intuitive explanation is flawed~---~learning to verify with IM-RMs does not require learning to generate (\cref{sec:gen_verification:theory}).
Experiments on a Hamiltonian cycle verification task, which is widely believed to exhibit a generation-verification gap~\citep{arora2009computational}, demonstrate that IM-RMs learn to accurately verify such cycles without being able to generate them (\cref{sec:gen_verification:experiments}).

\subsection{Theory: Learning to Verify Does Not Require Learning to Generate}
\label{sec:gen_verification:theory}

Consider a task defined by a set of valid prompts $\X \subseteq \vocab^*$ and a function $\correct$ that maps every prompt $\xbf \in \X$ to a set of correct responses $\correct (\xbf) \subseteq \vocab^*$.
Concretely, $\X$ can consist of math problems, with $\correct (\xbf)$ containing the correct solutions of a problem~$\xbf$.
A prompt $\xbf$ can also describe the input to some algorithmic task.
For example, if the task is to find Hamiltonian cycles in a graph, each $\xbf$ describes a graph and each response in $\correct (\xbf)$ encodes a Hamiltonian cycle (\cref{sec:gen_verification:experiments} presents experiments over this task).

In this context, it is natural to say that a reward model is a \emph{verifier} for the task $(\X, \correct)$ if it assigns non-negligibly higher rewards to correct responses relative to incorrect ones.

\begin{definition}
\label{def:verifier}
    A reward model $r : \vocab^* \times \vocab^* \to \R$ is a \emph{verifier with margin $\delta \in \R_{> 0}$} for the task $(\X, \correct)$ if for all $\xbf \in \X$, $\yw \in \correct (\xbf)$, and $\yl \in \vocab^* \setminus \correct (\xbf)$:
    \[
        r (\xbf, \yw) \geq r(\xbf, \yl) + \delta
        \text{\,.}
    \]
    Note that if $r$ is a verifier for $(\X, \correct)$, then it achieves perfect accuracy (\cref{def:accuracy}) over all evaluation sets that contain preferences $(\xbf, \yw, \yl)$, where $\xbf \in \X$, $\yw \in \correct (\xbf)$, and $\yl \in \vocab^* \setminus \correct (\xbf)$.
\end{definition}

\cref{thm:im_rm_verifier_does_not_imply_generator} below establishes that, for an IM-RM to be a verifier, the probability that its underlying language model assigns to correct responses needs to grow by at most a constant multiplicative factor relative to the initial reference distribution $\piref$.
In particular, if $\piref$ assigns low probability to correct responses, then an IM-RM can accurately verify correct responses even if it is unable to generate them.
Thus, the hypothesis that IM-RMs struggle because they need to learn to generate, as opposed to just verify, does not explain the generalization gap between EX-RMs and IM-RMs.

We further formalize this argument through the notion of an \emph{efficient generator} (\cref{def:efficient_generator}).~A distribution $\pi$ is an efficient generator if the probability that it assigns to correct responses decays at most polynomially with the prompt length $\abs{\xbf}$.
The rationale behind this definition is that obtaining a correct response from an efficient generator requires, with high probability, only a number of samples polynomial in $\abs{\xbf}$, which often corresponds to task complexity (\eg, the size of a graph in the task of finding Hamiltonian cycles).
\cref{cor:im_rm_verifier} shows that, if $\piref$ is not an efficient generator, then an IM-RM does not need to be an efficient generator in order to be a verifier.

\begin{theorem}
\label{thm:im_rm_verifier_does_not_imply_generator}
    Let $\imrmnoparams$ be the IM-RM induced by a distribution $\pi$ over token sequences, \ie, $\imrmnoparams (\xbf, \ybf) = \beta \brk{ \ln \pi (\ybf | \xbf) - \ln \piref (\ybf | \xbf) }$ for $\xbf, \ybf \in \vocab^*$, $\beta \in \R_{> 0}$, and a reference distribution $\piref$.
	Then, $\imrmnoparams$ can be a verifier with margin $\delta \in \R_{> 0}$ for the task $(\X, \correct)$ (\cref{def:verifier}) even if for all prompts $\xbf \in \X$:
    \[
        \pi (\correct (\xbf) | \xbf) \leq \piref (\correct (\xbf) | \xbf) \cdot \exp \brk*{ \delta / \beta }
        \text{\,.}
    \]
    That is, for all prompts, the probability of $\pi$ generating a correct response is greater than that of $\piref$ by at most a constant multiplicative factor.
\end{theorem}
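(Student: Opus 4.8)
The plan is to exhibit a single distribution $\pi$ that simultaneously (i) induces an IM-RM which is a verifier with margin $\delta$ for $(\X, \correct)$ and (ii) satisfies the stated bound on the probability of generating a correct response. The guiding observation is that the IM-RM reward is a \emph{log-density ratio}, so tilting $\piref(\cdot | \xbf)$ by a single constant factor $c$ on the entire set $\correct(\xbf)$ and renormalizing produces a reward gap of exactly $\beta \ln c$ between every correct and every incorrect response~---~independently of how small $\piref(\correct(\xbf) | \xbf)$ is~---~whereas renormalization can only \emph{decrease} the total mass on $\correct(\xbf)$. Concretely, I would set $c := \exp(\delta / \beta)$, which exceeds $1$ since $\delta, \beta > 0$, and define $\pi(\cdot | \xbf) := \piref(\cdot | \xbf)$ for $\xbf \notin \X$ and, for $\xbf \in \X$,
\[
	\pi(\ybf | \xbf) := \frac{ \piref(\ybf | \xbf) \, c^{ \indc{ \ybf \in \correct(\xbf) } } }{ Z_\xbf } , \qquad Z_\xbf := \sum\nolimits_{\ybf' \in \vocab^*} \piref(\ybf' | \xbf) \, c^{ \indc{ \ybf' \in \correct(\xbf) } } = 1 + (c - 1) \, \piref(\correct(\xbf) | \xbf) .
\]
Because the tilting factor lies in $[1, c]$, the series defining $Z_\xbf$ converges, $\pi(\cdot | \xbf)$ is a genuine probability distribution over $\vocab^*$, and (as $\piref$ is a softmax language model with full support) $\pi$ has full support, so $\imrmnoparams$ is finite everywhere; the closed form for $Z_\xbf$ follows by splitting the sum over $\correct(\xbf)$ and its complement.

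Next I would verify the verifier property (\cref{def:verifier}). Fix $\xbf \in \X$, $\yw \in \correct(\xbf)$, and $\yl \in \vocab^* \setminus \correct(\xbf)$. By construction $\pi(\yw | \xbf) / \piref(\yw | \xbf) = c / Z_\xbf$ and $\pi(\yl | \xbf) / \piref(\yl | \xbf) = 1 / Z_\xbf$, so $Z_\xbf$ cancels in the reward difference:
\[
	\imrmnoparams(\xbf, \yw) - \imrmnoparams(\xbf, \yl) = \beta \Bigl( \ln \tfrac{ \pi(\yw | \xbf) }{ \piref(\yw | \xbf) } - \ln \tfrac{ \pi(\yl | \xbf) }{ \piref(\yl | \xbf) } \Bigr) = \beta \ln c = \delta .
\]
Hence $\imrmnoparams(\xbf, \yw) \ge \imrmnoparams(\xbf, \yl) + \delta$ for all such triples, i.e., $\imrmnoparams$ is a verifier with margin $\delta$ for $(\X, \correct)$.

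Finally I would bound the generation probability: for every $\xbf \in \X$, summing $\pi(\ybf | \xbf)$ over $\ybf \in \correct(\xbf)$ gives $\pi(\correct(\xbf) | \xbf) = c \, \piref(\correct(\xbf) | \xbf) / Z_\xbf$, and since $Z_\xbf \ge 1$ (because $c > 1$ and $\piref(\correct(\xbf) | \xbf) \ge 0$) this is at most $c \, \piref(\correct(\xbf) | \xbf) = \piref(\correct(\xbf) | \xbf) \cdot \exp(\delta / \beta)$, which is exactly the asserted inequality. The calculations above are routine; the only real step is identifying the tilting construction, and the key realization behind it is that the reward margin $\beta \ln c$ is completely decoupled from $\piref(\correct(\xbf) | \xbf)$~---~which is precisely what permits perfect verification alongside a vanishing probability of generating correct responses (and also makes \cref{cor:im_rm_verifier} immediate upon instantiating $\piref$ with exponentially decaying correct-response mass).
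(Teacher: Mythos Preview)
Your proof is correct and takes essentially the same approach as the paper: both construct $\pi$ by tilting $\piref(\cdot|\xbf)$ by the constant factor $\exp(\delta/\beta)$ on $\correct(\xbf)$, renormalize, observe that the normalizer cancels in the reward difference to yield margin exactly $\delta$, and then use $Z_\xbf \ge 1$ to bound $\pi(\correct(\xbf)|\xbf)$. Your write-up adds a couple of nice touches absent from the paper's version~---~the closed form $Z_\xbf = 1 + (c-1)\piref(\correct(\xbf)|\xbf)$ and the explicit remark that the margin is decoupled from $\piref(\correct(\xbf)|\xbf)$~---~but the construction and the logic are the same.
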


\begin{proof}[Proof sketch (full proof in \cref{app:proofs:im_rm_verifier_does_not_imply_generator})]
The proof is by construction.
We define a distribution $\pi$ such that the IM-RM it induces is a verifier for the task $(\X, \correct)$ with an exact margin of $\delta$.
Then, we directly upper bound the probability that $\pi$ assigns to correct responses.
\end{proof}

\begin{definition}
	\label{def:efficient_generator}
	We say that a distribution $\pi$ over token sequences is an \emph{efficient generator} for the task $(\X, \correct)$ if there exist $k \in \N$ and $\alpha \in \R_{> 0}$ such that $\pi (\correct (\xbf) | \xbf) \geq \alpha^{-1} \abs{x}^{-k}$ for all $\xbf \in \X$.
\end{definition}

\begin{corollary}
\label{cor:im_rm_verifier}
Under the notation of \cref{thm:im_rm_verifier_does_not_imply_generator}, suppose that $\piref$ is not an efficient generator for the task $(\X, \correct)$ (\cref{def:efficient_generator}).
Then, for any $\delta \in \R_{> 0}$, the IM-RM $\imrmnoparams$ can be a verifier with margin $\delta$ for $(\X, \correct)$ (\cref{def:verifier}) even if the underlying distribution $\pi$ is not an efficient generator for $(\X, \correct)$.
\end{corollary}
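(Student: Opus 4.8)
The plan is to derive this directly from \cref{thm:im_rm_verifier_does_not_imply_generator} via a short contrapositive argument. First I would apply that theorem with the given $\delta \in \R_{> 0}$ to obtain a distribution $\pi$ over token sequences such that (i) the induced IM-RM $\imrmnoparams$ is a verifier with margin $\delta$ for $(\X, \correct)$ in the sense of \cref{def:verifier}, and (ii) $\pi (\correct (\xbf) | \xbf) \leq \piref (\correct (\xbf) | \xbf) \cdot \exp(\delta / \beta)$ for every prompt $\xbf \in \X$. Given this $\pi$, it then remains only to verify that it is not an efficient generator for $(\X, \correct)$ (\cref{def:efficient_generator}).

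For that last step I would argue by contradiction. Suppose $\pi$ were an efficient generator, so that there exist $k \in \N$ and $\alpha \in \R_{> 0}$ with $\pi (\correct (\xbf) | \xbf) \geq \alpha^{-1} \abs{x}^{-k}$ for all $\xbf \in \X$. Writing $c := \exp(\delta / \beta)$, which is a positive constant independent of $\xbf$, property (ii) gives $\piref (\correct (\xbf) | \xbf) \geq c^{-1} \pi (\correct (\xbf) | \xbf) \geq (c \alpha)^{-1} \abs{x}^{-k}$ for all $\xbf \in \X$. Taking $\alpha' := c \alpha \in \R_{> 0}$ and keeping the same exponent $k$, this is precisely the statement that $\piref$ is an efficient generator for $(\X, \correct)$, contradicting the hypothesis of the corollary. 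Hence $\pi$ is not an efficient generator, as required.

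I do not anticipate any real obstacle: the only point requiring care is that the blow-up factor $\exp(\delta / \beta)$ produced by \cref{thm:im_rm_verifier_does_not_imply_generator} is a single prompt-independent constant, so it can be absorbed uniformly into the constant $\alpha$ of \cref{def:efficient_generator} without affecting the polynomial degree $k$. In other words, the corollary is essentially \cref{thm:im_rm_verifier_does_not_imply_generator} combined with the elementary observation that the property of \emph{not} being an efficient generator is preserved under multiplying the success probability $\pi(\correct(\xbf) | \xbf)$ by a fixed positive constant.
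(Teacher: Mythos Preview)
Your proposal is correct and essentially identical to the paper's proof: both invoke \cref{thm:im_rm_verifier_does_not_imply_generator} to obtain $\pi$ with the uniform bound $\pi(\correct(\xbf)\mid\xbf)\le\piref(\correct(\xbf)\mid\xbf)\cdot\exp(\delta/\beta)$, then argue by contradiction that if $\pi$ were an efficient generator one could absorb the constant $\exp(\delta/\beta)$ into $\alpha$ and conclude $\piref$ is efficient as well. The paper's write-up uses the same constants and the same contrapositive step.
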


Proving \cref{cor:im_rm_verifier} based on \cref{thm:im_rm_verifier_does_not_imply_generator} is straightforward~---~see \cref{app:proofs:cor_im_rm_verifier}.

\subsection{Experiments: Hamiltonian Cycle Verification}
\label{sec:gen_verification:experiments}

We corroborate the analysis of \cref{sec:gen_verification:theory} by empirically demonstrating that, in tasks where generation is harder than verification, IM-RMs can learn to verify comparably or better than EX-RMs, without learning to generate.
To avoid confounding factors, we focus on a synthetic Hamiltonian cycle verification task which, unless $\textsf{P} \! = \! \textsf{NP}$, exhibits a generation-verification gap.

\begin{figure*}[t]
    \vspace{-6mm}
   \begin{center}
        \begin{tabular}{@{\hskip 0pt}c@{\hskip 2.5mm}c@{\hskip 0pt}}
            \vtop{\null\hbox{\includegraphics[width=0.685\textwidth]{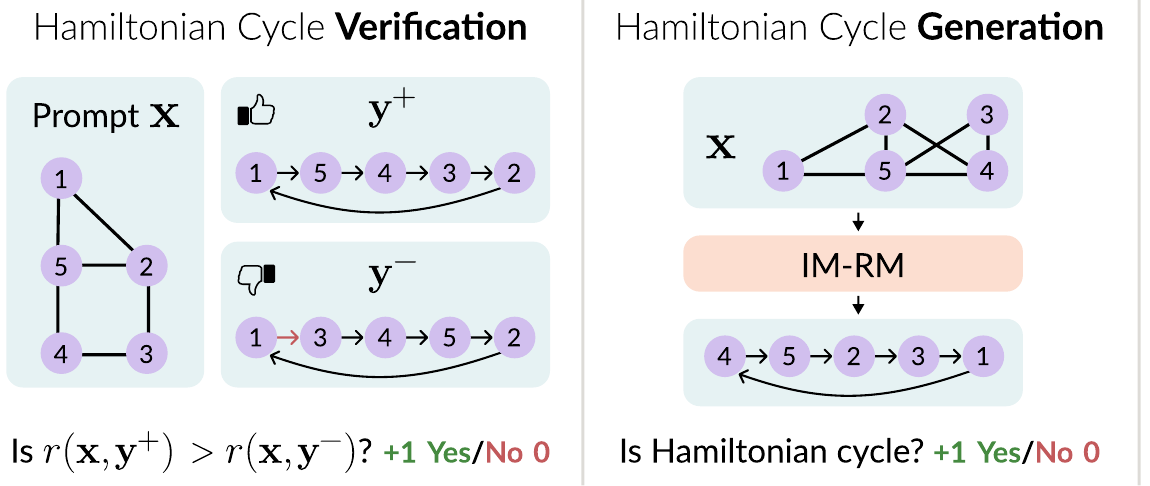}}} &
            \vtop{\null\vspace{5.75mm}\hbox{
				\fontsize{8}{9}\selectfont
                \begin{tabular}{@{\hskip 4pt}l@{\hskip 6pt}c@{\hskip 8pt}c@{\hskip 4pt}}    
                    \toprule
                    & \colorexrm{EX-RM} & \colorimrm{IM-RM} \\
                    \midrule
                    \makecell[l]{Train\\[-0.1em]Accuracy} & $1$ & $1$\\
                    \midrule
                    \makecell[l]{Test\\[-0.1em]Accuracy} & $0.980$ & $0.993$ \\[0.05em]
                    \midrule
                    \makecell[l]{Correct\\[-0.1em]Generations} & - & $0$ \\
                    \bottomrule
                \end{tabular}
            }}
        \end{tabular}
    \end{center}
	\vspace{-2mm}
	\caption{
		\textbf{Learning to verify with IM-RMs does not require learning to generate.}
        We trained EX-RMs and IM-RMs to solve a Hamiltonian cycle verification task, based on the Pythia-1B language model.
        Each prompt in the dataset describes an undirected graph and the chosen and rejected responses are permutations of vertices.
        The chosen responses form Hamiltonian cycles in their respective graphs, while the rejected responses do not (see \cref{app:experiments_details:ham_cycle} for further details).
        In accordance with our theory (\cref{sec:gen_verification:theory}), although IM-RMs are unable to generate even a single correct Hamiltonian cycle for graphs in the training or test sets, they accurately distinguish between chosen and rejected responses, slightly outperforming EX-RMs.
        Values in the table are means across three random seeds (standard deviation was under $0.008$ in all cases).
	}
	\label{fig:ham_cycle}
\end{figure*}

\textbf{Setting.}
We created a preference dataset in which every prompt describes an undirected graph that contains at least one Hamiltonian cycle and the chosen and rejected responses are permutations of vertices.
Chosen responses form Hamiltonian cycles in their respective graphs, whereas the rejected responses do not.
We then trained EX-RMs and IM-RMs based on the Pythia-1B language model~\citep{biderman2023pythia} and evaluated their accuracy.
We also evaluated the ability of IM-RMs to generate Hamiltonian cycles.
See \cref{app:experiments_details:ham_cycle} for additional details.

\textbf{Results.}
\cref{fig:ham_cycle} illustrates the experimental setup and reports the results.
Confirming our theory (\cref{sec:gen_verification:theory}), IM-RMs are able to accurately verify responses, \ie, achieve perfect accuracy on the training set and near-perfect on the test set, while being unable to generate even a single correct Hamiltonian cycle for graphs in the training or test sets.
This showcases that the lower accuracy that IM-RMs often achieve compared to EX-RMs \citep{lin2024limited,lambert2025rewardbench,swamy2025all} does not stem from IM-RMs needing to learn to generate in order to verify.

	\section{Theory: IM-RMs Rely on Token-Level Cues While EX-RMs Generalize via Hidden Representations}
\label{sec:analysis}

To identify what causes the generalization gap between EX-RMs and IM-RMs, we analyze their learning dynamics.
Specifically, we characterize how the reward assigned to a prompt-response pair evolves during gradient-based training (\cref{sec:analysis:reward_dynamics}).
The characterization suggests that IM-RMs often generalize worse than EX-RMs since they rely more heavily on superficial token-level cues.
We further support this claim: \emph{(i)} theoretically, by providing a (simplified) setting in which IM-RMs provably fail to generalize to unseen tokens, whereas EX-RMs can generalize when hidden representations are well-structured (\cref{sec:analysis:generalization_gap}), and \emph{(ii)} empirically, by showing that IM-RMs are less robust to token-level shifts, but perform comparably or better under domain shifts (\cref{sec:empirical}).

\subsection{Learning Dynamics}
\label{sec:analysis:reward_dynamics}

We examine how performing a gradient update on the training example $(\xbf, \yw, \yl) \in \trainset$ influences the reward assigned to an unseen prompt-response pair $(\bar{\xbf}, \bar{\ybf})$, \ie:
\[
\Delta r_\theta (\bar{\xbf}, \bar{\ybf}) := r_{ \theta - \eta \nabla \ell_\theta (\xbf, \yw, \yl) } (\bar{\xbf}, \bar{\ybf}) - r_\theta (\bar{\xbf}, \bar{\ybf})
\text{\,,}
\]
where $\eta \in \R_{> 0}$ is a learning rate, $\ell_\theta (\xbf, \yw, \yl) := - \ln \sigma \brk{ r_\theta (\xbf, \yw) - r_\theta (\xbf, \yl) }$ denotes the loss over $(\xbf, \yw, \yl)$, and $\theta$ stands for either $\exrmparams$ or $\imrmparams$.
We note that analogous approaches have been valuable for studying the effects of language model post-training~\citep{im2025can,razin2025unintentional,ren2025learning}.
By a Taylor approximation of $r_\theta (\bar{\xbf}, \bar{\ybf})$ around $\theta$, the change in reward can be expressed as:
\[
\Delta r_\theta (\bar{\xbf}, \bar{\ybf}) = - \eta \inprod{ \nabla r_\theta (\bar{\xbf}, \bar{\ybf}) }{ \nabla \ell_\theta (\xbf, \yw, \yl) } + \OO (\eta^2)
\text{\,.}
\]
Thus, up to second order terms in the learning rate $\eta$, which is commonly small for reward model training~\citep{liu2024skywork,malik2025rewardbench2}, the change in reward is determined by the inner product of the reward and loss gradients.
Below, we characterize this inner product for EX-RMs and IM-RMs.
Motivated by the fact that reward models have achieved competitive performance when fixing the backbone that produces hidden representations~\citep{wang2024interpretable}, we assume that hidden representations are not updated during training.\footnote{
		An analogous learning dynamics analysis can be conducted without assuming fixed hidden representations (\cref{assumption:fixed_hidden_representations}).
		In that case, the resulting dynamics remain the same, up to additive terms introduced by the update to hidden representations. 
		These terms are less interpretable because they depend on the specific neural network architecture used for producing hidden representations.
		Nonetheless, the close match between our theoretical predictions and the experiments in \cref{sec:empirical}, where hidden representations are not fixed, indicates that the effect of these additional terms does not counteract the mechanisms we identify under \cref{assumption:fixed_hidden_representations}.
}
Nonetheless, as \cref{sec:empirical} verifies empirically, the implications of our analysis apply also when all reward model parameters are learned (we do not fix the hidden representations in our experiments).

\begin{assumption}
\label{assumption:fixed_hidden_representations}
Hidden representations are fixed during training: only the linear head $\rmhead$ for EX-RMs and unembedding matrix $\unembed$ for IM-RMs are updated (\ie, $\exrmparams = \rmhead$ and $\imrmparams = \unembed$).
\end{assumption}

\textbf{EX-RM dynamics.}
For EX-RMs, the change in reward is given by (derivation in \cref{app:proofs:ex_rm_dynamics}):
\be
\Delta \exrm (\bar{\xbf}, \bar{\ybf}) = \colorexrm{ \inprodbig{ \rep_{\bar{\xbf}, \bar{\ybf}} }{ \rep_{\xbf, \yw} - \rep_{\xbf, \yl} } } \cdot \gray{ \eta g(\exrmparams) }
\text{\,,}
\label{eq:exrm_dynamics}
\ee
where $g (\exrmparams) := \sigma \brk{ \exrm (\xbf, \yl) - \exrm (\xbf, \yw)} > 0$.
As \cref{eq:exrm_dynamics} shows, $\exrm (\bar{\xbf}, \bar{\ybf})$ increases when $\rep_{\bar{\xbf}, \bar{\ybf}}$ is more closely aligned with $\rep_{\xbf, \yw}$ than with $\rep_{\xbf, \yl}$.
In particular, the change in reward depends on responses only through their hidden representations.
Consequently, the extent to which an EX-RM generalizes to unseen prompt-response pairs is largely determined by the structure of the hidden representations, which are produced by a pretrained (and sometimes also post-trained) language model.
Since these representations are known to encode semantics~\citep{zou2023representation,park2024linear}, this suggests that EX-RMs can generalize to unseen responses even if they consist of entirely different tokens from responses in the training set.

\textbf{IM-RM dynamics.}
For IM-RMs, the change in reward is more complex and is given by (derivation in \cref{app:proofs:im_rm_dynamics}; adapted from Theorem~7 of \citet{razin2025unintentional}):
\be
\begin{split}
	\Delta \imrm (\bar{\xbf}, \bar{\ybf}) = & \brk4{ \colorimrm{ \sum_{k = 1}^{\abs{\bar{\ybf}}} \sum_{l = 1}^{\abs{\yw}} \rho_{k, l} (\yw) \cdot \inprodbig{ \rep_{\bar{\xbf} , \bar{\ybf}_{< k} } }{ \rep_{ \xbf , \yw_{< l} }  } } \colorimrm{ - \sum_{k = 1}^{\abs{\bar{\ybf}}} \sum_{l = 1}^{\abs{\yl}} \rho_{k, l} (\yl) \cdot \inprodbig{ \rep_{ \bar{\xbf} , \bar{\ybf}_{< k} } }{ \rep_{ \xbf , \yl_{< l} } } } \! } \\ 
	& \cdot \gray{ \eta g(\imrmparams) \beta^2 } + \gray{ \OO (\eta^2) }
\text{\,,}
\end{split}
\label{eq:imrm_dynamics}
\ee
where $g (\imrmparams) := \sigma \brk{ \imrm (\xbf, \yl) - \imrm (\xbf, \yw )} > 0$ and the coefficient $\colorimrm{ \rho_{k, l} (\vbf) } \in [-2, 2]$, for $\vbf \in \{ \yw, \yl\}$, is determined by the tokens $\bar{\ybf}_k$, $\vbf_l$, and corresponding next-token distributions:
\[
\begin{split}
	\colorimrm{ \rho_{k, l} (\vbf ) } :=  \indc{ \bar{\ybf}_k = \vbf_l } - \pi_{\imrmparams} (\bar{\ybf}_k | \xbf, \vbf_{< l}) - \pi_{\imrmparams} (\vbf_l | \bar{\xbf}, \bar{\ybf}_{< k}) + \inprod{ \pi_{\imrmparams} (\cdot | \bar{\xbf}, \bar{\ybf}_{< k}) }{ \pi_{\imrmparams} (\cdot | \xbf, \vbf_{< l}) }
	\text{\,.}
\end{split}
\]
In contrast to EX-RMs, the change in reward for IM-RMs depends on the specific tokens that appear in $\bar{\ybf}$, $\yw$, and $\yl$, as opposed to just their hidden representations.
This dependence is introduced by the coefficients $\rho_{k, l} (\yw)$ and $\rho_{k, l} (\yl)$, which can be positive or negative.
Focusing on $\rho_{k, l} (\yw)$ (the analysis for $\rho_{k, l} (\yl)$ is analogous), we distinguish between two cases.
If $\bar{\ybf}_k = \yw_l$, then $\rho_{k, l} (\yw)$ is positive since it can be written as $\rho_{k, l} (\yw) = (1 - \pi_{\imrmparams} (\bar{\ybf}_k | \xbf, \yw_{< l}))(1 - \pi_{\imrmparams} (\yw_l | \bar{\xbf}, \bar{\ybf}_{< k})) + \sum\nolimits_{v \in \vocab \setminus \{ \bar{\ybf}_k \} } \pi_{\imrmparams} (v | \bar{\xbf}, \bar{\ybf}_{< k}) \pi_{\imrmparams} (v | \xbf, \yw_{< l})$.
In this case, the term $\smash{ \rho_{k, l} (\yw) \inprodnoflex{ \rep_{\bar{\xbf}, \bar{\ybf}_{< k}} }{ \rep_{\xbf, \yw_{< l}} } }$ has an effect analogous to $\smash{\inprodnoflex{ \rep_{\bar{\xbf}, \bar{\ybf}} }{ \rep_{ \xbf, \yw } }}$ from the dynamics of EX-RMs (\cref{eq:exrm_dynamics}): it increases the reward of $(\bar{\xbf}, \bar{\ybf})$ if the hidden representation of $(\bar{\xbf}, \bar{\ybf})$ is aligned with that of $\smash{(\xbf, \yw)}$.
However, if $\smash{\bar{\ybf}_k \neq \yw_l}$, then the coefficient $\rho_{k, l} (\yw)$ can be negative.
In this case the effect is opposite: the corresponding term decreases the reward of $(\bar{\xbf}, \bar{\ybf})$ if its hidden representation is aligned with that of $(\xbf, \yw)$.
Notably, when $\smash{\bar{\ybf}_k \neq \yw_l}$ the coefficient $\smash{\rho_{k, l} (\yw)}$ consists of three terms: $\inprod{ \pi_{\imrmparams} (\cdot | \bar{\xbf}, \bar{\ybf}_{< k}) }{ \pi_{\imrmparams} (\cdot | \xbf, \yw_{< l}) }$, $- \pi_{\imrmparams} (\bar{\ybf}_k | \xbf, \yw_{< l})$, and $- \pi_{\imrmparams} (\yw_l | \bar{\xbf}, \bar{\ybf}_{< k})$.
The first term is positive and measures the agreement between the next-token distributions corresponding to the contexts of $\bar{\ybf}_k$ and $\yw_l$. 
The latter two terms contribute negatively, and their magnitude is large when $\bar{\ybf}_k$ is probable under the context of $\smash{\yw_l}$ and vice versa. 
As a result, $\smash{\rho_{k, l} (\yw)}$ is likely to be negative when $\bar{\ybf}_k$ and $\smash{\yw_l}$ are tokens that appear in similar contexts.

Since hidden representations often encode semantics, the above implies that the learning dynamics of an IM-RM may inadvertently decrease the reward of responses that are semantically similar to chosen responses in the training set, and increase the reward of those similar to rejected responses, if their tokens have little overlap.
This suggests that the generalization gap between EX-RMs and IM-RMs may stem from the latter being less robust to superficial token-level shifts.
We support this prospect theoretically in \cref{sec:analysis:generalization_gap} and empirically in \cref{sec:empirical}.

\subsection{Generalization Gap Between EX-RMs and IM-RMs}
\label{sec:analysis:generalization_gap}

The goal of this section is to provide a concrete setting in which IM-RMs provably generalize worse than EX-RMs, due to the stronger reliance on token-level cues (identified in \cref{sec:analysis:reward_dynamics}).
Alongside assuming that the hidden representations are fixed (\cref{assumption:fixed_hidden_representations}), we consider the case where responses in the training set are of length one.
Furthermore, to ensure a fair comparison between EX-RMs and IM-RMs, we require that both are able to perfectly fit the training set.

\begin{assumption}
\label{assumption:single_token_responses}
Responses in the training set $\trainset$ are of length one.
\end{assumption}

\begin{assumption}
\label{assumption:realizable}
There exist $\exrmparams$ and $\imrmparams$ such that the corresponding EX-RM and IM-RM achieve perfect accuracy over the training set $\trainset$, \ie, $\acc_{\trainset} \brk1{ \exrm } = \acc_{\trainset} \brk{ \imrm } = 1$.
\end{assumption}

Under these conditions, \cref{thm:ex_rm_im_rm_generalization_gap} establishes that an IM-RM trained via gradient descent does not generalize to unseen tokens~---~it achieves trivial accuracy over any evaluation set containing responses that did not appear in the training set $\trainset$.
This inability to generalize occurs regardless of the structure of hidden representations or the initial unembedding matrix.
By contrast, an EX-RM generalizes successfully to unseen tokens if the hidden representations are well-structured.
Namely, let $\rmhead^* \in \R^D$ be the following max-margin separator over hidden representations in $\trainset$:
\be
	\rmhead^* = \argmin\nolimits_{\rmhead \in \R^D} \norm*{ \rmhead }^2 \text{ s.t. } \forall \, (\xbf, \yw, \yl) \in \trainset : \, \inprodbig{ \rmhead }{ \rep_{\xbf, \yw} - \rep_{\xbf, \yl} } \geq 1
	\text{\,.}
\label{eq:max_margin_separator}
\ee
The EX-RM will rank correctly any pair of responses that $\rmhead^*$ ranks correctly.

\begin{theorem}
\label{thm:ex_rm_im_rm_generalization_gap}
Suppose we train an EX-RM and an IM-RM via gradient descent over the training set $\trainset$ with learning rate $\eta < 2B^{-2} \min \brk[c]{\beta^{-2}, 1}$, where $B$ is the maximal hidden representation norm in $\trainset$, \ie, $B := \max_{(\xbf, \yw, \yl) \in \trainset, \vbf \in \brk[c]{ \xbf, (\xbf, \yw), (\xbf, \yl) }}\norm{\rep_\vbf}$.
Denote by $\theta (t + 1) := \theta (t) - \eta \nabla \L (r_{\theta (t)})$ the gradient descent iterates, for $t = 0, 1, \ldots$, where $\theta$ stands for either $\exrmparams$ or $\imrmparams$, the IM-RM reference distribution is $\piref = \pi_{\imrmparams (0)}$, and the loss $\loss$ is defined in \cref{eq:rm_loss}.
Then, under Assumptions~\ref{assumption:fixed_hidden_representations},~\ref{assumption:single_token_responses}, and~\ref{assumption:realizable}, for all initializations $\exrmparams (0), \imrmparams (0)$ and finite  evaluation sets $\unseenset$ that contain preferences $(\xbf, \yw, \yl)$, in which $\xbf \in \vocab^*$ and $\yw, \yl \in \vocab$ are responses that do not appear in $\trainset$, the following hold.
\begin{itemize}[leftmargin=8mm]
	\item \textbf{Both the EX-RM and IM-RM perfectly fit the training set:}
	That is, $\lim_{t \to \infty} \loss (r_{\exrmparams (t)}) = \lim_{t \to \infty} \loss ( r_{ \imrmparams (t) } ) = 0$ and $\lim_{t \to \infty} \acc_{\trainset} ( r_{\exrmparams (t)} ) = \lim_{t \to \infty} \acc_{\trainset} (r_{\imrmparams (t)}) = 1$.
	
	\item \textbf{The IM-RM fails to generalize to unseen tokens:}
	$\acc_{\unseenset} (r_{\imrmparams (t)}) = 0.5$ for all $t \geq 0$.

	\item \textbf{The EX-RM can generalize via hidden representations:}
	Let $\rmhead^*$ be the max-margin separator defined in \cref{eq:max_margin_separator}.
	Then, there exists a time $t_0 \geq 0$ such that for all $t \geq t_0$:
	\[
		\acc_\unseenset ( r_{\exrmparams (t)} ) \geq \frac{ \abs*{ \brk[c]*{ (\xbf, \yw, \yl) \in \unseenset : \inprodbig{ \rmhead^* }{ \rep_{\xbf, \yw} } > \inprodbig{ \rmhead^* }{ \rep_{\xbf, \yl} } } } }{ \abs{\unseenset} }
		\text{\,.}
	\]
\end{itemize}
\end{theorem}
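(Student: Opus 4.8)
The plan is to reduce, under Assumptions~\ref{assumption:fixed_hidden_representations}--\ref{assumption:realizable}, the gradient descent dynamics of \emph{both} reward model types to gradient descent on a convex, smooth, linearly separable logistic objective, and then apply standard results on the implicit bias of gradient descent on separable data (e.g., Soudry et al., 2018; Ji and Telgarsky, 2019). For the EX-RM this is immediate: since the hidden representations are fixed, the loss, regarded as a function of $\rmhead$, equals $\tfrac{1}{\abs{\trainset}} \sum_{(\xbf, \yw, \yl) \in \trainset} - \ln \sigma \brk*{ \inprod{\rmhead}{ \rep_{\xbf, \yw} - \rep_{\xbf, \yl} } }$, a logistic loss over the features $\rep_{\xbf, \yw} - \rep_{\xbf, \yl}$, which are linearly separable by \cref{assumption:realizable}. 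For the IM-RM the key manipulation combines \cref{assumption:single_token_responses} with the choice $\piref = \pi_{\imrmparams(0)}$: writing $\unembed_v$ for the $v$th row of the unembedding matrix and reparametrizing via $\vbf_v := \unembed_v - \unembed_v(0)$, the log-softmax normalizers and the reference log-probabilities cancel, yielding the \emph{exactly linear} (indeed homogeneous) identity $r_{\imrmparams}(\xbf, \yw) - r_{\imrmparams}(\xbf, \yl) = \beta \inprod{ \vbf_{\yw} - \vbf_{\yl} }{ \rep_\xbf }$. Hence the IM-RM loss is again a logistic loss — in the stacked variable $(\vbf_v)_v$, with the feature of $(\xbf, \yw, \yl)$ being $+\beta \rep_\xbf$ in block $\yw$ and $-\beta \rep_\xbf$ in block $\yl$ — again separable by \cref{assumption:realizable}; moreover the $\unembed$-dynamics restricted to the finitely many rows of tokens that appear as responses in $\trainset$ is exactly gradient descent on this loss, started at $\vbf \equiv \0$ regardless of $\imrmparams(0)$. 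The step-size bound $\eta < 2 B^{-2} \min \brk[c]{ \beta^{-2}, 1 }$ is precisely what makes both objectives smooth enough for these results (smoothness at most $B^2$ for the EX-RM loss and at most $\tfrac12 \beta^2 B^2$ for the IM-RM loss, since $\sigma' \le \tfrac14$ and the feature norms are bounded by $2B$ and $\sqrt{2}\,B$); the $\beta^{-2}$ factor and the cap at $1$ jointly accommodate both cases.

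Given this reduction, the \textbf{first bullet} is the statement that gradient descent on a smooth convex logistic loss over separable data drives the loss to its infimum $0$; since each summand $-\ln\sigma(\cdot)$ tends to $0$ only as its argument tends to $+\infty$, every training margin eventually becomes positive and hence $\acc_{\trainset} \to 1$ for both models. The \textbf{third bullet} uses the directional-convergence part of the same theory: $\rmhead(t) = \rmhead^* \ln t + \rho(t)$ with $\rmhead^*$ the max-margin separator of \cref{eq:max_margin_separator} and $\sup_t \norm{\rho(t)} < \infty$. Then for any $(\xbf, \yw, \yl) \in \unseenset$ with $\inprod{\rmhead^*}{\rep_{\xbf, \yw}} > \inprod{\rmhead^*}{\rep_{\xbf, \yl}}$, the margin $\inprod{\rmhead(t)}{ \rep_{\xbf, \yw} - \rep_{\xbf, \yl} } = \ln(t) \inprod{\rmhead^*}{ \rep_{\xbf, \yw} - \rep_{\xbf, \yl} } + \OO(1) \to +\infty$, so the EX-RM ranks that preference correctly for all large $t$; as $\unseenset$ is finite, a single $t_0$ works for all such preferences, which yields the claimed lower bound on $\acc_{\unseenset}(r_{\exrmparams(t)})$.

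For the \textbf{second bullet}, I would first observe that the per-example gradient of the IM-RM loss with respect to $\unembed_v$ is supported only on the rows $v \in \brk[c]{\yw, \yl}$ of that preference: in the difference $\nabla_{\unembed} \brk*{ \ln \pi_{\unembed}(\yw | \xbf) - \ln \pi_{\unembed}(\yl | \xbf) }$ the softmax terms cancel, leaving $\beta (\indc{v = \yw} - \indc{v = \yl}) \rep_\xbf$. Hence the row $\unembed_v$ of any token $v$ that never appears as a chosen or rejected response in $\trainset$ stays frozen at $\unembed_v(0)$ for all $t$. For an evaluation preference $(\xbf, \yw, \yl) \in \unseenset$ both $\unembed_{\yw}$ and $\unembed_{\yl}$ are frozen, so, using $\piref = \pi_{\imrmparams(0)}$,
\[
	r_{\imrmparams(t)}(\xbf, \yw) - r_{\imrmparams(t)}(\xbf, \yl) = \beta \inprod{ \unembed_{\yw}(0) - \unembed_{\yl}(0) }{ \rep_\xbf } - \beta \ln \frac{ \piref(\yw | \xbf) }{ \piref(\yl | \xbf) } = 0 ,
\]
so every evaluation preference is a tie and $\acc_{\unseenset}(r_{\imrmparams(t)}) = 0.5$ for all $t \ge 0$.

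The main obstacle I anticipate is not a single hard step but making the reduction airtight: verifying that the cancellation of the log-softmax normalizer and the reference term genuinely turns the IM-RM objective into an \emph{exactly} homogeneous-linear logistic loss (so the off-the-shelf implicit-bias theorems transfer without modification), confirming that the stated step size meets the precise smoothness (and directional-convergence) hypotheses of those theorems for \emph{both} objectives, and ensuring the ``frozen rows'' argument correctly distinguishes tokens that appear in prompts from those that appear as responses in $\trainset$.
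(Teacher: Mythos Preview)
Your proposal is correct and follows essentially the same approach as the paper: reduce both objectives to logistic regression over linearly separable data (via the cancellation of the softmax normalizer and the reference term in the IM-RM case), invoke convexity/smoothness for training-loss convergence, use the frozen-rows argument for the IM-RM's trivial evaluation accuracy, and apply the directional convergence of Soudry et al.\ for the EX-RM's max-margin guarantee. The only cosmetic difference is that you make the reparametrization $\vbf_v = \unembed_v - \unembed_v(0)$ explicit (yielding a homogeneous problem started at zero), whereas the paper carries the constant bias term through and absorbs it into the separability lemma; both routes are equivalent.
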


\begin{proof}[Proof sketch (full proof in \cref{app:proofs:ex_rm_im_rm_generalization_gap})]
With fixed hidden representations, the loss of an EX-RM and the loss of an IM-RM can be framed as logistic regression problems over different input spaces.
Fitting of the training set thus follows by standard convex optimization results.
We then specialize the learning dynamics of an IM-RM (\cref{eq:imrm_dynamics}) to the case of single-token responses and show that the difference between the rewards of two unseen tokens is constant through training.
This implies that $\acc_{\unseenset} (r_{\imrmparams (t)})$ remains at its initial trivial value of $0.5$.
Lastly, by applying the seminal result of \citet{soudry2018implicit}, we get that the linear head of the EX-RM converges in direction to~$\rmhead^*$.
This yields the guarantee on $\acc_{\unseenset} (r_{\exrmparams (t)})$.
\end{proof}

	\vspace{-2mm}
\section{Empirical Demonstration}
\label{sec:empirical}
\vspace{-1mm}

Our theory (\cref{sec:analysis}) indicates that IM-RMs are more prone than EX-RMs to overfitting superficial token-level cues.
In this section, we verify that this conclusion bears out in practice.
Namely, in both controlled (\cref{sec:empirical:controlled}) and real-world (\cref{sec:empirical:real_world}) settings, we show that IM-RMs generalize worse than EX-RMs under token-level distribution shifts (\eg, paraphrasing), and often in-distribution, yet perform comparably or better under domain shifts.
The experiments are based on language models of up to 8B scale from different families: Pythia \citep{biderman2023pythia}, Gemma-2 \citep{team2024gemma}, Qwen-2.5 \citep{qwen2024technicalreport}, and Llama-3 \citep{dubey2024llama}.
For brevity, we defer to \cref{app:experiments_further,app:experiments_details} some experiments and implementation details, respectively.

\vspace{-0.9mm}
\subsection{Controlled Experiments: Token-Level Shift}
\label{sec:empirical:controlled}
\vspace{-1mm}

\textbf{Setting.}
For our controlled experiments, we considered prompts from the Persona dataset~\citep{perez2022discovering}, which ask a language model whether it agrees or disagrees with a given statement.
We manually wrote four chosen responses that express agreement and four rejected responses that express disagreement.
We then trained EX-RMs and IM-RMs, using the same initial language models (Pythia-1B, Qwen-2.5-1.5B-Instruct, Llama-3.2-1B, and Llama-3.2-1B-Instruct), and evaluated their accuracy over the original responses and paraphrased versions of them, \ie, responses that are similar in meaning but consist of different tokens.
See \cref{app:experiments_details:controlled} for further details.

\begin{figure*}[t]
\vspace{-12mm}
   \begin{center}
        \begin{tabular}{@{\hskip 0pt}c@{\hskip 1.25mm}c@{\hskip 0pt}}
            \vtop{\null\hbox{\includegraphics[width=0.6\textwidth]{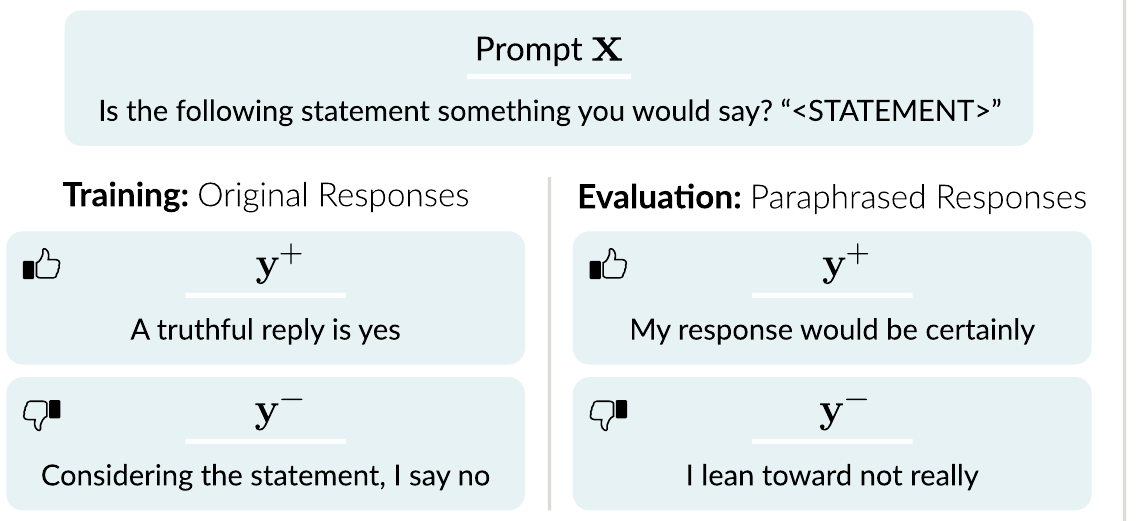}}} &
            \vtop{\null\vspace{5mm}\hbox{
				\fontsize{8}{9}\selectfont
				\begin{tabular}{@{\hskip 6pt}l@{\hskip 8pt}l@{\hskip 8pt}c@{\hskip 8pt}c@{\hskip 6pt}}
					\toprule
					& & \multicolumn{2}{c}{Accuracy} \\
					\cmidrule(r){3-4}\\[-0.98em]
					Responses & Prompts & \colorexrm{EX-RM} & \colorimrm{IM-RM} \\
					\midrule\\[-0.95em]
					\multirow{2}{*}{Original} & Train & $1$ & $1$ \\[0.15em]
												& Test  & $1$ & $1$ \\[0.05em]
					\midrule\\[-0.95em]
					\multirow{2}{*}{Paraphrased} & Train & $1$ & $0.022$ \\[0.15em]
												& Test  & $1$ & $0.019$ \\[0.05em]
					\bottomrule
				\end{tabular}
            }}
        \end{tabular}
    \end{center}
	\vspace{-2.25mm}
	\caption{
		\textbf{IM-RMs fail to generalize to a simple token-level distribution shift, while EX-RMs generalize perfectly.}
		We trained EX-RMs and IM-RMs on prompts from the Persona dataset~\citep{perez2022discovering}.
		Chosen responses expressed agreement with the prompts, whereas rejected responses expressed disagreement.
		During evaluation, we included paraphrased versions of the original responses (figure includes exemplar responses).
		In line with our analysis (\cref{sec:analysis}), IM-RMs are extremely inaccurate over paraphrased responses, whereas EX-RMs achieve perfect accuracy.
		The experiments were based on four language models: Pythia-1B, Qwen-2.5-1.5B-Instruct, Llama-3.2-1B, and Llama-3.2-1B-Instruct.
		Values in the table are means across the models and three random seeds (standard deviation was below $0.04$ in all cases).
	}
	\label{fig:controlled_experiments}
\end{figure*}

\textbf{Results: IM-RMs fail to generalize to paraphrased responses.}
\cref{fig:controlled_experiments} illustrates the setup and reports the results.
As our theory suggests (\cref{sec:analysis}), despite achieving perfect accuracy over the original responses, IM-RMs achieve near-zero accuracy over the paraphrased responses.
This reveals that, for IM-RMs, maximizing reward difference between chosen and rejected responses can inadvertently have an opposite effect on paraphrased responses.
In contrast, EX-RMs generalize perfectly to the paraphrased responses via the structure of hidden representations.\footnote{
\label{note:max_margin_controlled}
\cref{thm:ex_rm_im_rm_generalization_gap}, albeit in a simplified setting, showed that the accuracy of an EX-RM over an unseen example $(\xbf, \yw, \yl)$ depends on the inner product of $\rep_{\xbf, \yw} - \rep_{\xbf, \yl}$ and a certain max-margin separator, induced by hidden representations in the training set.
Indeed, we find that for examples with paraphrased responses, $100\%$ of such inner products were positive when using instruct models (and roughly $97\%$ for base models).
}

\subsection{Real-World Experiments: Token-Level and Domain Shifts}
\label{sec:empirical:real_world}

\subsubsection{Setting}
\label{sec:empirical:real_world:setting}

We compared the generalization of EX-RMs and IM-RMs in real-world settings by evaluating their accuracy in-distribution, under token-level shifts, and under domain shifts.
We ran experiments in two settings~---~\emph{general chat} and \emph{math}~---~using six language models ranging in scale from 1B to 8B: Gemma-2-2B-IT, Qwen-2.5-1.5B-Instruct, Qwen-2.5-3B-Instruct, Llama-3.2-1B-Instruct, Llama-3.2-3B-Instruct, and Llama-3.1-8B-Instruct.
As specified below, the two settings differ in which dataset was used for training and the categorization of evaluation datasets into in-distribution, token-level shift, and domain shift.
See \cref{app:experiments_details:real_world} for further details.

\textbf{General chat.}
We trained EX-RMs and IM-RMs over UltraFeedback~\citep{cui2024ultrafeedback}, based on each language model specified above.
In-distribution evaluation was performed over the UltraFeedback test set.
For evaluating robustness to token-level distribution shifts, we created three variants of the UltraFeedback test set by either paraphrasing, translating to French, or translating to Spanish all responses (via GPT-4.1).
For domain shifts, we used the math and code subsets of RewardBench~\citep{lambert2025rewardbench} and the RewardMATH dataset~\citep{kim2024evaluating}.

\textbf{Math.}
We used RewardMATH for training and evaluated in-distribution performance on a held-out test set.
In this setting, the math subset of RewardBench poses a token-level shift while the UltraFeedback variants and code subset of RewardBench pose a domain shift.

\subsubsection{Results}
\label{sec:empirical:real_world:results}

For the general chat and math settings, respectively, \cref{fig:ex_vs_im_rm_generalization_uf,fig:ex_vs_im_rm_generalization_math} present the percentage of evaluations in which either the EX-RM or the IM-RM achieved a higher accuracy, where we only compare pairs of reward models that were trained from the same initial language model.
Furthermore, \cref{table:ex_vs_im_rm_acc_and_margin} reports the accuracy and absolute reward margin of reward models for each evaluation category.
See Tables~\ref{table:ex_vs_im_rm_win_rate_uf_train_per_dataset}, \ref{table:ex_vs_im_rm_acc_and_margin_uf_train_per_dataset}, \ref{table:ex_vs_im_rm_win_rate_rewardmath_train_per_dataset}, and \ref{table:ex_vs_im_rm_acc_and_margin_rewardmath_train_per_dataset} and \cref{fig:bar_plots_uf,fig:bar_plots_math} in \cref{app:experiments_further} for a per evaluation dataset and language model breakdown of the results.

\textbf{IM-RMs are less robust than EX-RMs to token-level distribution shifts.}
Recall, our theoretical analysis (\cref{sec:analysis}) indicates that IM-RMs are more sensitive than EX-RMs to superficial token-level cues.
If this is indeed the case, then one would expect IM-RMs to underperform EX-RMs when subject to token-level distribution shifts.
On the other hand, EX-RMs should not enjoy a distinct advantage under domain shifts.
The empirical results match these expectations.\footnote{
Although EX-RMs are more robust than IM-RMs to token-level distribution shifts, they do suffer a drop in accuracy under the general chat setting (as similarly observed in \citet{liu2025rm,wu2025rewordbench}).
}
Moreover, the in-distribution accuracy of IM-RMs in the general chat setting is consistently lower than that of EX-RMs.
We attribute this to in-distribution evaluation being closer to a token-level shift than to a domain shift.
Namely, in-distribution test examples share semantic structure with training examples but take on different surface forms.

\textbf{EX-RMs induce a higher reward margin.}
\cref{table:ex_vs_im_rm_acc_and_margin} highlights an additional benefit of EX-RMs over IM-RMs: EX-RMs induce a higher absolute reward margin.
This was recently shown to yield a better optimization landscape for reinforcement learning~\citep{razin2024vanishing,razin2025makes}.\footnote{
Specifically, for a reward model~$r$, prompt $\xbf$, and responses $\ybf, \ybf'$, the absolute reward margin is defined by $\abs{r(\xbf, \ybf) - r(\xbf, \ybf')}$.
\citet{razin2024vanishing,razin2025makes} proved that low reward variance, which is equivalent to the expected squared reward margin, leads to a flat objective landscape that hinders policy gradient optimization.
}

\textbf{Evidence against alternative hypotheses.}
Finally, we provide evidence against two alternative candidate sources for the generalization gap between EX-RMs and IM-RMs, aside from the one already ruled out in \cref{sec:gen_verification}.
First, the reward of an EX-RM is based on the hidden representation of the whole response, whereas IM-RMs depend also on the hidden representations of intermediate tokens in the response.
Intuitively, the hidden representations of intermediate tokens may be misleading since they do not capture the full meaning of the response.
Second, the reward of an IM-RM is shifted by the log probability of a reference distribution, which is not the case for EX-RMs.
\cref{fig:alternative} in \cref{app:experiments_further:real_world} demonstrates that these differences do not explain the generalization gap by considering EX-RMs trained over the hidden representations of all intermediate tokens and IM-RMs without a reference distribution.

\begin{figure*}[t]
	\vspace{-12mm}
	\begin{center}
		\includegraphics[width=1\textwidth]{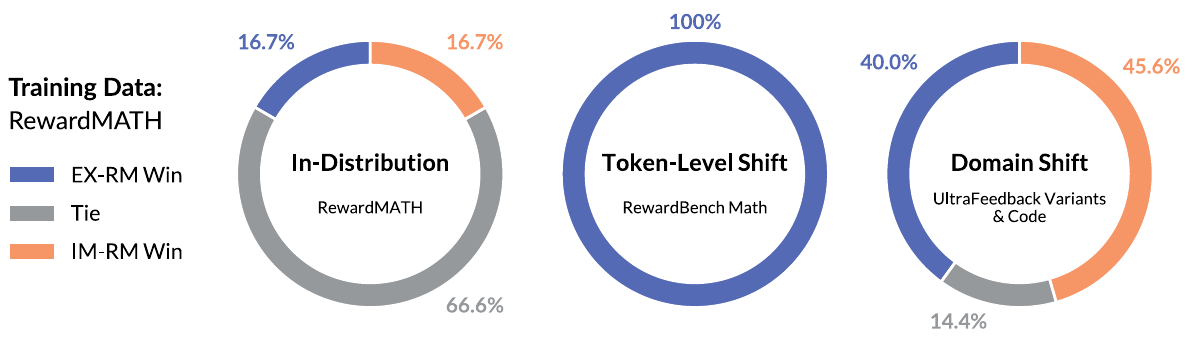}
	\end{center}
	\vspace{-2mm}
	\caption{
		\textbf{IM-RMs are less robust than EX-RMs to token-level distribution shifts, but perform comparably or better under domain shifts.}
		This figure presents the results of an experiment identical to that of \cref{fig:ex_vs_im_rm_generalization_uf}, except that the reward models were trained on the RewardMATH dataset instead of UltraFeedback.
		Accordingly, the math subset of RewardBench poses a token-level shift while UltraFeedback variants and the code subset of RewardBench pose a domain shift.
		Note that, in this setting, EX-RMs and IM-RMs perform similarly in-distribution since both reach near-maximal accuracy (see \cref{table:ex_vs_im_rm_acc_and_margin}).
	}
	\label{fig:ex_vs_im_rm_generalization_math}
\end{figure*}

\begin{table*}[t]
	\vspace{0mm}
	\caption{
		This table supplements \cref{fig:ex_vs_im_rm_generalization_uf,fig:ex_vs_im_rm_generalization_math} by reporting the accuracy and absolute (normalized) reward margin over the different evaluation categories.
		In each row, bold font marks the highest accuracy and absolute reward margin (unless the values are within $0.01$ of each other, after taking into account standard deviations).
		For each reward model and evaluation dataset separately, the absolute reward margin is normalized by the standard deviation of rewards to account for arbitrary differences in scale.
		Notice that EX-RMs consistently induce a higher reward margin, which was shown in \citet{razin2025makes} to be beneficial for optimization via reinforcement learning.
		Values in the table are means across the models (six in total) and evaluation datasets, with standard deviation computed based on three random seeds.
		See \cref{table:ex_vs_im_rm_acc_and_margin_uf_train_per_dataset,table:ex_vs_im_rm_acc_and_margin_rewardmath_train_per_dataset} in \cref{app:experiments_further} for a per evaluation dataset breakdown of the results.
	}
	\vspace{-1.5mm}
	\begin{center}
		\fontsize{8.5}{9.5}\selectfont
		\begin{tabular}{llcccc}
			\toprule
			& & \multicolumn{2}{c}{Accuracy} & \multicolumn{2}{c}{Absolute Reward Margin} \\
			\cmidrule(lr){3-4} \cmidrule(lr){5-6}\\[-0.98em]
			Training Data & Evaluation & \colorexrm{EX-RM} & \colorimrm{IM-RM} & \colorexrm{EX-RM} & \colorimrm{IM-RM} \\
			\midrule\\[-0.95em]
			\multirow{3}{*}{UltraFeedback}
				& In-Distribution & $\mathbf{0.752}$ \scriptsize{$\mathbf{\pm\,0.009}$} & $0.646$ \scriptsize{$\pm\,0.006$} & $\mathbf{1.014}$ \scriptsize{$\mathbf{\pm\,0.023}$} & $0.813$ \scriptsize{$\pm\,0.003$} \\[0.15em]
				& Token-Level Shift  & $\mathbf{0.665}$ \scriptsize{$\mathbf{\pm\,0.005}$} & $0.602$ \scriptsize{$\pm\,0.003$} & $\mathbf{0.976}$ \scriptsize{$\mathbf{\pm\,0.008}$} & $0.763$ \scriptsize{$\pm\,0.003$} \\[0.15em]
				& Domain Shift & $0.621$ \scriptsize{$\pm\,0.012$} & $\mathbf{0.720}$ \scriptsize{$\mathbf{\pm\,0.004}$} & $\mathbf{0.807}$ \scriptsize{$\mathbf{\pm\,0.006}$} & $0.726$ \scriptsize{$\pm\,0.001$} \\[0.05em]
			\midrule\\[-0.95em]
			\multirow{3}{*}{RewardMATH} 
				& In-Distribution & $0.971$ \scriptsize{$\pm\,0.003$} & $0.972$ \scriptsize{$\pm\,0.002$} & $\mathbf{1.602}$ \scriptsize{$\mathbf{\pm\,0.011}$} & $1.377$ \scriptsize{$\pm\,0.007$} \\[0.15em]
				& Token-Level Shift  & $\mathbf{0.988}$ \scriptsize{$\mathbf{\pm\,0.003}$} & $0.515$ \scriptsize{$\pm\,0.007$} & $\mathbf{1.667}$ \scriptsize{$\mathbf{\pm\,0.017}$} & $1.035$ \scriptsize{$\pm\,0.011$} \\[0.15em]
				& Domain Shift & $0.505$ \scriptsize{$\pm\,0.012$} & $0.517$ \scriptsize{$\pm\,0.001$} & $\mathbf{0.755}$ \scriptsize{$\mathbf{\pm\,0.008}$} & $0.604$ \scriptsize{$\pm\,0.004$} \\[0.05em]
			\bottomrule
		\end{tabular}
	\end{center}
	\label{table:ex_vs_im_rm_acc_and_margin}
\end{table*}
	
	\section{Related Work}
\label{sec:related}

\textbf{Reward models for language model post-training and inference.}
In real-world applications, it is rarely feasible to evaluate the quality generated responses via rule-based rewards.
As a result, reward models have been extensively used in the language model ecosystem for training via reinforcement learning~\citep{ziegler2019fine,ouyang2022training,achiam2023gpt,dubey2024llama,qwen2024technicalreport,team2024gemma}, labeling preferences in direct alignment algorithms~\citep{dong2024rlhf,meng2024simpo,adler2024nemotron}, rejection sampling~\citep{gulcehre2023reinforced,dong2023raft}, data filtering~\citep{dubey2024llama,qwen2024technicalreport,albalak2024survey}, and inference-time scaling~\citep{cobbe2021training,wu2025inference,snell2025scaling}.

\textbf{Analyses of reward models.}
Prior analyses mostly bounded the sample complexity for estimating a ground truth reward, under various technical conditions \citep{pacchiano2023dueling,zhu2023principled,wang2023rlhf,zhan2023provable,ji2023provable,du2024exploration,xiong2024iterative,qiu2024reward,das2024active,ji2024reinforcement,scheid2024optimal,gaur2024global,zhan2024provable,wu2024making,li2024policy,huang2025can,sun2025rethinking}.
An additional line of research considered properties of a reward model that benefit robustness~\citep{wang2024transforming,hong2025robustness}, compatibility with a given inference procedure~\citep{chow2025inference,balashankar2025infalign}, or the optimization landscape for reinforcement learning~\citep{razin2024vanishing,razin2025makes}.
However, the works mentioned above do not account for the difference between reward model types or the effect of their particular parameterizations on generalization, which is the goal of this study.

Most relevant in our context are \citet{im2025can,razin2025unintentional,shi2025understanding}.
Similarly to \cref{sec:analysis:reward_dynamics}, \citet{im2025can} and \citet{razin2025unintentional} analyzed the learning dynamics of IM-RMs, but for other purposes.
Specifically, our work focuses on generalization, whereas \citet{razin2025unintentional} addressed an optimization issue that causes the reward assigned to chosen responses to decrease.
Regarding \citet{im2025can}, under conditions similar to those of \cref{thm:ex_rm_im_rm_generalization_gap}, they proved that IM-RMs can generalize well to unseen prompts if the responses used for training and evaluation are the same.
In contrast, \cref{thm:ex_rm_im_rm_generalization_gap} establishes that IM-RMs fail to generalize when the evaluation responses do not appear in the training set~---~a more realistic scenario.
Lastly, \citet{shi2025understanding} constructed a setting in which EX-RMs enjoy a better sample complexity than IM-RMs (Section~4 therein).
Though, their result requires nonstandard reward model parameterizations and a reward estimation method tailored to a specific ground truth reward.
While our analysis also operates under simplifying assumptions, it identifies a cause for the generalization gap observed in practice between EX-RMs and IM-RMs, as we extensively verify empirically (\cref{sec:empirical}).

\textbf{DPO vs RLHF.} 
The question of why IM-RMs often generalize worse than EX-RMs is closely related to, yet distinct from, comparisons of DPO \citep{rafailov2023direct} and RLHF \citep{ouyang2022training}.
Specifically, DPO corresponds to the language model underlying an IM-RM and RLHF refers to first training an EX-RM and then optimizing a language model based on it.
\citet{swamy2025all} argued that DPO usually underperforms RLHF due to the presence of generation-verification gaps in many practical scenarios.
Our results (\cref{sec:gen_verification}) show that, while such gaps may underlie performance differences between language models trained with DPO and RLHF, they do not explain the difference in generalization between IM-RMs and EX-RMs (as measured by accuracy).
Furthermore, the analysis in \cref{sec:analysis} can be interpreted as suggesting another cause for the performance difference between DPO and RLHF: the former suffers from a reliance on superficial token-level~cues.

\textbf{Learning dynamics of neural networks.}
In \cref{sec:analysis:reward_dynamics}, we characterized how the reward assigned to prompt-response pairs changes due to a gradient update.
Analogous approaches have been valuable both in theory, for studying the effect of language model post-training~\citep{im2025can,razin2025unintentional,ren2025learning}, and in practice, for identifying mislabeled examples~\citep{pruthi2020estimating} and developing data selection algorithms~\citep{xia2024less}.
More broadly, analyzing the trajectory of gradient-based training is a fundamental tool in the vast implicit bias literature.
There, the focus is typically on understanding why overparameterized neural networks tend to generalize well, despite the existence of parameter assignments that do not~\citep{saxe2014exact,gunasekar2017implicit,gunasekar2018characterizing,soudry2018implicit,arora2019implicit,gidel2019implicit,goldt2019dynamics,lampinen2019analytic,razin2020implicit,razin2021implicit,razin2022implicit,berthier2023incremental,cohen2023learning,ren2023prepare,razin2024implicit,chou2024gradient,zhang2024implicit,slutzky2025implicit,vasudeva2025rich}.
We refer to \citet{vardi2023implicit} for a survey of the field.

	\section{Conclusion}
\label{sec:conclusion}

Reward models are a key component in language model post-training and inference pipelines.
Yet, the comparative advantages and disadvantages of different reward model types are poorly understood.
In this work, we mostly focused on two prevalent reward model types: \emph{explicit reward models (EX-RMs)} and \emph{implicit reward models (IM-RMs)}.
Through theory and experiments, we established that IM-RMs rely more strongly on superficial token-level cues.
Consequently, they typically generalize worse than EX-RMs under token-level distribution shifts, as well as in-distribution.
This corroborates and provides an explanation for existing empirical findings on the relative benefits of EX-RMs~\citep{singhal2024d2po,lin2024limited,lambert2025rewardbench,swamy2025all}.
We also provided evidence against an alternative hypothesis, by which the generalization gap between EX-RMs and IM-RMs stems from IM-RMs needing to learn to both generate and verify the quality responses, whereas EX-RMs only need to learn to verify.
Overall, our work highlights that seemingly minor design choices can substantially impact how reward models generalize.
As elaborated below, we hope its insights will motivate research into the implicit biases of different reward model types and facilitate enhancing their robustness.

\subsection{Limitations and Future Work}
\label{sec:conclusion:limitations_and_future_work}

\textbf{Theoretical analysis.}
\cref{sec:analysis} included a couple of simplifying assumptions.
Namely, we assumed that hidden representations are fixed and \cref{thm:ex_rm_im_rm_generalization_gap} also required responses to be of length one.
Although \cref{sec:empirical} empirically demonstrated that the conclusions of our theory apply when all reward model parameters are trained and responses are of arbitrary length, alleviating these restrictions may yield further insights into how reward models generalize.

\textbf{Factors influencing generalization.}
We highlighted one cause for the difference in generalization between EX-RMs and IM-RMs~---~a stronger reliance of IM-RMs on token-level cues.
However, there are likely additional factors that affect their generalization.
In particular, while EX-RMs are more robust to token-level shifts, our experiments show that IM-RMs can generalize better under other types of distribution shifts.
Investigating whether there are cases in which IM-RMs consistently outperform EX-RMs and why is left to future work.

\textbf{Beyond accuracy.}
As customary, we primarily measured reward model generalization via accuracy (\cf~\citet{lin2024limited,lambert2025rewardbench}).
While accuracy is an important measure, it is not the only quantity that determines the effectiveness of a reward model~\citep{chen2024accuracy,wen2025rethinking,razin2025makes}.
Exploring how different reward model types compare across a broader set of evaluation criteria remains a valuable direction for future work.

\textbf{Reward model types.}
Our work covers three common reward model types: EX-RMs, IM-RMs, and a generative reward model variant (\cref{app:ex_grms}; \cf~\citet{zhang2025generative}).
We hope that it will encourage studying the implicit biases introduced by additional types, \eg, reward models that provide rewards on intermediate steps of a response~\citep{uesato2022solving,lightman2024let}.

	\ifdefined\NEURIPS
		\begin{ack}
			We thank Eshbal Hezroni for aid in preparing illustrative figures and Peter Henderson for providing feedback on the manuscript.
NR is supported by Princeton Language and Intelligence (PLI) and the Zuckerman STEM Leadership Program.
SA acknowledges funding from ONR, Schmidt Science, and OpenAI.
		\end{ack}
	\else
		\newcommand{\ack}{}
	\fi
	\ifdefined\ARXIV
		\section*{Acknowledgements}
		\ack
	\else
		\ifdefined\COLT
			\acks{\ack}
		\else
			\ifdefined\CAMREADY
				\ifdefined\ICLR
					\newcommand*{\subsuback}{}
				\fi
				\ifdefined\NEURIPS
				\else
					\section*{Acknowledgements}
					\ack
				\fi
			\fi
		\fi
	\fi

	\section*{References}
	{\small
		\ifdefined\ICML
			\bibliographystyle{icml2021}
		\else
			\bibliographystyle{plainnat}
		\fi
		\bibliography{refs}
	}

	\clearpage
	\appendix
	
	
	\section{Explicit Generative Reward Models}
\label{app:ex_grms}

A nascent reward model variant, proposed in \citet{zhang2025generative}, rewards responses by asking a language model $\pi_\theta$ to assess their quality.
We refer to this type of reward models as \emph{explicit generative reward models (EX-GRMs)}.
Specifically, for a prompt-response pair $(\xbf, \ybf)$, EX-GRMs receive as input $I \brk[s]{ \xbf, \ybf } \in \vocab^*$, which is some textual format that requests the model to verify whether $\ybf$ is a good response to $\xbf$.\footnote{
	The input $I\brk[s]{ \xbf, \ybf}$ can optionally include chain-of-thought tokens.
}
For example, \citet{zhang2025generative} concatenate to $(\xbf, \ybf)$ the suffix “Is the answer correct (Yes/No)?”.
Then, the reward for $(\xbf, \ybf)$ is taken to be the probability that the underlying language model assigns to the token $\mathrm{Yes}$, \ie:
\[
	\grm (\xbf, \ybf) = \pi_{\grmparams} \brk*{ \mathrm{Yes} | I \brk[s]{ \xbf, \ybf } }
	\text{\,,}
\]
where $\grmparams = \theta$ denotes the trainable parameters of the EX-GRM.

Instead of the Bradley-Terry log-likelihood loss (\cref{eq:rm_loss}), \citet{zhang2025generative} suggested an alternative loss for EX-GRMs:\footnote{
	\citet{zhang2025generative} include an additional $-\lambda \cdot \ln \pi_{\grmparams} (\yw | \xbf)$ loss term, with $\lambda > 0$, that encourages the model to retain its response generation capabilities.
}
\be
	\grmloss \brk*{ \grm } := \frac{1}{\abs{\trainset}} \sum\nolimits_{ (\xbf, \yw, \yl) \in \trainset } - \ln \pi_{\grmparams} \brk*{ \mathrm{Yes} | I \brk[s]{ \xbf, \yw } } - \ln \pi_{\grmparams} \brk*{ \mathrm{No} | I \brk[s]{ \xbf, \yl } }
	\text{\,.}
	\label{eq:grm_loss}
\ee

In \cref{app:ex_grms:reward_dynamics}, we extend the analysis of \cref{sec:analysis:reward_dynamics} to EX-GRMs.
We show that, similarly to EX-RMs, the learning dynamics of EX-GRMs depends on responses primarily through their hidden representations.
This suggests that EX-GRMs should also be more robust than IM-RMs to token-level distribution shifts.
We corroborate this hypothesis empirically in \cref{app:experiments_further}.

\subsection{Learning Dynamics}
\label{app:ex_grms:reward_dynamics}

We characterize how performing a gradient update on the training example $(\xbf, \yw, \yl) \in \trainset$ influences the reward that an EX-GRM assigns to an unseen prompt-response pair $(\bar{\xbf}, \bar{\ybf})$, \ie:
\[
\Delta \grm (\bar{\xbf}, \bar{\ybf}) := r_{\grmparams - \eta \nabla \ellgrm_{\grmparams} (\xbf, \yw, \yl)} (\bar{\xbf}, \bar{\ybf}) - r_{\grmparams} (\bar{\xbf}, \bar{\ybf})
\text{\,,}
\]
where $\eta \in \R_{> 0}$ is a learning rate and
\[
\ellgrm_{\grmparams} (\xbf, \yw, \yl) := - \ln \pi_{\grmparams} \brk*{ \mathrm{Yes} | I \brk[s]{ \xbf, \yw } } - \ln \pi_{\grmparams} \brk*{ \mathrm{No} | I \brk[s]{ \xbf, \yl } }
\] 
denotes the EX-GRM loss over $(\xbf, \yw, \yl)$.
By a Taylor approximation of $\grm (\bar{\xbf}, \bar{\ybf})$ around $\grmparams$, we may write the change in reward as:
\[
\Delta \grm (\bar{\xbf}, \bar{\ybf}) = - \eta \inprod{ \nabla \grm (\bar{\xbf}, \bar{\ybf}) }{ \nabla \ellgrm_{\grmparams} (\xbf, \yw, \yl) } + \OO (\eta^2)
\text{\,.}
\]
As in \cref{sec:analysis:reward_dynamics}, we assume that hidden representations are fixed during training (\cref{assumption:fixed_hidden_representations}), in which case the trainable parameters of the EX-GRM are $\grmparams = \unembed$, where $\unembed$ is the unembedding matrix of $\pi_{\grmparams}$.
Under this assumption, the change in reward for EX-GRMs is given by (derivation in \cref{app:proofs:grm_dynamics}):
\be
\begin{split}
\Delta \grm (\bar{\xbf}, \bar{\ybf}) = & \, \colorexgrm{ \pi_{\grmparams} (\mathrm{Yes} | I \brk[s]{\bar{\xbf}, \bar{\ybf}}) \brk2{ \gamma (\yw) \cdot \inprod{ \rep_{I \brk[s]{\bar{\xbf}, \bar{\ybf}}} }{ \rep_{I \brk[s]{\xbf, \yw}} } + \gamma (\yl) \cdot \inprod{ \rep_{I \brk[s]{\bar{\xbf}, \bar{\ybf}}} }{ \rep_{ I \brk[s]{\xbf, \yl} } }  } } \\
& \cdot \gray{ \eta } + \gray{\OO (\eta^2)}
\text{\,,}
\end{split}
\label{eq:ex_grm_dynamics}
\ee
where the coefficients $\gamma (\yw) \in [0, 2]$ and $\gamma (\yl) \in [-2, 1]$ are defined as:
\[
\begin{split}
\gamma (\yw) & = 1 - \pi_{\grmparams} (\mathrm{Yes} | I \brk[s]{ \bar{\xbf}, \bar{\ybf}}) - \pi_{\grmparams} (\mathrm{Yes} | I \brk[s]{ \xbf, \yw }) + \inprod{ \pi_{\grmparams} (\cdot | I \brk[s]{ \bar{\xbf}, \bar{\ybf}}) }{ \pi_{\grmparams} (\cdot | I \brk[s]{ \xbf, \yw}) } \text{\,,} \\[0.3em]
\gamma (\yl) & = - \pi_{\grmparams} (\mathrm{No} | I \brk[s]{ \bar{\xbf}, \bar{\ybf}}) - \pi_{\grmparams} (\mathrm{Yes} | I \brk[s]{ \xbf, \yl}) + \inprod{ \pi_{\grmparams} (\cdot | I \brk[s]{ \bar{\xbf}, \bar{\ybf}}) }{ \pi_{\grmparams} (\cdot | I \brk[s]{ \xbf, \yl }) } \text{\,,}
\end{split}
\]
with $\pi_{\grmparams} (\cdot | \zbf)$ denoting the vector of probabilities that $\pi_{\grmparams}$ assigns to tokens conditioned on $\zbf$.

Similarly to EX-RMs (\cref{eq:exrm_dynamics}), and in contrast to IM-RMs (\cref{eq:imrm_dynamics}), the change in reward for EX-GRMs depends on $\bar{\ybf}$, $\yw$, and $\yl$ primarily through the hidden representations of the corresponding inputs (\ie, $\rep_{I \brk[s]{\bar{\xbf}, \bar{\ybf}}}$, $\rep_{I \brk[s]{\xbf, \yw}}$, and $\rep_{I \brk[s]{\xbf, \yl}}$).
Notably, since $\gamma (\yw) \geq 0$, the contribution of $\inprodnoflex{ \rep_{I \brk[s]{\bar{\xbf}, \bar{\ybf}}} }{ \rep_{I \brk[s]{\xbf, \yw}} }$ mirrors that of $\smash{\inprodnoflex{ \rep_{\bar{\xbf}, \bar{\ybf}} }{ \rep_{ \xbf, \yw } }}$ in the EX-RM dynamics: it increases the reward of $(\bar{\xbf}, \bar{\ybf})$ when the hidden representations corresponding to $(\bar{\xbf}, \bar{\ybf})$ and $(\xbf, \yw)$ are aligned.
The contribution of the term involving $\gamma (\yl)$ may differ from the analogous term in the EX-RM dynamics since $\gamma (\yl)$ can be positive.
Nonetheless, EX-GRMs, like EX-RMs, are expected be more robust than IM-RMs to superficial token-level distribution shifts.
\cref{app:experiments_further} empirically demonstrates that this is indeed the case.

	\section{Deferred Proofs}
\label{app:proofs}

\subsection{Proof of \cref{thm:im_rm_verifier_does_not_imply_generator}}
\label{app:proofs:im_rm_verifier_does_not_imply_generator}

For a prompt $\xbf \in \X$, we define $\pi (\cdot | \xbf)$ by:
\[
    \pi (\ybf | \xbf) := \begin{cases}
        \frac{1}{Z (\xbf)} \piref (\ybf | \xbf) \cdot \exp \brk*{ \delta / \beta } & ,~\ybf \in \correct (\xbf) \\
        \frac{1}{Z (\xbf)} \piref (\ybf | \xbf) & ,~\ybf \in \vocab^* \setminus \correct (\xbf)
    \end{cases}
    \text{\,,}
\]
where
\[
Z (\xbf) := \sum\nolimits_{\yw \in \correct (\xbf)} \piref (\yw | \xbf) \cdot \exp (\delta / \beta) + \sum\nolimits_{\yl \in \vocab^* \setminus \correct (\xbf)} \piref (\yl | \xbf)
\]
is a normalization constant that ensures $\pi (\cdot | \xbf)$ is a valid distribution.
The probability that $\pi$ assigns to any other sequence of tokens can be defined arbitrarily (as long as it is consistent with the probabilities defined above).
Since $\imrmnoparams (\xbf, \ybf) = \beta ( \ln \pi (\ybf | \xbf) - \ln \piref (\ybf | \xbf) )$ for all $\xbf \in \X, \ybf \in \vocab^*$, where $\imrmnoparams$ is the IM-RM induced by $\pi$, we have that:
\[
    \imrmnoparams (\xbf, \ybf) = \begin{cases}
        \delta - \beta \ln Z (\xbf) & ,~\ybf \in \correct (\xbf) \\
        - \beta \ln Z (\xbf) & ,~\ybf \in \vocab^* \setminus \correct (\xbf)
    \end{cases}
    \text{\,.}
\]
Clearly, $\imrmnoparams$ is a verifier with margin $\delta$ for $(\X, \correct)$ since $\imrmnoparams (\xbf, \yw) = \imrmnoparams (\xbf, \yl) + \delta$ for all $\xbf \in \X$, $\yw \in \correct (\xbf)$, and $\yl \in \vocab^* \setminus \correct (\xbf)$.

Now, notice that $Z (\xbf) > 1$ since it is a sum over the probabilities $\piref (\ybf | \xbf)$, for $\ybf \in \vocab^*$, up to terms corresponding to $\ybf \in \correct (\xbf)$ being multiplied by $\exp (\delta / \beta) > 1$.
As a result, for any $\xbf \in \X$ and $\ybf \in \correct (\xbf)$ it holds that:
\[
\begin{split}
    \pi (\ybf | \xbf) = \frac{1}{Z (\xbf)} \piref (\ybf | \xbf) \cdot \exp \brk*{ \delta / \beta } \leq \piref (\ybf | \xbf) \cdot \exp \brk*{ \delta / \beta } \text{\,.}
\end{split}
\]
Summing over responses in $\correct (\xbf)$, we conclude:
\[
    \pi (\correct (\xbf) | \xbf) \leq \piref (\correct (\xbf) | \xbf) \cdot \exp \brk*{ \delta / \beta }
    \text{\,.}
\]
\qed

\subsection{Proof of \cref{cor:im_rm_verifier}}
\label{app:proofs:cor_im_rm_verifier}

By \cref{thm:im_rm_verifier_does_not_imply_generator}, there exist a distribution $\pi$ and corresponding IM-RM $\imrmnoparams$ such that $\imrmnoparams$ is a verifier with margin $\delta \in \R_{> 0}$ for $(\X, \correct)$, although for all $\xbf \in \X$ it holds that:
\[
    \exp \brk*{ - \delta / \beta } \cdot \pi (\correct (\xbf) | \xbf) \leq \piref (\correct (\xbf) | \xbf)
    \text{\,.}
\]
We show that, since $\piref$ is not an efficient generator for $(\X, \correct)$, the distribution $\pi$ is not an efficient generator for $(\X, \correct)$ either.
Assume by way of contradiction that this is not the case, \ie, that  $\pi$ is an efficient generator for $(\X, \correct)$.
Let $k \in \N$ and $\alpha \in \R_{> 0}$ be such that $\pi (\correct (\xbf) | \xbf) \geq \alpha^{-1} \abs{\xbf}^{-k}$ for all $\xbf \in \X$.
Defining $\gamma := \alpha \cdot \exp (\delta / \beta)$, it follows that for all $\xbf \in \X$:
\[
\piref (\correct (\xbf) | \xbf) \geq \exp \brk*{ - \delta / \beta } \cdot \pi (\correct (\xbf) | \xbf) \geq \exp \brk*{ - \delta / \beta } \cdot \alpha^{-1} \abs{\xbf}^{-k} = \gamma^{-1} \abs{\xbf}^{-k}
\text{\,,}
\] 
\ie, $\piref$ is an efficient generator for $(\X, \correct)$~---~a contradiction.
\qed

\subsection{Derivation of Explicit Reward Model Learning Dynamics (\cref{eq:exrm_dynamics})}
\label{app:proofs:ex_rm_dynamics}

Under \cref{assumption:fixed_hidden_representations}, the trainable parameters of the EX-RM are $\exrmparams = \rmhead$.
Thus, the loss gradient for $(\xbf, \yw, \yl) \in \trainset$ with respect to $\exrmparams$ is given by:
\[
\begin{split}
\nabla \ell_{\exrmparams} (\xbf, \yw, \yl) & = - g(\exrmparams) \cdot \brk*{ \nabla \exrm (\xbf, \yw) - \nabla \exrm (\xbf, \yl) } \\[0.3em]
& = - g(\exrmparams) \cdot \brk*{ \rep_{\xbf, \yw} - \rep_{\xbf, \yl} }
\text{\,,}
\end{split}
\]
where $g (\exrm) := - \ell'_{\exrmparams} (\xbf, \yw, \yl) = \sigma \brk{ \exrm (\xbf, \yl) - \exrm (\xbf, \yw)} > 0$.
\cref{eq:exrm_dynamics} then follows by:
\[
\begin{split}
\Delta \exrm (\bar{\xbf}, \bar{\ybf}) & = r_{ \exrmparams - \eta \nabla \ell_{\exrmparams} (\xbf, \yw, \yl) } (\bar{\xbf}, \bar{\ybf}) - r_{\exrmparams} (\bar{\xbf}, \bar{\ybf}) \\[0.3em]
& = \inprod{ \rmhead - \eta \nabla \ell_{\exrmparams} (\xbf, \yw, \yl) }{ \rep_{\bar{\xbf}, \bar{\ybf}} } - \inprod{ \rmhead }{ \rep_{\bar{\xbf}, \bar{\ybf}} } \\[0.3em]
& = \inprod{ \rep_{\bar{\xbf}, \bar{\ybf}} }{ \rep_{\xbf, \yw} - \rep_{\xbf, \yl} } \cdot \eta g(\exrmparams)
\text{\,.}
\end{split}
\]

\subsection{Derivation of Implicit Reward Model Learning Dynamics (\cref{eq:imrm_dynamics})}
\label{app:proofs:im_rm_dynamics}

\cref{eq:imrm_dynamics} follows by steps similar to those used for proving Theorem~7 in \citet{razin2025unintentional}, where the difference stems from the hidden representations being fixed in our case (\cref{assumption:fixed_hidden_representations}).
In particular, under \cref{assumption:fixed_hidden_representations}, the trainable parameters of the IM-RM are $\imrmparams = \unembed$.
Thus, the loss gradient for $(\xbf, \yw, \yl) \in \trainset$ with respect to $\imrmparams$ is given by:
\[
\begin{split}
\nabla \ell_{\imrmparams} (\xbf, \yw, \yl) & = - g(\imrmparams) \cdot \brk*{ \nabla \imrm (\xbf, \yw) - \nabla \imrm (\xbf, \yl) } \\[0.3em]
& = - g(\imrmparams) \cdot \brk*{ \nabla \beta \ln \frac{ \pi_{\imrmparams} (\yw| \xbf) }{ \piref (\yw | \xbf) } - \nabla  \beta \ln \frac{ \pi_{\imrmparams} (\yl| \xbf) }{ \piref (\yl | \xbf) } } \\[0.3em]
& =- g(\imrmparams) \beta \cdot \brk*{ \nabla \ln \pi_{\imrmparams} (\yw| \xbf) - \nabla \ln \pi_{\imrmparams} (\yl| \xbf) }
\text{\,,}
\end{split}
\]
where $g (\imrmparams) := - \ell'_{\imrmparams} (\xbf, \yw, \yl) = \sigma \brk{ \imrm (\xbf, \yl) - \imrm (\xbf, \yw )} > 0$.
Furthermore, the reward gradient for $(\bar{\xbf}, \bar{\ybf})$ is:
\[
\nabla \imrm (\bar{\xbf}, \bar{\ybf}) = \nabla \beta \ln \frac{ \pi_{\imrmparams} (\bar{\ybf} | \bar{\xbf}) }{ \piref (\bar{\ybf} | \bar{\xbf}) } = \beta \cdot \nabla \ln \pi_{\imrmparams} (\bar{\ybf} | \bar{\xbf})
\text{\,.}
\]
Now, for any prompt $\xbf' \in \vocab^*$ and response $\ybf' \in \vocab^*$:
\[
\begin{split}
\nabla \ln \pi_{\imrmparams} (\ybf' | \xbf') & = \sum\nolimits_{k = 1}^{\abs{\ybf'}} \nabla \ln \pi_{\imrmparams} (\ybf'_k | \xbf', \ybf'_{< k}) \\
& = \sum\nolimits_{k = 1}^{\abs{\ybf'}} \nabla \brk*{ \inprod{ \unembed_{\ybf'_k} }{ \rep_{\xbf', \ybf'_{< k}} } - \ln \sum\nolimits_{v \in \vocab} \exp \brk*{ \inprod{ \unembed_{v} }{ \rep_{\xbf', \ybf'_{< k}} } } } \\[0.3em]
& = \sum\nolimits_{k = 1}^{\abs{\ybf'}} \brk*{ \ebf_{\ybf'_k} -  \pi_{\imrmparams} (\cdot | \xbf', \ybf'_{< k})} \rep_{\xbf', \ybf'_{< k}}^\top
\text{\,,} 
\end{split}
\]
where $\ebf_v \in \R^{\abs{\vocab}}$ denotes the standard basis vector corresponding to $v \in \vocab$ and $\pi_{\imrmparams} (\cdot | \xbf', \ybf'_{< k})$ is the vector of probabilities that $\pi_{\imrmparams}$ assigns to tokens conditioned on $(\xbf', \ybf'_{< k})$.
Plugging this gradient expression into the expressions for $\nabla \imrm (\bar{\xbf}, \bar{\ybf})$ and $\nabla \ell_{\imrmparams} (\xbf, \yw, \yl)$ yields:
\[
\begin{split}
& \inprod{ \nabla \imrm (\bar{\xbf}, \bar{\ybf}) }{ - \nabla \ell_{\imrmparams} (\xbf, \yw, \yl) } \\
& = \inprod{ \sum_{k = 1}^{\abs{\bar{\ybf}}} \brk*{ \ebf_{\bar{\ybf}_k} -  \pi_{\imrmparams} (\cdot | \bar{\xbf}, \bar{\ybf}_{< k})} \rep_{\bar{\xbf}, \bar{\ybf}_{< k}}^\top  }{ \sum_{l = 1}^{\abs{\yw}} \brk*{ \ebf_{\yw_l} -  \pi_{\imrmparams} (\cdot | \xbf, \yw_{< l})} \rep_{\xbf, \yw_{< l}}^\top } g (\imrmparams) \beta^2 \\ 
& \hspace{4mm} - \inprod{ \sum_{k = 1}^{\abs{\bar{\ybf}}} \brk*{ \ebf_{\bar{\ybf}_k} -  \pi_{\imrmparams} (\cdot | \bar{\xbf}, \bar{\ybf}_{< k})} \rep_{\bar{\xbf}, \bar{\ybf}_{< k}}^\top }{ \sum_{l = 1}^{\abs{\yl}} \brk*{ \ebf_{\yl_l} -  \pi_{\imrmparams} (\cdot | \xbf, \yl_{< l})} \rep_{\xbf, \yl_{< l}}^\top  } g (\imrmparams) \beta^2 \\
& = \bigg ( \sum_{k = 1}^{\abs{\bar{\ybf}}} \sum_{l = 1}^{\abs{\yw}} \rho_{k, l} (\yw) \cdot \inprodbig{ \rep_{\bar{\xbf}, \bar{\ybf}_{< k} } }{ \rep_{ \xbf, \yw_{< l} }  } - \sum_{k = 1}^{\abs{\bar{\ybf}}} \sum_{l = 1}^{\abs{\yl}} \rho_{k, l} ( \yl ) \cdot \inprodbig{ \rep_{ \bar{\xbf}, \bar{\ybf}_{< k} } }{ \rep_{ \xbf, \yl_{< l} } } \! \bigg ) g(\imrmparams) \beta^2
\text{\,,}
\end{split}
\]
where for all $\vbf \in \{ \yw, \yl\}$, $k \in \{ 1, \ldots, \abs{\bar{\ybf}} \}$, and $l \in \{ 1, \ldots, \abs{\vbf} \}$:
\[
\begin{split}
	\rho_{k, l} (\vbf) & :=  \inprod{ \ebf_{\bar{\ybf}_k} - \pi_{\imrmparams} (\cdot | \bar{\xbf}, \bar{\ybf}_{< k}) }{ \ebf_{\vbf_{l}} - \pi_{\imrmparams} (\cdot | \xbf, \vbf_{< l}) } \\[0.3em]
    & = \indc{ \bar{\ybf}_k = \vbf_l } - \pi_{\imrmparams} (\bar{\ybf}_k | \xbf, \vbf_{< l}) - \pi_{\imrmparams} (\vbf_l | \bar{\xbf}, \bar{\ybf}_{< k}) + \inprod{ \pi_{\imrmparams} (\cdot | \bar{\xbf}, \bar{\ybf}_{< k}) }{ \pi_{\imrmparams} (\cdot | \xbf, \vbf_{< l}) }
    \text{\,.}
\end{split}
\]
\cref{eq:imrm_dynamics} then follows by the above and the fact that:
\[
\Delta r_{\imrmparams} (\bar{\xbf}, \bar{\ybf}) = - \eta \inprod{ \nabla r_{\imrmparams} (\bar{\xbf}, \bar{\ybf}) }{ \nabla \ell_{\imrmparams} (\xbf, \yw, \yl) } + \OO (\eta^2)
\text{\,.}
\]
Lastly, to see that $\rho_{k, l} (\vbf)$ resides within $[-2, 2]$ for all $\vbf \in \vocab^*$, $k \in \{1, \ldots, \abs{\bar{\ybf}} \}$, and $l \in \{1, \ldots, \abs{\vbf}\}$, notice that:
\[
\begin{split}
    \abs1{ \rho_{k, l} (\vbf) } & = \abs1{ \inprod{ \ebf_{\bar{\ybf}_k} - \pi_{\imrmparams} (\cdot | \bar{\xbf}, \bar{\ybf}_{< k}) }{ \ebf_{\vbf_{l}} - \pi_{\imrmparams} (\cdot | \xbf, \vbf_{< l}) } } \\
    & \leq \norm1{ \ebf_{\bar{\ybf}_k} - \pi_{\imrmparams} (\cdot | \bar{\xbf}, \bar{\ybf}_{< k}) }_1 \cdot \norm1{ \ebf_{\vbf_{l}} - \pi_{\imrmparams} (\cdot | \xbf, \vbf_{< l}) }_\infty \\
    & \leq 2 \cdot 1 \\
    & = 2
    \text{\,,}
\end{split}
\]
where $\norm{\cdot}_1$ and $\norm{\cdot}_\infty$ denote the $\ell_1$ and $\ell_\infty$ norms, respectively.

\subsection{Derivation of Explicit Generative Reward Model Learning Dynamics (\cref{eq:ex_grm_dynamics})}
\label{app:proofs:grm_dynamics}

Under \cref{assumption:fixed_hidden_representations}, the trainable parameters of the EX-GRM are $\grmparams = \unembed$.
Recall (\cref{app:ex_grms:reward_dynamics}) that the EX-GRM loss over $(\xbf, \yw, \yl) \in \trainset$ is defined as:
\[
\ellgrm_{\grmparams} (\xbf, \yw, \yl) := - \ln \pi_{\grmparams} (\mathrm{Yes} | I \brk[s]{ \xbf, \yw}) - \ln \pi_{\grmparams} (\mathrm{No} | I \brk[s]{ \xbf, \yl})
\text{\,.}
\]
The loss gradient for $(\xbf, \yw, \yl) \in \trainset$ with respect to $\grmparams$ is therefore given by:
\[
\begin{split}
    \nabla \ellgrm_{\grmparams} (\xbf, \yw, \yl ) & = - \nabla \ln \pi_{\grmparams} (\mathrm{Yes} | I \brk[s]{ \xbf, \yw}) - \nabla \ln \pi_{\grmparams} (\mathrm{No} | I \brk[s]{ \xbf, \yl}) \\
    & = - \brk*{ \ebf_{\mathrm{Yes}} -  \pi_{\grmparams} (\cdot | I \brk[s]{ \xbf, \yw}) } \rep_{I \brk[s]{\xbf, \yw}}^\top - \brk*{ \ebf_{\mathrm{No}} -  \pi_{\grmparams} (\cdot | I \brk[s]{\xbf, \yl}) } \rep_{I \brk[s]{\xbf, \yl}}^\top
    \text{\,,}
\end{split}
\]
where $\ebf_{\mathrm{Yes}} \in \R^{\abs{\vocab}}$ and $\ebf_{\mathrm{No}} \in \R^{\abs{\vocab}}$ are the standard basis vectors corresponding to the tokens $\mathrm{Yes}$ and $\mathrm{No}$, respectively, and $\pi_{\grmparams} (\cdot | I \brk[s]{\xbf, \yw})$ and $\pi_{\grmparams} (\cdot | I \brk[s]{\xbf, \yl})$ are the vectors of probabilities that $\pi_{\grmparams}$ assigns to tokens conditioned on $I \brk[s]{\xbf, \yw}$ and $I \brk[s]{\xbf, \yl}$, respectively.

Furthermore, the reward gradient for $(\bar{\xbf}, \bar{\ybf})$ is:
\[
\begin{split}
    \nabla \grm (\bar{\xbf}, \bar{\ybf}) & = \nabla \pi_{\grmparams} (\mathrm{Yes} | I \brk[s]{\bar{\xbf}, \bar{\ybf}}) \\
    & = \pi_{\grmparams} (\mathrm{Yes} | I \brk[s]{\bar{\xbf}, \bar{\ybf}}) \cdot \nabla \ln \pi_{\grmparams} (\mathrm{Yes} | I \brk[s]{\bar{\xbf}, \bar{\ybf}}) \\
    & = \pi_{\grmparams} (\mathrm{Yes} | I \brk[s]{\bar{\xbf}, \bar{\ybf}}) \cdot \brk*{ \ebf_{\mathrm{Yes}} -  \pi_{\grmparams} (\cdot | I \brk[s]{\bar{\xbf}, \bar{\ybf}}) } \rep_{I \brk[s]{\bar{\xbf}, \bar{\ybf}}}^\top
    \text{\,.}
\end{split}
\]
Thus:
\[
\begin{split}
    & \inprod{ \nabla \grm (\bar{\xbf}, \bar{\ybf}) }{ - \nabla \ellgrm_{\grmparams} (\xbf, \yw, \yl)  } \\[0.3em]
    & \hspace{8mm} = \pi_{\grmparams} (\mathrm{Yes} | I \brk[s]{\bar{\xbf}, \bar{\ybf}} ) \cdot \inprod{ \brk*{ \ebf_{\mathrm{Yes}} -  \pi_{\grmparams} (\cdot | I \brk[s]{\bar{\xbf}, \bar{\ybf}} ) } \rep_{I \brk[s]{\bar{\xbf}, \bar{\ybf}}}^\top }{ \brk*{ \ebf_{\mathrm{Yes}} -  \pi_{\grmparams} (\cdot | I \brk[s]{\xbf, \yw}) } \rep_{I \brk[s]{\xbf, \yw}}^\top } \\
    & \hspace{12mm} + \pi_{\grmparams} (\mathrm{Yes} | I \brk[s]{\bar{\xbf}, \bar{\ybf}}) \cdot \inprod{ \brk*{ \ebf_{\mathrm{Yes}} -  \pi_{\grmparams} (\cdot | I \brk[s]{\bar{\xbf}, \bar{\ybf}} ) } \rep_{I \brk[s]{\bar{\xbf}, \bar{\ybf}}}^\top }{ \brk*{ \ebf_{\mathrm{No}} -  \pi_{\grmparams} (\cdot | I \brk[s]{\xbf, \yl}) } \rep_{I \brk[s]{\xbf, \yl}}^\top } \\[0.3em]
    & \hspace{8mm} = \pi_{\grmparams} (\mathrm{Yes} | I \brk[s]{\bar{\xbf}, \bar{\ybf}}) \brk2{ \gamma (\yw) \cdot \inprod{ \rep_{I \brk[s]{\bar{\xbf}, \bar{\ybf}}} }{ \rep_{I \brk[s]{\xbf, \yw}} } + \gamma (\yl) \cdot \inprod{ \rep_{I \brk[s]{\bar{\xbf}, \bar{\ybf}} } }{ \rep_{I \brk[s]{\xbf, \yl}} }  }
    \text{\,,}
\end{split}
\]
where the coefficients $\gamma (\yw)$ and $\gamma (\yl)$ are given by:
\[
\begin{split}
\gamma (\yw) & = \inprod{ \ebf_{\mathrm{Yes}} - \pi_{\grmparams} ( \cdot | I \brk[s]{\bar{\xbf}, \bar{\ybf}} ) }{ \ebf_{\mathrm{Yes}} - \pi_{\grmparams} ( \cdot | I \brk[s]{\xbf, \yw}) } \\
& = 1 - \pi_{\grmparams} (\mathrm{Yes} | I \brk[s]{\bar{\xbf}, \bar{\ybf}} ) - \pi_{\grmparams} (\mathrm{Yes} | I \brk[s]{\xbf, \yw}) + \inprod{ \pi_{\grmparams} (\cdot | I \brk[s]{\bar{\xbf}, \bar{\ybf}} ) }{ \pi_{\grmparams} (\cdot | I \brk[s]{\xbf, \yw}) } \text{\,,} \\[0.3em]
\gamma (\yl) & = \inprod{ \ebf_{\mathrm{Yes}} - \pi_{\grmparams} ( \cdot | I \brk[s]{\bar{\xbf}, \bar{\ybf}} ) }{ \ebf_{\mathrm{No}} - \pi_{\grmparams} ( \cdot | I \brk[s]{\xbf, \yl}) } \\
& = - \pi_{\grmparams} (\mathrm{No} | I \brk[s]{\bar{\xbf}, \bar{\ybf}} ) - \pi_{\grmparams} (\mathrm{Yes} | I \brk[s]{\xbf, \yl}) + \inprod{ \pi_{\grmparams} (\cdot | I \brk[s]{\bar{\xbf}, \bar{\ybf}}) }{ \pi_{\grmparams} (\cdot | I \brk[s]{\xbf, \yl}) }
\text{\,.}
\end{split}
\]
\cref{eq:ex_grm_dynamics} then follows by the above and the fact that:
\[
\Delta r_{\grmparams} (\bar{\xbf}, \bar{\ybf}) = - \eta \inprod{ \nabla \grm (\bar{\xbf}, \bar{\ybf}) }{ \nabla \ellgrm_{\grmparams} (\xbf, \yw, \yl) } + \OO (\eta^2)
\text{\,.}
\]
Lastly, to see that $\gamma (\yw) \in [0, 2]$ and $\gamma (\yl) \in [-2, 1]$, notice that:
\[
\begin{split}
\gamma (\yw) = & \brk*{ 1 - \pi_{\grmparams} (\mathrm{Yes} | I \brk[s]{\bar{\xbf}, \bar{\ybf}} ) } \brk*{  1 - \pi_{\grmparams} (\mathrm{Yes} | I \brk[s]{\xbf, \yw}) } \\
& + \sum\nolimits_{v \in \vocab \setminus \{ \mathrm{Yes} \}} \pi_{\grmparams} (v | I \brk[s]{\bar{\xbf}, \bar{\ybf}} ) \pi_{\grmparams} (v |I \brk[s]{\xbf, \yw}) 
\text{\,.}
\end{split}
\]
Since $\brk{ 1 - \pi_{\grmparams} (\mathrm{Yes} | I \brk[s]{\bar{\xbf}, \bar{\ybf}} ) } \brk{  1 - \pi_{\grmparams} (\mathrm{Yes} | I \brk[s]{\xbf, \yw}) } \in [0, 1]$ and
\[
0 \leq \sum\nolimits_{v \in \vocab \setminus \{ \mathrm{Yes} \}} \pi_{\grmparams} (v | I \brk[s]{\bar{\xbf}, \bar{\ybf}}) \pi_{\grmparams} (v |I \brk[s]{\xbf, \yw}) \leq \sum\nolimits_{v \in \vocab} \pi_{\grmparams} (v | I \brk[s]{\bar{\xbf}, \bar{\ybf}}) = 1
\text{\,,}
\]
it follows that $\gamma (\yw) \in [0, 2]$.
Turning our attention to $\gamma (\yl)$, it can be written as:
\[
    \gamma (\yl) = - \pi_{\grmparams} (\mathrm{No} | I \brk[s]{\bar{\xbf}, \bar{\ybf}}) - \pi_{\grmparams} (\mathrm{Yes} | I \brk[s]{\xbf, \yl}) + \sum_{v \in \vocab} \pi_{\grmparams} (v | I \brk[s]{\bar{\xbf}, \bar{\ybf}}) \pi_{\grmparams} (v | I \brk[s]{\xbf, \yl}) 
    \text{\,.}
\]
Since the first two terms on the right-hand side are bounded within $[-1, 0]$ and 
\[
    \begin{split}
    0 \leq \sum\nolimits_{v \in \vocab} \pi_{\grmparams} (v | I \brk[s]{\bar{\xbf}, \bar{\ybf}} ) \pi_{\grmparams} (v |I \brk[s]{\xbf, \yl}) \leq \sum\nolimits_{v \in \vocab} \pi_{\grmparams} (v | I \brk[s]{\bar{\xbf}, \bar{\ybf}}) = 1
    \text{\,,}
    \end{split}
\]
we get that $\gamma (\yl) \in [-2, 1]$.

\subsection{Proof of \cref{thm:ex_rm_im_rm_generalization_gap}}
\label{app:proofs:ex_rm_im_rm_generalization_gap}

We begin by expressing the loss $\loss$ (\cref{eq:rm_loss}) for $\exrm$ and $\imrm$ as logistic regression problems over different input spaces.
This is possible since only the linear head $\exrmparams = \rmhead$ and unembedding matrix $\imrmparams = \unembed$ are trained (\cref{assumption:fixed_hidden_representations}).
Let $\ell (a) := - \ln \sigma (a) = \ln \brk{ 1 + \exp (-a) }$ be the logistic loss, for $a \in \R$, and define for all $(\xbf, \yw, \yl) \in \trainset$:
\be
\begin{split}
   \linembedex (\xbf, \yw, \yl) & := \rep_{\xbf, \yw} - \rep_{\xbf, \yl} \in \R^D \text{\,,} \\[0.3em]
    \linembedim (\xbf, \yw, \yl) & := \beta \cdot \brk*{ \ebf_{\yw} \rep^\top_{\xbf} - \ebf_{\yl} \rep^\top_{\xbf} } \in \R^{\abs{\vocab} \times D}
    \text{\,,}
\end{split}
\label{eq:lin_embeds}
\ee
where $\ebf_{\ybf} \in \R^{\abs{\vocab}}$ denotes the standard basis vector corresponding to $\ybf \in \vocab$.
We can write the loss for an EX-RM as:
\be
\begin{split}
    \loss \brk*{ \exrm } & = \frac{1}{\abs{\trainset}} \sum\nolimits_{(\xbf, \yw, \yl) \in \trainset} \ell \brk*{ \exrm (\xbf, \yw) - \exrm (\xbf, \yl) } \\
    & = \frac{1}{\abs{\trainset}} \sum\nolimits_{(\xbf, \yw, \yl) \in \trainset} \ell \brk*{ \inprod{ \rmhead }{ \linembedex (\xbf, \yw, \yl) } }
    \text{\,.}
\end{split}
\label{eq:exrm_logistic_reg_loss}
\ee
This describes a logistic regression problem with respect to $\exrmparams = \rmhead$ and inputs $\linembedex (\xbf, \yw, \yl)$, for $(\xbf, \yw, \yl) \in \trainset$, whose labels are all positive.
On the other hand, for an IM-RM we have:
\[
\begin{split}
    \loss \brk*{ \imrm } & = \frac{1}{\abs{\trainset}} \sum\nolimits_{(\xbf, \yw, \yl) \in \trainset} \ell \brk*{ \imrm (\xbf, \yw) - \imrm (\xbf, \yl) } \\
    & = \frac{1}{\abs{\trainset}} \sum\nolimits_{(\xbf, \yw, \yl) \in \trainset} \ell \brk*{  \beta \ln \frac{ \pi_\imrmparams (\yw | \xbf) }{ \piref (\yw | \xbf) } -  \beta \ln \frac{ \pi_\imrmparams (\yl | \xbf) }{ \piref (\yl | \xbf) } } 
    \text{\,.}
\end{split}
\]
By \cref{assumption:single_token_responses}, responses in the training set $\trainset$ are of length one.
Meaning, $\yw, \yl \in \vocab$ for all $(\xbf, \yw, \yl) \in \trainset$.
Notice that for any $\ybf \in \vocab$:
 \[
    \ln \pi_{\imrmparams} (\ybf | \xbf) = \inprod{ \unembed_\ybf }{\rep_{\xbf}} - \ln \sum\nolimits_{\vbf \in \vocab} \exp \brk*{ \inprod{\unembed_\vbf}{ \rep_{\xbf} } } 
    \text{\,,}
\]
where $\unembed_{\vbf}$ denotes the row of $\unembed$ corresponding to $\vbf$.
Along with $\piref = \pi_{\imrmparams (0)}$, this leads to:
\be
\begin{split}
    \loss \brk*{ \imrm } & = \frac{1}{\abs{\trainset}} \sum\nolimits_{(\xbf, \yw, \yl) \in \trainset} \ell \brk*{ \beta \inprod{ \unembed_{\yw} - \unembed_{\yl} }{ \rep_\xbf } - \beta \inprod{ \unembed_{\yw} (0) - \unembed_{\yl} (0) }{ \rep_\xbf } } \\
    & = \frac{1}{\abs{\trainset}} \sum\nolimits_{(\xbf, \yw, \yl) \in \trainset} \ell \brk*{ \inprod{ \unembed }{ \linembedim (\xbf, \yw, \yl) } - \inprod{ \unembed (0) }{ \linembedim (\xbf, \yw, \yl)} }
\text{\,.}
\end{split}
\label{eq:imrm_logistic_reg_loss}
\ee
Up to constant bias terms, of the form $- \inprod{ \unembed (0) }{ \linembedim (\xbf, \yw, \yl)}$, this describes a logistic regression problem with respect to $\imrmparams = \unembed$ and inputs $\linembedim (\xbf, \yw, \yl)$, for $(\xbf, \yw, \yl) \in \trainset$, whose labels are all positive.

With \cref{eq:exrm_logistic_reg_loss,eq:imrm_logistic_reg_loss} in place, \cref{lemma:rm_smoothness} shows that both losses $\L (\exrm)$ and $\L (\imrm)$ are $B^2 \max \brk[c]{ \beta^2, 1}$-smooth.
Furthermore, based on \cref{assumption:realizable}, \cref{lemma:linearly_separable} proves that the corresponding logistic regression problems are over linearly separable data.
That is, there exist a linear head $\bar{\rmhead} \in \R^D$ and unembedding matrix $\bar{\unembed} \in \R^{\abs{\vocab} \times D}$ such that for all $(\xbf, \yw, \yl) \in \trainset$:
\[
\begin{split}
    \inprod{ \bar{\rmhead} }{ \linembedex (\xbf, \yw, \yl) } > 0 \quad , \quad
    \inprod{ \bar{\unembed} }{ \linembedim (\xbf, \yw, \yl) } > 0 \text{\,.}
\end{split}
\]
This implies that $\inf_{\exrmparams} \loss (\exrm) = \inf_{\imrmparams} \loss (\imrm) = 0$ as one can reduce the loss to be arbitrarily close to zero by scaling up the norms of the linear separators $\bar{\rmhead}$ and $\bar{\unembed}$ (notice that $\ell (a) \geq 0$ for all $a \in \R$ and $\ell (a) \to 0 $ when $a \to \infty$).
Next, we rely on these observations to establish the three parts of \cref{thm:ex_rm_im_rm_generalization_gap}.

\medskip

\textbf{Both the EX-RM and IM-RM perfectly fit the training set.}
As shown above, $\loss (\exrm)$ and $\loss (\imrm)$ can be formulated as logistic regression problems over linearly separable data (\cref{eq:exrm_logistic_reg_loss}, \cref{eq:imrm_logistic_reg_loss}, and \cref{lemma:linearly_separable}).
Since the losses are convex and $B^2 \max \brk[c]{ \beta^2, 1}$-smooth (\cref{lemma:rm_smoothness}), standard arguments from the convex optimization literature imply that gradient descent with learning rate $\eta < 2B^{-2} \min \brk[c]{\beta^{-2}, 1}$ minimizes them to their infimal value of zero (\eg, see Lemma~1 in \citet{soudry2018implicit}).
That is, $\lim_{t \to \infty} \loss (r_{\exrmparams (t)}) = \lim_{t \to \infty} \loss ( r_{ \imrmparams (t) } ) = 0$.

The fact that the training loss converges to zero directly implies that the accuracy of the EX-RM and IM-RM over $\trainset$ converges to one.
To see it is so, notice that there exists a time $t' \geq 0$ such that for all $t \geq t'$ it holds that \smash{$\loss (r_{\exrmparams (t)}) < \frac{\ln 2}{\abs{\trainset}}$ and $\loss (r_{\imrmparams (t)}) < \frac{ \ln 2 }{\abs{\trainset}}$}.
Hence, for any training example $(\xbf, \yw, \yl) \in \trainset$:
\[
\begin{split}
    \ell \brk*{ r_{\exrmparams (t)} (\xbf, \yw) - r_{\exrmparams (t)} (\xbf, \yl) } & < \ln 2 \text{\,,} \\[0.3em]
    \ell \brk*{ r_{\imrmparams (t)} (\xbf, \yw) - r_{\imrmparams (t)} (\xbf, \yl) } & < \ln 2 \text{\,.}
\end{split}
\]
This holds if and only if $r_{\exrmparams (t)} (\xbf, \yw) > r_{\exrmparams (t)} (\xbf, \yl)$ and $r_{\imrmparams (t)} (\xbf, \yw) > r_{\imrmparams (t)} (\xbf, \yl)$  for all $(\xbf, \yw, \yl) \in \trainset$.
Thus, $\acc_{\trainset} (r_{\exrmparams (t)}) = \acc_{\trainset} (r_{\imrmparams (t)}) = 1$ for all $t \geq t'$, \ie, $\lim_{t \to \infty} \acc_{\trainset} (r_{\exrmparams (t)}) = \lim_{t \to \infty} \acc_{\trainset} (r_{\imrmparams (t)}) = 1$.

\medskip

\textbf{The IM-RM fails to generalize to unseen tokens.}
For any $\imrmparams = \unembed$, the gradient of $\loss (\imrm)$ is given by:
\[
    \begin{split}
        \nabla \loss (\imrm) & = \frac{1}{\abs{\trainset}} \sum\nolimits_{(\xbf, \yw, \yl) \in \trainset} \ell' \brk*{ \imrm (\xbf, \yw) - \imrm (\xbf, \yl) } \cdot \linembedim (\xbf, \yw, \yl) \\
        & = \frac{1}{\abs{\trainset}} \sum\nolimits_{(\xbf, \yw, \yl) \in \trainset} \beta \ell' \brk*{ \imrm (\xbf, \yw) - \imrm (\xbf, \yl) } \cdot \brk*{ \ebf_{\yw} - \ebf_{\yl} } \rep^\top_{\xbf}
        \text{\,.}
    \end{split}
\]
Notice that for any token $\ybf \in \vocab$ that does not appear as a response in $\trainset$, the gradient with respect to $\unembed_\ybf$~---~the row corresponding to $\ybf$ in $\unembed$~---~is zero.
This implies that $\unembed_{\ybf} (t) = \unembed_{\ybf} (0)$ for any such $\ybf \in \vocab$ and all $t \geq 0$.
Hence, for all $(\xbf, \yw, \yl) \in \unseenset$ and $t \geq 0$, because the evaluation responses $\yw, \yl \in \vocab$ do not appear in $\trainset$,  we have that:
\[
\begin{split}
    r_{\imrmparams (t) } (\xbf, \yw) - r_{\imrmparams (t)} (\xbf, \yl) & =  \beta \ln \frac{ \pi_{\imrmparams (t)} (\yw | \xbf) }{ \piref (\yw | \xbf) } -  \beta \ln \frac{ \pi_{\imrmparams (t)} (\yl | \xbf) }{ \piref (\yl | \xbf) } \\
    & = \beta \inprod{ \unembed_{\yw} (t) - \unembed_{\yl} (t) }{ \rep_{\xbf} } - \beta \inprod{ \unembed_{\yw} (0) - \unembed_{\yl} (0) }{ \rep_{\xbf} } \\
    & = 0
    \text{\,,}
\end{split}
\]
from which it follows that $r_{\imrmparams (t) } (\xbf, \yw) = r_{\imrmparams (t)} (\xbf, \yl)$ and $\acc_{\unseenset} (r_{\imrmparams (t)}) = 0.5$.

\medskip

\textbf{The EX-RM can generalize via hidden representations.}
By Theorem~3 of \citet{soudry2018implicit}, in logistic regression problems with linearly separable data, gradient descent converges in direction to the max-margin separator.
Invoking Theorem~3 of \citet{soudry2018implicit} for the EX-RM loss implies that $\rmhead (t)$ converges in direction to $\rmhead^*$ defined in \cref{eq:max_margin_separator}, \ie, $\lim_{t \to \infty} \rmhead (t) / \norm{\rmhead (t)} = \rmhead^* / \norm{\rmhead^*}$.
Note that the requirements on the learning rate are satisfied: it needs to be smaller than $2 / \alpha$, with $\alpha$ denoting the smoothness coefficient of the loss, which in our case is $B^2 \max \brk[c]{ \beta^2, 1}$ (\cref{lemma:rm_smoothness}).
Thus, for all $(\xbf, \yw, \yl) \in \unseenset$ we get that:
\[
\begin{split}
    \lim_{t \to \infty} \inprod{  \frac{ \rmhead (t)}{\norm*{\rmhead (t)}} }{ \rep_{\xbf, \yw} - \rep_{\xbf, \yl} }  & = \inprod{ \frac{\rmhead^*}{\norm*{\rmhead^*}} }{ \rep_{\xbf, \yw} - \rep_{\xbf, \yl} }
    \text{\,.}
\end{split}   
\]
If $\inprod{ \rmhead^* }{ \rep_{\xbf, \yw} - \rep_{\xbf, \yl} } > 0$, then there exists a time $t' \geq 0$ such that for all $t \geq t'$:
\[
\inprod{ \frac{\rmhead (t)}{ \norm*{\rmhead (t)}} }{ \rep_{\xbf, \yw} - \rep_{\xbf, \yl} } > 0
\text{\,,}
\]
and so:
\[
\inprod{\rmhead (t)}{ \rep_{\xbf, \yw} - \rep_{\xbf, \yl} } = r_{\exrmparams (t)} (\xbf, \yw) - r_{\exrmparams (t)} (\xbf, \yl) > 0 
\text{\,.}
\]
By defining $t_0$ to be the maximal such $t'$ over all $(\xbf, \yw, \yl) \in \unseenset$ for which
\[
\inprod{ \rmhead^* }{ \rep_{\xbf, \yw} - \rep_{\xbf, \yl} } > 0
\text{\,,}
\]
we arrive at the desired conclusion (note that it is possible to take the maximum over such times $t'$ since $\unseenset$ is finite).
Namely, for all $t \geq t_0$ the EX-RM $r_{\exrmparams (t)}$ accurately ranks at least the responses that $\rmhead^*$ ranks correctly, \ie:
\[
\acc_\unseenset ( r_{\exrmparams (t)} ) \geq \frac{ \abs*{ \brk[c]*{ (\xbf, \yw, \yl) \in \unseenset : \inprodbig{ \rmhead^* }{ \rep_{\xbf, \yw} } > \inprodbig{ \rmhead^* }{ \rep_{\xbf, \yl} } } } }{ \abs{\unseenset} }
\text{\,.}
\]
\qed

\subsubsection{Auxiliary Lemmas}
\label{app:proofs:ex_rm_im_rm_generalization_gap:lemmas}

\begin{lemma}
\label{lemma:logistic_reg_smoothness}
Let $\ell: \R \to \R_{\geq 0}$ denote the logistic loss, \ie, $\ell (a) := - \ln \sigma (a)$ for $a \in \R$, where $\sigma: \R \to [0, 1]$ is the sigmoid function.
Then, for all $a \in \R$:
\[
\abs*{ \ell'' (a) } \leq \frac{1}{4}
\text{\,.}
\]
\end{lemma}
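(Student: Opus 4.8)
The plan is to compute $\ell''$ explicitly and bound it. First I would rewrite the logistic loss in the form $\ell(a) = -\ln \sigma(a) = \ln\brk*{1 + e^{-a}}$, using $\sigma(a) = 1/(1 + e^{-a})$. Differentiating once gives $\ell'(a) = -\frac{e^{-a}}{1 + e^{-a}} = -\brk*{1 - \sigma(a)} = \sigma(a) - 1$, where I used the identity $1 - \sigma(a) = e^{-a}/(1 + e^{-a})$.

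Next I would differentiate a second time. Since $\sigma'(a) = \sigma(a)\brk*{1 - \sigma(a)}$ (the standard sigmoid derivative identity, which itself follows from the quotient rule), we get $\ell''(a) = \sigma(a)\brk*{1 - \sigma(a)}$. In particular $\ell''(a) \geq 0$, so $\abs{\ell''(a)} = \sigma(a)\brk*{1 - \sigma(a)}$.

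Finally I would bound $\sigma(a)\brk*{1 - \sigma(a)}$. Writing $p := \sigma(a) \in (0, 1)$, the function $p \mapsto p(1 - p)$ is maximized at $p = 1/2$ with value $1/4$ (equivalently, by AM--GM, $p(1-p) \leq \brk*{\tfrac{p + (1 - p)}{2}}^2 = \tfrac14$). Hence $\abs{\ell''(a)} = p(1 - p) \leq \tfrac14$ for all $a \in \R$, as claimed.

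There is no real obstacle here; the only place to be careful is the sign bookkeeping in the first derivative and correctly invoking the sigmoid derivative identity $\sigma' = \sigma(1 - \sigma)$. This lemma will presumably be used (together with the fact that each summand of $\loss$ is a composition of $\ell$ with a linear map whose input vectors have bounded norm) to establish the $B^2 \max\brk[c]{\beta^2, 1}$-smoothness of $\loss(\exrm)$ and $\loss(\imrm)$ invoked in the proof of \cref{thm:ex_rm_im_rm_generalization_gap}.
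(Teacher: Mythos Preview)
Your proof is correct and essentially identical to the paper's: both compute $\ell''(a) = \sigma(a)\brk*{1 - \sigma(a)}$ (the paper writes it as $\sigma(a)\sigma(-a)$, which is the same thing) and then bound $p(1-p) \leq 1/4$. Your remark about the lemma's role in establishing smoothness of $\loss$ is also exactly how the paper uses it.
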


\begin{proof}
For all $a \in \R$, the derivative of the sigmoid satisfies $\sigma' (a) = \sigma (a) \sigma (-a)$.
Thus, a straightforward differentiation of $\ell$ gives $\ell' (a) = - \sigma(-a)$ and:
\[
\ell'' (a) = \sigma (a) \sigma (-a)
\text{\,.}
\]
Noticing that $\sigma (-a) = 1 - \sigma (a)$ and that the maximal value of $p (1-p)$ for $p \in [0, 1]$ is $1 / 4$, we conclude that $\abs{\ell '' (a)} \leq 1 / 4$.
\end{proof}

\begin{lemma}
\label{lemma:rm_smoothness}
Under the setting of \cref{thm:ex_rm_im_rm_generalization_gap}, both the loss $\loss (\exrm)$ with respect to $\exrmparams = \rmhead$ and the loss $\loss (\imrm)$ with respect to $\imrmparams = \unembed$ are $B^2 \max \brk[c]{ \beta^2, 1}$-smooth, \ie, the spectral norm of their Hessians is bounded by $B^2 \max \brk[c]{ \beta^2, 1}$.
\end{lemma}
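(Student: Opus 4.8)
The plan is to compute the Hessians of the two reformulated losses and bound their spectral norms term by term, leaning on the fact that $\abs{\ell''} \le 1/4$ for the logistic loss (\cref{lemma:logistic_reg_smoothness}). Recall from \cref{eq:exrm_logistic_reg_loss} and \cref{eq:imrm_logistic_reg_loss} that, with fixed hidden representations, $\loss(\exrm)$ is a logistic regression in $\exrmparams = \rmhead$ with inputs $\linembedex(\xbf, \yw, \yl) \in \R^D$, while $\loss(\imrm)$ is, up to constant bias terms, a logistic regression in $\imrmparams = \unembed$ with matrix inputs $\linembedim(\xbf, \yw, \yl) \in \R^{\abs{\vocab} \times D}$ (\cref{eq:lin_embeds}).

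First I would handle the EX-RM. Differentiating twice, the Hessian of $\loss(\exrm)$ with respect to $\rmhead$ equals $\frac{1}{\abs{\trainset}} \sum_{(\xbf, \yw, \yl) \in \trainset} \ell''\brk{ \inprod{\rmhead}{\linembedex(\xbf, \yw, \yl)} } \linembedex(\xbf, \yw, \yl) \linembedex(\xbf, \yw, \yl)^\top$. Using the triangle inequality for the spectral norm, $\abs{\ell''} \le 1/4$, and $\norm{\vbf \vbf^\top} = \norm{\vbf}^2$, this is bounded by $\tfrac14 \max_{(\xbf, \yw, \yl) \in \trainset} \norm{\linembedex(\xbf, \yw, \yl)}^2$. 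Since $\linembedex(\xbf, \yw, \yl) = \rep_{\xbf, \yw} - \rep_{\xbf, \yl}$ and every hidden representation in $\trainset$ has norm at most $B$, the triangle inequality gives $\norm{\linembedex(\xbf, \yw, \yl)} \le 2B$, so the loss is $B^2$-smooth, which is at most $B^2 \max\brk[c]{\beta^2, 1}$.

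Next I would repeat the argument for the IM-RM, the only subtlety being that $\unembed$ is matrix-valued, so one should identify $\R^{\abs{\vocab} \times D}$ with $\R^{\abs{\vocab} D}$ and measure inputs in Frobenius norm. The constant bias terms $- \inprod{\unembed(0)}{\linembedim(\xbf, \yw, \yl)}$ only translate the arguments of $\ell$ and leave the Hessian unchanged, so it equals $\frac{1}{\abs{\trainset}} \sum_{(\xbf, \yw, \yl) \in \trainset} \ell''(\cdot)\, \vect{\linembedim(\xbf, \yw, \yl)} \vect{\linembedim(\xbf, \yw, \yl)}^\top$ and hence has spectral norm at most $\tfrac14 \max_{(\xbf, \yw, \yl) \in \trainset} \norm{\linembedim(\xbf, \yw, \yl)}_F^2$. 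Because $\linembedim(\xbf, \yw, \yl) = \beta (\ebf_{\yw} - \ebf_{\yl}) \rep_\xbf^\top$, its Frobenius norm is $\beta \norm{\ebf_{\yw} - \ebf_{\yl}} \norm{\rep_\xbf} \le \sqrt{2}\, \beta B$, which yields a smoothness constant of $\tfrac12 \beta^2 B^2 \le B^2 \max\brk[c]{\beta^2, 1}$. I do not anticipate any real difficulty here — the only place to be careful is the matrix parameterization of the IM-RM, where one must track that it is precisely $\norm{\linembedim(\xbf, \yw, \yl)}_F^2$ (equivalently $\beta^2 \norm{\ebf_{\yw} - \ebf_{\yl}}^2 \norm{\rep_\xbf}^2$) that governs the Hessian's spectral norm.
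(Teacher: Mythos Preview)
Your proposal is correct and follows essentially the same approach as the paper: compute the Hessian of each logistic-regression reformulation, invoke $\abs{\ell''}\le 1/4$, and bound $\norm{\linembedex}$ and $\norm{\linembedim}$ via the definition of $B$. The only cosmetic difference is that the paper uses the triangle inequality to get $\norm{\linembedim}\le 2\beta B$, whereas you exploit the rank-one structure $\beta(\ebf_{\yw}-\ebf_{\yl})\rep_\xbf^\top$ to obtain the slightly sharper $\sqrt{2}\,\beta B$; either way the result sits under $B^2\max\{\beta^2,1\}$.
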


\begin{proof}
Starting with $\loss (\exrm)$, by \cref{eq:exrm_logistic_reg_loss} and straightforward computations we can write its Hessian as:
\[
\nabla^2 \loss (\exrm) = \frac{1}{\abs{\trainset}} \sum_{(\xbf, \yw, \yl) \in \trainset} \ell'' \brk*{ \inprod{ \rmhead }{ \linembedex (\xbf, \yw, \yl) } } \cdot \linembedex (\xbf, \yw, \yl) \linembedex (\xbf, \yw, \yl)^\top
\text{\,.}
\]
Since $\linembedex (\xbf, \yw, \yl) = \rep_{\xbf, \yw} - \rep_{\xbf, \yl}$, by the triangle inequality $\norm{ \linembedex (\xbf, \yw, \yl) } \leq 2B$ for all $(\xbf, \yw, \yl) \in \trainset$.
Furthermore, by \cref{lemma:logistic_reg_smoothness} we have that $\ell'' \brk*{ \inprod{ \rmhead }{ \linembedex (\xbf, \yw, \yl) } } \leq 1 / 4$.
Thus:
\[
\begin{split}
\norm*{ \nabla^2 \loss (\exrm) }_2 & \leq \frac{1}{ 4 \abs{\trainset} } \sum\nolimits_{(\xbf, \yw, \yl) \in \trainset} \norm*{ \linembedex (\xbf, \yw, \yl) }^2 \leq B^2
\text{\,,}
\end{split}
\]
where $\norm{ \cdot }_2$ denotes the spectral norm and $\norm{ \cdot }$ denotes the Euclidean norm.
Multiplying $B^2$ by $\max \brk[c]{ \beta^2, 1}$ can only increase it.
Thus, $\loss (\exrm)$ is $B^2 \max \brk[c]{ \beta^2, 1}$-smooth.

For $\loss (\imrm)$, an analogous derivation leads to:
\[
\norm*{ \nabla^2 \loss (\imrm) }_2 \leq \frac{1}{ 4 \abs{\trainset} } \sum\nolimits_{(\xbf, \yw, \yl) \in \trainset} \norm*{ \linembedim (\xbf, \yw, \yl) }^2
\text{\,.}
\]
Since $\linembedim (\xbf, \yw, \yl) = \beta \cdot \brk*{ \ebf_{\yw} \rep^\top_{\xbf} - \ebf_{\yl} \rep^\top_{\xbf} }$, by the triangle inequality $\norm{ \linembedim (\xbf, \yw, \yl) } \leq 2 B \beta$ for all $(\xbf, \yw, \yl) \in \trainset$.
We therefore have that $\loss (\imrm)$ is $B^2 \max \brk[c]{ \beta^2, 1}$-smooth as well:
\[
\norm*{ \nabla^2 \loss (\imrm) }_2 \leq B^2 \beta^2 \leq B^2 \max \brk[c]{ \beta^2, 1}
\text{\,.}
\]
\end{proof}

\begin{lemma}
\label{lemma:linearly_separable}
Under the setting of \cref{thm:ex_rm_im_rm_generalization_gap}, there exist a linear head $\bar{\rmhead} \in \R^D$ and unembedding matrix $\bar{\unembed} \in \R^{\abs{\vocab} \times D}$ such that for all $(\xbf, \yw, \yl) \in \trainset$:
\[
\begin{split}
    \inprod{ \bar{\rmhead} }{ \linembedex (\xbf, \yw, \yl) } =  \inprod{\bar{\rmhead}}{ \rep_{\xbf, \yw} - \rep_{\xbf, \yl} } & > 0 \text{\,,} \\[0.3em]
    \inprod{ \bar{\unembed} }{ \linembedim (\xbf, \yw, \yl) } = \beta \inprod{ \bar{\unembed}_{\yw} - \bar{\unembed}_{\yl} }{ \rep_{\xbf} } & > 0 \text{\,,}
\end{split}
\]
where $\linembedex$ and $\linembedim$ are defined in \cref{eq:lin_embeds} and $\bar{\unembed}_\vbf$ denotes the row of $\bar{\unembed}$ corresponding to $\vbf$, for any token $\vbf \in \vocab$.
\end{lemma}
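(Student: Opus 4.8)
The plan is to read the two separators directly off the parameters supplied by the realizability assumption (Assumption~\ref{assumption:realizable}), after translating "perfect training accuracy" into strict inner-product inequalities. Since $\acc_{\trainset}(r)=1$ and a tie on any training example would contribute only $\tfrac12$ to the accuracy, realizability actually yields strict inequalities $r(\xbf,\yw) > r(\xbf,\yl)$ for every $(\xbf,\yw,\yl)\in\trainset$, for both the EX-RM and the IM-RM.

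For the EX-RM this is immediate: under \cref{assumption:fixed_hidden_representations} the realizing parameters are $\exrmparams=\rmhead$ and $\exrm(\xbf,\yw)-\exrm(\xbf,\yl)=\inprod{\rmhead}{\rep_{\xbf,\yw}-\rep_{\xbf,\yl}}$, so $\bar{\rmhead}:=\rmhead$ satisfies $\inprod{\bar{\rmhead}}{\linembedex(\xbf,\yw,\yl)}>0$ for all training examples. For the IM-RM, let $\imrmparams=\unembed$ be the realizing unembedding matrix (again under \cref{assumption:fixed_hidden_representations}), so $\imrm(\xbf,\yw)>\imrm(\xbf,\yl)$. I would then use \cref{assumption:single_token_responses} to write, for any single-token $\ybf\in\vocab$, $\ln\pi_{\imrmparams}(\ybf\mid\xbf)=\inprod{\unembed_\ybf}{\rep_\xbf}-\ln\sum_{\vbf\in\vocab}\exp(\inprod{\unembed_\vbf}{\rep_\xbf})$; the log-partition term cancels in the difference, giving $\ln\pi_{\imrmparams}(\yw\mid\xbf)-\ln\pi_{\imrmparams}(\yl\mid\xbf)=\inprod{\unembed_\yw-\unembed_\yl}{\rep_\xbf}$, and the same identity holds for the reference $\piref=\pi_{\imrmparams(0)}$ with the initial rows $\unembed(0)_\yw,\unembed(0)_\yl$. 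Hence $\imrm(\xbf,\yw)-\imrm(\xbf,\yl)=\beta\inprod{(\unembed_\yw-\unembed(0)_\yw)-(\unembed_\yl-\unembed(0)_\yl)}{\rep_\xbf}$, so setting $\bar{\unembed}:=\unembed-\unembed(0)$ gives $\inprod{\bar{\unembed}}{\linembedim(\xbf,\yw,\yl)}=\beta\inprod{\bar{\unembed}_\yw-\bar{\unembed}_\yl}{\rep_\xbf}>0$ for all $(\xbf,\yw,\yl)\in\trainset$, as required.

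There is no genuine obstacle in this lemma; it is essentially a restatement of realizability. The only points demanding care are that $\acc_{\trainset}=1$ must be unpacked into strict (not merely weak) inequalities, and that the IM-RM reward in the theorem's setting carries the reference shift $\piref=\pi_{\imrmparams(0)}$, whose contribution is itself a linear functional of $\rep_\xbf$ through the initial unembedding rows and therefore folds cleanly into the separator $\bar{\unembed}$.
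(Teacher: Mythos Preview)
Your proposal is correct and follows essentially the same route as the paper: for the EX-RM you take $\bar{\rmhead}:=\rmhead$ directly from the realizing parameters, and for the IM-RM you expand the softmax log-probabilities, cancel the partition function, and set $\bar{\unembed}:=\unembed-\unembed(0)$ to absorb the reference shift. Your explicit remark that $\acc_{\trainset}=1$ forces strict (not weak) inequalities, via the $\tfrac12$ tie term in \cref{def:accuracy}, is a helpful clarification the paper leaves implicit.
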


\begin{proof}
Starting with the EX-RM, by \cref{assumption:realizable} we know that there exists $\exrmparams = \rmhead \in \R^D$ such that for all $(\xbf, \yw, \yl) \in \trainset$:
\[
    0 < \exrm (\xbf, \yw) - \exrm (\xbf, \yl) = \inprod{ \rmhead }{ \rep_{\xbf, \yw} - \rep_{\xbf, \yl} } = \inprod{ \rmhead }{ \linembedex (\xbf, \yw, \yl) }
    \text{\,.}
\]
Thus, $\bar{\rmhead} := \rmhead$ satisfies the requirement of the lemma.

For the IM-RM, by \cref{assumption:realizable} we know that there exists $\imrmparams = \unembed \in \R^{\abs{\vocab} \times D}$ such that for all $(\xbf, \yw, \yl) \in \trainset$:
\[
\begin{split}
        0 < \imrm (\xbf, \yw) - \imrm (\xbf, \yl) & = \beta \ln \frac{ \pi_{\imrmparams} (\yw | \xbf) }{ \piref (\yw | \xbf) } -  \beta \ln \frac{ \pi_{\imrmparams} (\yl | \xbf) }{ \piref (\yl | \xbf) } \\
        & = \beta \inprod{ \unembed_{\yw} - \unembed_{\yl} }{ \rep_\xbf } - \beta \inprod{ \unembed_{\yw} (0) - \unembed_{\yl} (0) }{ \rep_\xbf } \\
        & = \inprod{ \unembed - \unembed (0) }{ \linembedim (\xbf, \yw, \yl) }
    \text{\,.}
\end{split}
\]
Thus, $\bar{\unembed} := \unembed - \unembed (0)$ satisfies the requirement of the lemma.
\end{proof}

	\section{Additional Experiments}
\label{app:experiments_further}

\subsection{Hamiltonian Cycle Verification (\cref{sec:gen_verification:experiments})}
\label{app:experiments_further:ham_cycle}

In the experiments of \cref{fig:ham_cycle}, we used a learning rate of 1e-6 and set $\beta$ to 0.01 for IM-RMs.
We additionally considered lower and higher values for both hyperparameters (namely, learning rates 5e-7 and 5e-6 and $\beta$ coefficients 0.005, 0.05, and 0.1).
The results were analogous.
That is, both the EX-RMs and IM-RMs were able to accurately distinguish between valid and invalid Hamiltonian cycles, although the IM-RMs were unable to generate even a single Hamiltonian cycle.

\subsection{Controlled Experiments: Token-Level Shift (\cref{sec:empirical:controlled})}
\label{app:experiments_further:controlled}

In the experiments of \cref{fig:controlled_experiments}, we used a learning rate of 1e-6 and set $\beta$ to 0.01 for IM-RMs.
We additionally considered lower and higher values for both hyperparameters (namely, learning rates 5e-7 and 5e-6 and $\beta$ coefficients 0.005, 0.05, and 0.1).
The results were analogous.
That is, IM-RMs were extremely inaccurate over paraphrased responses, whereas EX-RMs achieved perfect accuracy (for both training and test prompts).

Furthermore, we ran the same experiments with explicit generative reward models (EX-GRMs; see \cref{app:ex_grms}) in place of EX-RMs.
We found that EX-GRMs exhibit similar trends to EX-RMs (\ie, generalize perfectly to paraphrased responses).

\subsection{Real-World Experiments: Token-Level and Domain Shifts (\cref{sec:empirical:real_world})}
\label{app:experiments_further:real_world}

\textbf{EX-RMs vs IM-RMs.}
Listed below are additional results, omitted from \cref{sec:empirical:real_world}, comparing the generalization of EX-RMs and IM-RMs.
\begin{itemize}[leftmargin=8mm]
    \item \cref{table:ex_vs_im_rm_win_rate_uf_train_per_dataset,table:ex_vs_im_rm_win_rate_rewardmath_train_per_dataset} provide a per evaluation dataset breakdown of the results in \cref{fig:ex_vs_im_rm_generalization_uf,fig:ex_vs_im_rm_generalization_math}, respectively.
    
    \item \cref{table:ex_vs_im_rm_acc_and_margin_uf_train_per_dataset,table:ex_vs_im_rm_acc_and_margin_rewardmath_train_per_dataset} provide a per evaluation dataset breakdown of the results in \cref{table:ex_vs_im_rm_acc_and_margin} for the general chat and math settings, respectively.
    
    \item \cref{fig:ex_vs_im_rm_diff_hyperparams} demonstrates that the results in \cref{sec:empirical:real_world} are robust to different learning rates and $\beta$ coefficients for IM-RMs.
    
    \item \cref{fig:bar_plots_uf,fig:bar_plots_math} supplement \cref{fig:ex_vs_im_rm_generalization_uf,fig:ex_vs_im_rm_generalization_math}, respectively, by including the accuracy of reward models per initial language model and evaluation dataset.
\end{itemize}

\textbf{Evidence against alternative hypotheses.}
Recall, our analysis (\cref{sec:analysis}) indicates that IM-RMs are more sensitive to superficial token-level cues, and thus often generalize worse than EX-RMs.
We provide evidence against two alternative potential sources for the generalization gap between EX-RMs and IM-RMs.
First, the reward of an EX-RM is based on the hidden representation of the whole response, while IM-RMs depend also on the hidden representations of intermediate tokens in the response.
Intuitively, the hidden representations of intermediate tokens may be misleading since they do not capture the full meaning of the response.
Second, the reward of an IM-RM is shifted by the log probability of a reference distribution, which is not the case for EX-RMs.
\cref{fig:alternative} demonstrates that these differences do not explain the generalization gap, as the gap remains when considering EX-RMs trained over the hidden representations of all intermediate tokens and IM-RMs without a reference distribution.\footnote{
These results do not preclude the possibility that, in certain settings where the last token can completely change the meaning of a response, the IM-RM's reliance on hidden representations of intermediate tokens may lead to worse generalization than EX-RMs.
}

\textbf{EX-GRMs vs IM-RMs.}
Listed below are additional experiments, omitted from \cref{sec:empirical:real_world}, comparing the generalization of explicit generative reward models (EX-GRMs; see \cref{app:ex_grms}) and IM-RMs.
Notably, we find that the results of EX-GRMs are analogous to those of EX-RMs: they are more robust to token-level shifts than IM-RMs, as anticipated by our analysis (\cref{app:ex_grms:reward_dynamics}).
\begin{itemize}[leftmargin=8mm]
    \item \cref{fig:ex_grm_vs_im_rm_generalization} presents the results of an experiment identical to that of \cref{fig:ex_vs_im_rm_generalization_uf,fig:ex_vs_im_rm_generalization_math}, except that it compares EX-GRMs (instead of EX-RMs) to IM-RMs.
    
    \item \cref{table:ex_grm_vs_im_rm_acc_and_margin} supplements \cref{fig:ex_grm_vs_im_rm_generalization} by reporting the accuracy and absolute (normalized) reward margin of EX-GRMs and IM-RMs over the different evaluation categories.
    
    \item \cref{table:ex_grm_vs_im_rm_win_rate_uf_train_per_dataset,table:ex_grm_vs_im_rm_win_rate_rewardmath_train_per_dataset} provide a per evaluation dataset breakdown of the results in \cref{fig:ex_grm_vs_im_rm_generalization}.
    
    \item \cref{table:ex_grm_vs_im_rm_acc_and_margin_uf_train_per_dataset,table:ex_grm_vs_im_rm_acc_and_margin_rewardmath_train_per_dataset} provide a per evaluation dataset breakdown of the results in \cref{table:ex_grm_vs_im_rm_acc_and_margin} for the general chat and math settings, respectively.
    
    \item \cref{fig:bar_plots_uf,fig:bar_plots_math} supplement \cref{fig:ex_grm_vs_im_rm_generalization} by including the accuracy, per initial language model and evaluation dataset, for the general chat and math settings of \cref{sec:empirical:real_world}, respectively.
\end{itemize}

\section{Additional Implementation Details}
\label{app:experiments_details}

In this appendix, we provide implementation details omitted from \cref{sec:gen_verification:experiments}, \cref{sec:empirical}, and \cref{,app:experiments_further}.
Code for reproducing our results, based on the PyTorch~\citep{paszke2017automatic} and Hugging Face~\citep{wolf2019huggingface} frameworks,\ifdefined\CAMREADY
~can be found at {\small \url{https://github.com/princeton-pli/exrm-vs-imrm}}.
\else
~will be made publicly available.
\fi

In all experiments, for each prompt and response, we used the following chat template (unless the model already had a chat template, in which case we used the original chat template):
\[
\text{[USER]\textbf{\texttt{[prompt]}}[ASSISTANT]\textbf{\texttt{[response]}}[EOS]}
\]
where [USER], [ASSISTANT], and [EOS] are defined as special tokens.

\subsection{Hamiltonian Cycle Verification (\cref{sec:gen_verification:experiments})}
\label{app:experiments_details:ham_cycle}

\textbf{Data.}
Each example in the dataset consisted of: \emph{(i)} a prompt that describes an undirected graph with $N \in \N$ vertices, \emph{(ii)} a chosen response, which describes a permutation of vertices that forms a Hamiltonian cycle in the graph, and \emph{(iii)} a rejected response, which describes a permutation of vertices that does not form a Hamiltonian cycle in the graph.
Below are examples for the prompt and response formats, where vertices are denoted by integers from $0$ to $N - 1$, the token [sep] is used to separate vertices and edges, and the token [edge\_sep] is used to separate the two vertices of an edge.
The examples are for graphs with $N = 10$ vertices.


\begin{nicebox}[label=ham_cycle_prompt]{Hamiltonian cycle preference dataset: prompt example}
\vspace{-2.5mm}
{\small
\[
\hspace{-12.5mm}
\begin{split}
& \text{Vertices: [sep]0[sep]1[sep]2[sep]3[sep]4[sep]5[sep]6[sep]7[sep]8[sep]9\textbackslash n}\\
& \text{Edges: [sep]3[edge\_sep]4[sep]0[edge\_sep]2[sep]5[edge\_sep]9[sep]2[edge\_sep]7[sep]} \\
& \text{1[edge\_sep]2[sep]0[edge\_sep]9[sep]4[edge\_sep]6[sep]1[edge\_sep]5[sep]1[edge\_sep]3}\\
& \text{[sep]5[edge\_sep]7[sep]6[edge\_sep]8[sep]1[edge\_sep]8[sep]2[edge\_sep]3[sep]3[edge\_sep]6} \\
& \text{[sep]1[edge\_sep]7[sep]2[edge\_sep]8}
\end{split}
\]
}
\end{nicebox}


\begin{nicebox}[label=ham_cycle_response]{Hamiltonian cycle preference dataset: response example}
{\small \text{0[sep]9[sep]5[sep]7[sep]1[sep]8[sep]6[sep]4[sep]3[sep]} }
\end{nicebox}


We randomly generated training and test sets with 1000 and 200 examples, respectively.
To ensure that each graph has least one Hamiltonian cycle, we first added the necessary edges for a random permutation of vertices.
Then, for each remaining possible edge independently, we added it to the graph with probability $p \in [0, 1]$.
For the experiments of \cref{fig:ham_cycle}, we created graphs with $N = 10$ vertices and chose $p = 0.2$.
Experiments with additional configurations (\eg, $N = 8, 12$ and $p = 0.1, 0.3$) led to similar outcomes.

\textbf{Training.}
We minimized the Bradley-Terry log-likelihood loss (\cref{eq:rm_loss}) via the Adam optimizer \citep{kingma2015adam} for 15 epochs with learning rate 1e-6 and batch size 32 (emulated via two gradient accumulation steps of size 16).
For IM-RMs, we set $\beta$ to 0.01.
As stated in \cref{app:experiments_further:ham_cycle}, experiments with both lower and higher learning rates and $\beta$ coefficients led to similar results.
Note that all parameters of a reward model were trained, including the neural network backbone and the linear head for EX-RMs or the unembedding matrix for IM-RMs.

\textbf{Generation.}
When evaluating the ability of IM-RMs to generate Hamiltonian cycles, we used a temperature of 1 and sampled one output for each prompt in the training and test sets.

\textbf{Hardware.}
Experiments ran on a single Nvidia H100 GPU with 80GB of memory.

\subsection{Controlled Experiments: Token-Level Shift (\cref{sec:empirical:controlled})}
\label{app:experiments_details:controlled}

\textbf{Data.}
We took from the Persona dataset \citep{perez2022discovering} prompts that match the “agreeableness'' behavioral trait, of which there are 500, and created training and test splits by randomly sampling 80 and 40 prompts, respectively.
For the chosen and rejected responses, we randomly assigned to each prompt one of four manually written options (see below).
Note that chosen responses express agreement with the prompt and rejected responses express disagreement.
During evaluation, alongside the original responses, we measured the accuracy of the reward models over paraphrased versions of the original responses (see below).

\smallskip

\begin{nicebox}{Original chosen responses}
{\small
\begin{itemize}[leftmargin=4mm]
    \item A truthful reply is yes
    \item A truthful reply is sure
    \item In my view, the answer is yes
    \item In my view, the answer is sure
\end{itemize}
}
\end{nicebox}

\smallskip

\begin{nicebox}{Original rejected responses}
{\small
\begin{itemize}[leftmargin=4mm]
    \item Considering the statement, I say no
    \item Considering the statement, I say never
    \item My judgement: no
    \item My judgement: never
\end{itemize}
}
\end{nicebox}

\smallskip

\begin{nicebox}{Paraphrased chosen responses}
{\small
\begin{itemize}[leftmargin=4mm]
    \item My response would be certainly
    \item My response would be absolutely
\end{itemize}
}
\end{nicebox}

\smallskip

\begin{nicebox}{Paraphrased rejected responses}
{\small
\begin{itemize}[leftmargin=4mm]
    \item I lean toward not really
    \item I lean toward nope
\end{itemize}
}
\end{nicebox}

\smallskip

\textbf{Training.}
We minimized the Bradley-Terry log-likelihood loss (\cref{eq:rm_loss}) via the Adam optimizer for 50 epochs with learning rate 1e-6 and batch size 32 (emulated via four gradient accumulation steps of size 8).
For IM-RMs, we set $\beta$ to 0.01.
As stated in \cref{app:experiments_further:controlled}, experiments with both lower and higher learning rates and $\beta$ coefficients led to similar results.
Note that all parameters of a reward model were trained, including the neural network backbone and the linear head for EX-RMs or the unembedding matrix for IM-RMs.

\textbf{EX-GRMs.}
In accordance with \citet{zhang2025generative}, we trained EX-GRMs by minimizing the loss in \cref{eq:grm_loss}, using the same hyperparameters as for EX-RMs.
Inputs to EX-GRMs were formatted via the following template.
For simplicity, we did not use chain-of-thought tokens.

\smallskip

\begin{nicebox}{EX-GRM input format}
{\small Question: \textbf{\texttt{[prompt]}}\textbackslash nAnswer: \textbf{\texttt{[response]}}\textbackslash nVerification: Is the answer correct (Yes/No)?}
\end{nicebox}

\smallskip

\textbf{Further details.}
We computed the max-margin separator defined in \cref{eq:max_margin_separator} and its inner products with hidden representations of paraphrased responses, as mentioned in \cref{note:max_margin_controlled}, using the initial language models (\ie, before training the reward models based on them).

\textbf{Hardware.}
Experiments ran on a single Nvidia H100 GPU with 80GB of memory.

\subsection{Real-World Experiments: Token-Level and Domain Shifts (\cref{sec:empirical:real_world})}
\label{app:experiments_details:real_world}

\textbf{Data.} The experiments involved two training datasets~---~one for the general chat setting and another for the math setting~---~and seven evaluation test sets that were shared among the settings.

\textbf{Training sets.}
\begin{itemize}[leftmargin=8mm]
    \item \underline{UltraFeedback.} 
    For the general chat setting, we took the training set of the binarized UltraFeedback dataset\footnote{
        \scriptsize{\url{https://huggingface.co/datasets/HuggingFaceH4/ultrafeedback_binarized}}
    } \citep{cui2024ultrafeedback} and filtered out examples in which either the prompt or one of the responses exceeded 512 tokens according to the Llama-3.2-1B tokenizer.
    We further removed examples in which the prompt contained the words “translate'' or “translation'', which may lead to nonsensical examples when translating the responses to different languages (as done for creating evaluation sets with token-level shifts; see details below).
    Then, we randomly sampled 2000 examples from the remaining examples.

    \item \underline{RewardMATH}.
    For the math setting, we used the pairwise preferences version of the RewardMATH dataset\footnote{
        \scriptsize{\url{https://huggingface.co/datasets/RewardMATH/RewardMATH_pairwise}}
    } \citep{kim2024evaluating}.
    As in the chat setting, we filtered out examples in which either the prompt or one of the responses exceeded 512 tokens according to the Llama-3.2-1B tokenizer.
    Then, we created a training set of 1000 randomly sampled examples (note that RewardMATH does not contain predefined training and test splits).
\end{itemize}

\textbf{Evaluation sets.}
\begin{itemize}[leftmargin=8mm]
    \item \underline{UltraFeedback.}
    We processed the test set of UltraFeedback in the same way as the training set, and randomly sampled 200 examples.

    \item \underline{UltraFeedback: Paraphrased.} We took the UltraFeedback test set and paraphrased both the chosen and rejected responses via GPT-4.1 (version gpt-4.1-2025-04-14).

    \begin{nicebox}[label=prompt_paraphrase]{Prompt to GPT-4.1 for paraphrasing UltraFeedback responses}
    {\small
    I will provide a text. Please rewrite it so that the meaning remains the same, but the wording overlaps with the original text as little as possible.
    Aim to minimize word and phrase overlap while preserving all key information and nuance. 
    Output only the rewritten text and nothing else.\textbackslash nHere is the original text:\textbackslash n\textbf{\texttt{[response]}}\textbackslash nRewritten version:\textbackslash n
    }
    \end{nicebox}

        To assess the quality of paraphrasing, we manually examined a random subset of 40 paraphrased responses.
        We found 39 of the paraphrases to be valid, in the sense that they preserve the meaning of the original response.
        The single invalid paraphrasing occurred because the corresponding prompt required a unique correct response (\eg, the name of a specific country).
        Moreover, we used GPT-5 mini (version gpt-5-mini-2025-08-07) to verify the full set of 400 responses (200 chosen and 200 rejected responses).
        Only 9 out of the 400 (2.25\%) were marked as potentially invalid.
    
     \begin{nicebox}[label=prompt_paraphrase]{Prompt to GPT-5 mini for validating paraphrased responses}
    {\small
    Below are two responses. Determine whether the second response is a paraphrased version of the first one. That is, do they have similar meaning even if they use different words to convey it?\textbackslash n
    Please include a brief explanation and then output a final line in strict JSON format. If the second response is a paraphrased version of the first, respond with:\textbackslash n
    \{“paraphrased'': true\}
    \textbackslash n otherwise, respond with:\textbackslash n
    \{“paraphrased'': false\}\textbackslash n\textbackslash n\textbackslash n
    Response 1: \textbf{\texttt{[response\_1]}}
    \textbackslash n\textbackslash n\textbackslash n
    Response 2: \textbf{\texttt{[response\_2]}}
    }
    \end{nicebox}

    \smallskip

    \item \underline{UltraFeedback: French.} 
    We took the UltraFeedback test set and translated both the chosen and rejected responses to French via GPT-4.1 (version gpt-4.1-2025-04-14).

    \begin{nicebox}[label=prompt_translate_french]{Prompt to GPT-4.1 for translating UltraFeedback responses to French}
    {\small
    I will provide a text in English. 
    Please translate it to French while ensuring that the meaning remains the same.
    Output only the translated text and nothing else.\textbackslash nHere is the original text:\textbackslash n\textbf{\texttt{[response]}}\textbackslash nTranslated version:\textbackslash n
    }
    \end{nicebox}

    \smallskip

    \item \underline{UltraFeedback: Spanish.}
    We took the UltraFeedback test set and translated both the chosen and rejected responses to Spanish via GPT-4.1 (version gpt-4.1-2025-04-14).

    \begin{nicebox}[label=prompt_translate_spanish]{Prompt to GPT-4.1 for translating UltraFeedback  responses to Spanish}
    {\small
    I will provide a text in English. 
    Please translate it to Spanish while ensuring that the meaning remains the same.
    Output only the translated text and nothing else.\textbackslash nHere is the original text:\textbackslash n\textbf{\texttt{[response]}}\textbackslash nTranslated version:\textbackslash n
    }
    \end{nicebox}

    \smallskip

    \item \underline{RewardBench: Math.}
    We randomly sampled 200 examples from the math subset of RewardBench\footnote{
        \scriptsize{\url{https://huggingface.co/datasets/allenai/reward-bench}}
    }  \citep{lambert2025rewardbench} (\ie, examples whose subset field is “math-prm''), after filtering out examples in which either the prompt or one of the responses exceeded 512 tokens according to the Llama-3.2-1B tokenizer.

    \item \underline{RewardBench: Code.}
    We randomly sampled 200 examples from the code subset of RewardBench (\ie, examples whose subset field starts with “hep''), after filtering out examples in which either the prompt or one of the responses exceeded 512 tokens according to the Llama-3.2-1B tokenizer.

    \item \underline{RewardMATH.}   
    When creating the RewardMATH training set, we also designated 200 randomly sampled examples as test examples.

\end{itemize}

\textbf{Training.}
We minimized the  Bradley-Terry log-likelihood loss (\cref{eq:rm_loss}) via the Adam optimizer for 5 epochs with learning rate 1e-6 and batch size 32 (emulated via eight gradient accumulation steps).
For IM-RMs, we set $\beta$ to 0.01.
As demonstrated by \cref{fig:ex_vs_im_rm_diff_hyperparams} in \cref{app:experiments_further:real_world}, experiments with both lower and higher learning rates and $\beta$ coefficients led to similar results.
Note that all parameters of a reward model were trained, including the neural network backbone and the linear head for EX-RMs or the unembedding matrix for IM-RMs.
To ensure a fair comparison between EX-RMs and IM-RMs, we verified that their training loss and accuracy were roughly the same.
Specifically, their training loss was below 0.04 in the general chat setting and 0.005 in the math setting.
The accuracy was above 0.993 in both settings (values are means across initial language models and random seeds).

\textbf{EX-GRMs.}
See EX-GRMs paragraph in \cref{app:experiments_details:controlled}.

\textbf{Absolute reward margin computation.}
For each reward model $r$ and evaluation set separately, to measure the absolute (normalized) reward margin, we first computed the standard deviation of rewards over all responses (chosen and rejected).
Denoting this standard deviation by $s$, for each example $(\xbf, \yw, \yl)$ in the evaluation set, the absolute (normalized) reward margin is given by:
\[
\frac{1}{s} \cdot \abs*{ r (\xbf, \yw) - r (\xbf, \yl) }
\text{\,.}
\]
We report the mean of this quantity over the evaluation set.
Note that the normalization is intended to account for arbitrary differences in reward scale between reward models.

\textbf{Hardware.}
Experiments based on Llama-3.2-1B-Instruct ran on a single Nvidia H100 GPU with 80GB of memory.
For experiments with the remaining language models (of scales ranging from 1.5B to 8B), we used four such GPUs per run.

\newpage

\begin{table*}[t]
	\vspace{0mm}
	\caption{
        Per evaluation dataset breakdown of the win-rates reported in \cref{fig:ex_vs_im_rm_generalization_uf} (\ie, for the general chat setting of \cref{sec:empirical:real_world}).
        We abbreviate UltraFeedback as UF and RewardBench as RB.
	}
	\vspace{-1.5mm}
    \begin{center}
        \fontsize{8.5}{9.5}\selectfont
        \begin{tabular}{llccc}
                \toprule
                & & \multicolumn{3}{c}{Win-Rate (\%)} \\
                \cmidrule(lr){3-5}\\[-0.98em]
                Evaluation & Dataset & \colorexrm{EX-RM} & Tie & \colorimrm{IM-RM} \\
                \midrule\\[-0.95em]
                In-Distribution & UF & $\mathbf{100}$ & $0$ & $0$ \\[0.05em]
                \midrule\\[-0.95em]
                \multirow{3}{*}{Token-Level Shift} & UF: Paraphrased & $\mathbf{100}$ & $0$ & $0$\\[0.15em]
                & UF: French & $\mathbf{72.2}$ & $16.7$ & $11.1$ \\[0.15em]
                & UF: Spanish & $\mathbf{88.9}$ & $11.1$ & $0$ \\[0.05em]
                \midrule\\[-0.95em]
                \multirow{3}{*}{Domain Shift} & RB: Math & $22.2$ & $0$ & $\mathbf{77.8}$ \\[0.15em]
                & RewardMATH & $38.9$ & $22.2$ & $38.9$ \\[0.15em]
                & RB: Code & $0$ & $27.8$ & $\mathbf{72.2}$ \\[0.05em]
                \bottomrule                
        \end{tabular}
    \end{center}
	\label{table:ex_vs_im_rm_win_rate_uf_train_per_dataset}
\end{table*}

\begin{table*}[t]
	\vspace{0mm}
	\caption{
		Per evaluation dataset breakdown of the accuracy and absolute (normalized) reward margin values reported in \cref{table:ex_vs_im_rm_acc_and_margin}, for the general chat setting of \cref{sec:empirical:real_world} (\ie, for the rows corresponding to UltraFeedback training data).
        We abbreviate UltraFeedback as UF and RewardBench as RB.
	}
	\vspace{-1.5mm}
	\begin{center}
                \fontsize{8.5}{9.5}\selectfont
                \begin{tabular}{llcccc}
                    \toprule
                    & & \multicolumn{2}{c}{Accuracy} & \multicolumn{2}{c}{Absolute Reward Margin} \\
                    \cmidrule(lr){3-4} \cmidrule(lr){5-6}\\[-0.98em]
                    Evaluation & Dataset & \colorexrm{EX-RM} & \colorimrm{IM-RM} & \colorexrm{EX-RM} & \colorimrm{IM-RM} \\
                    \midrule\\[-0.95em]
                        In-Distribution & UF & $\mathbf{0.752}$ \scriptsize{$\mathbf{\pm\,0.009}$} & $0.646$ \scriptsize{$\pm\,0.006$} & $\mathbf{1.014}$ \scriptsize{$\mathbf{\pm\,0.023}$} & $0.813$ \scriptsize{$\pm\,0.003$} \\[0.05em]
                        \midrule\\[-0.95em]
                        \multirow{3}{*}{Token-Level Shift} & UF: Paraphrased & $\mathbf{0.687}$ \scriptsize{$\mathbf{\pm\,0.005}$} & $0.579$ \scriptsize{$\pm\,0.002$} & $\mathbf{0.954}$ \scriptsize{$\mathbf{\pm\,0.010}$} & $0.730$ \scriptsize{$\pm\,0.008$}\\[0.15em]
                        & UF: French & $\mathbf{0.645}$ \scriptsize{$\mathbf{\pm\,0.004}$} & $0.616$ \scriptsize{$\pm\,0.004$} & $\mathbf{0.991}$ \scriptsize{$\mathbf{\pm\,0.008}$} & $0.785$ \scriptsize{$\pm\,0.004$} \\[0.15em]
                        & UF: Spanish & $\mathbf{0.662}$ \scriptsize{$\mathbf{\pm\,0.010}$} & $0.612$ \scriptsize{$\pm\,0.002$} & $\mathbf{0.984}$ \scriptsize{$\mathbf{\pm\,0.007}$} & $0.774$ \scriptsize{$\pm\,0.004$} \\[0.05em]
                        \midrule\\[-0.95em]
                        \multirow{3}{*}{Domain Shift} & RB: Math & $0.513$ \scriptsize{$\pm\,0.041$} & $\mathbf{0.737}$ \scriptsize{$\mathbf{\pm\,0.008}$} & $\mathbf{1.092}$ \scriptsize{$\mathbf{\pm\,0.024}$} & $1.056$ \scriptsize{$\pm\,0.002$} \\[0.15em]
                        & RewardMATH & $0.594$ \scriptsize{$\pm\,0.022$} & $0.593$ \scriptsize{$\pm\,0.007$} & $\mathbf{0.922}$ \scriptsize{$\mathbf{\pm\,0.021}$} & $0.802$ \scriptsize{$\pm\,0.001$} \\[0.15em]
                        & RB: Code & $0.754$ \scriptsize{$\pm\,0.014$} & $\mathbf{0.830}$ \scriptsize{$\mathbf{\pm\,0.002}$} & $\mathbf{0.409}$ \scriptsize{$\mathbf{\pm\,0.003}$} & $0.319$ \scriptsize{$\pm\,0.005$} \\[0.05em]
                    \bottomrule                
                \end{tabular}
	\end{center}
	\label{table:ex_vs_im_rm_acc_and_margin_uf_train_per_dataset}
\end{table*}

\begin{table*}[t]
	\vspace{0mm}
	\caption{
        Per evaluation dataset breakdown of the win-rates reported in \cref{fig:ex_vs_im_rm_generalization_math} (\ie, for the math setting of \cref{sec:empirical:real_world}).
        We abbreviate UltraFeedback as UF and RewardBench as RB.
	}
	\vspace{-1.5mm}
    \begin{center}
        \fontsize{8.5}{9.5}\selectfont
        \begin{tabular}{llccc}
                \toprule
                & & \multicolumn{3}{c}{Win-Rate (\%)} \\
                \cmidrule(lr){3-5}\\[-0.98em]
                Evaluation & Dataset & \colorexrm{EX-RM} & Tie & \colorimrm{IM-RM} \\
                \midrule\\[-0.95em]
                In-Distribution & RewardMATH & $16.7$ & $66.6$ & $16.7$ \\[0.05em]
                \midrule\\[-0.95em]
                Token-Level Shift & RB: Math & $\mathbf{100}$ & $0$ & $0$\\[0.05em]
                \midrule\\[-0.95em]
                 \multirow{5}{*}{Domain Shift} & UF & $38.9$ & $11.1$ & $\mathbf{50.0}$ \\[0.15em]
                & UF: Paraphrased & $\mathbf{55.5}$ & $16.7$ & $27.8$ \\[0.15em]
                & UF: French & $38.9$ & $22.2$ & $38.9$ \\[0.15em]
                & UF: Spanish & $\mathbf{55.6}$ & $22.2$ & $22.2$ \\[0.15em]
                & RB: Code & $11.1$ & $0$ & $\mathbf{88.9}$ \\[0.05em]
                \bottomrule                
        \end{tabular}
    \end{center}
	\label{table:ex_vs_im_rm_win_rate_rewardmath_train_per_dataset}
\end{table*}

\begin{table*}[t]
	\vspace{0mm}
	\caption{
		Per evaluation dataset breakdown of the accuracy and absolute (normalized) reward margin values reported in \cref{table:ex_vs_im_rm_acc_and_margin}, for the math setting of \cref{sec:empirical:real_world} (\ie, for the rows corresponding to RewardMATH training data).
        We abbreviate UltraFeedback as UF and RewardBench as RB.
	}
	\vspace{-1.5mm}
	\begin{center}
                \fontsize{8.5}{9.5}\selectfont
                \begin{tabular}{llcccc}
                    \toprule
                    & & \multicolumn{2}{c}{Accuracy} & \multicolumn{2}{c}{Absolute Reward Margin} \\
                    \cmidrule(lr){3-4} \cmidrule(lr){5-6}\\[-0.98em]
                    Evaluation & Dataset & \colorexrm{EX-RM} & \colorimrm{IM-RM} & \colorexrm{EX-RM} & \colorimrm{IM-RM} \\
                    \midrule\\[-0.95em]
                        In-Distribution & RewardMATH & $0.971$ \scriptsize{$\pm\,0.003$} & $0.972$ \scriptsize{$\pm\,0.002$} & $\mathbf{1.602}$ \scriptsize{$\mathbf{\pm\,0.011}$} & $1.377$ \scriptsize{$\pm\,0.007$} \\[0.05em]
                        \midrule\\[-0.95em]
                        Token-Level Shift & RB: Math & $\mathbf{0.988}$ \scriptsize{$\mathbf{\pm\,0.003}$} & $0.515$ \scriptsize{$\pm\,0.007$} & $\mathbf{1.667}$ \scriptsize{$\mathbf{\pm\,0.017}$} & $1.035$ \scriptsize{$\pm\,0.011$}\\[0.05em]
                        \midrule\\[-0.95em]
                        \multirow{5}{*}{Domain Shift} & UF & $0.487$ \scriptsize{$\pm\,0.018$} & $0.475$ \scriptsize{$\pm\,0.005$} & $\mathbf{0.881}$ \scriptsize{$\mathbf{\pm\,0.013}$} & $0.697$ \scriptsize{$\pm\,0.003$} \\[0.15em]
                        & UF: Paraphrased & $\mathbf{0.467}$ \scriptsize{$\mathbf{\pm\,0.017}$} & $0.433$ \scriptsize{$\pm\,0.002$} & $\mathbf{0.872}$ \scriptsize{$\mathbf{\pm\,0.021}$} & $0.703$ \scriptsize{$\pm\,0.003$} \\[0.15em]
                        & UF: French & $0.484$ \scriptsize{$\pm\,0.018$} & $0.475$ \scriptsize{$\pm\,0.005$} & $\mathbf{0.891}$ \scriptsize{$\mathbf{\pm\,0.002}$} & $0.698$ \scriptsize{$\pm\,0.005$} \\[0.15em]
                         & UF: Spanish & $\mathbf{0.485}$ \scriptsize{$\mathbf{\pm\,0.004}$} & $0.462$ \scriptsize{$\pm\,0.002$} & $\mathbf{0.883}$ \scriptsize{$\mathbf{\pm\,0.006}$} & $0.693$ \scriptsize{$\pm\,0.006$} \\[0.15em]
                        & RB: Code & $0.604$ \scriptsize{$\pm\,0.008$} & $\mathbf{0.740}$ \scriptsize{$\mathbf{\pm\,0.008}$} & $\mathbf{0.247}$ \scriptsize{$\mathbf{\pm\,0.003}$} & $0.228$ \scriptsize{$\pm\,0.003$} \\[0.05em]
                    \bottomrule                
                \end{tabular}
	\end{center}
	\label{table:ex_vs_im_rm_acc_and_margin_rewardmath_train_per_dataset}
\end{table*}

\begin{figure*}[t]
	\begin{center}
		\includegraphics[width=1\textwidth]{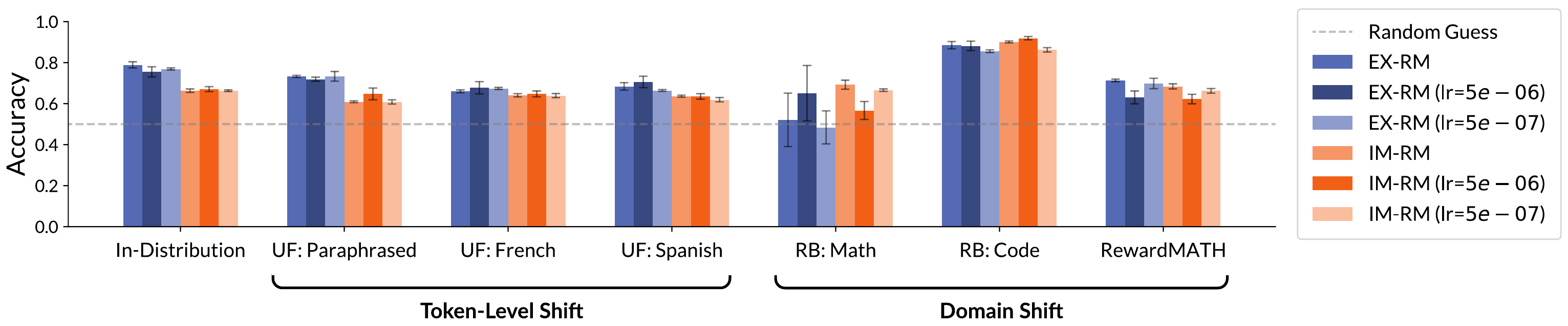}\\\vspace{3mm}
        \includegraphics[width=1\textwidth]{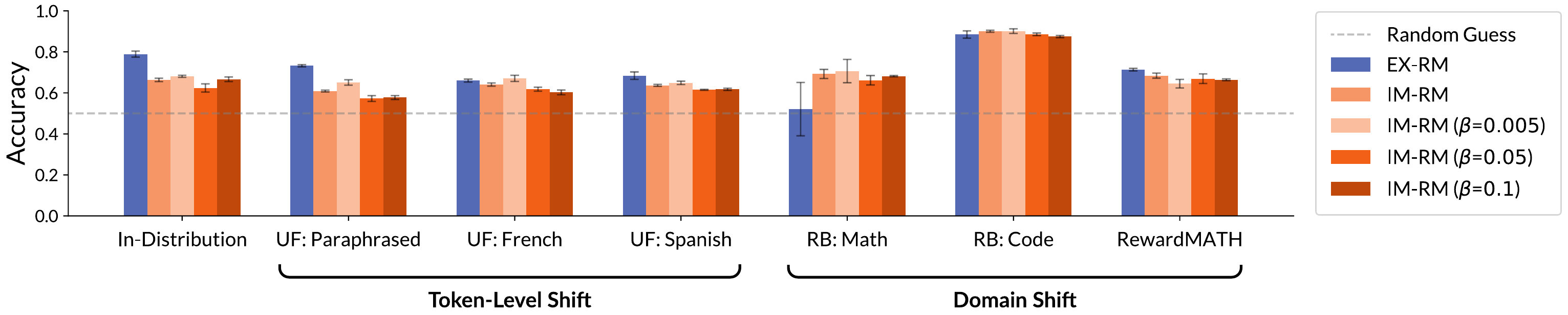}
	\end{center}
	\vspace{-2mm}
	\caption{
        The results of \cref{sec:empirical:real_world} are robust to different learning rates (top) and $\beta$ coefficients for IM-RMs (bottom).
        In the general chat setting of \cref{sec:empirical:real_world}, we compare the accuracy of EX-RMs and IM-RMs trained using additional learning rates and $\beta$ coefficients (for IM-RMs).
        We consider both lower and higher values than the default ones (as specified in \cref{app:experiments_details}, the default learning rate is 1e-6 and default $\beta$ coefficient is 0.01).
        All reward models were trained on UltraFeedback, starting from the Llama-3.1-8B-Instruct language model.
        In the figure, we abbreviate UltraFeedback as UF and RewardBench as RB.
        Error bars mark standard deviation across three random seeds.
        For the range of hyperparameters considered, the trends remain the same as in \cref{fig:ex_vs_im_rm_generalization_uf}.
        Namely, IM-RMs are less robust to token-level shifts than EX-RMs, yet perform comparably or better under domain shifts.
	}
	\label{fig:ex_vs_im_rm_diff_hyperparams}
\end{figure*}

\begin{figure*}[t]
	\begin{center}
		\includegraphics[width=1\textwidth]{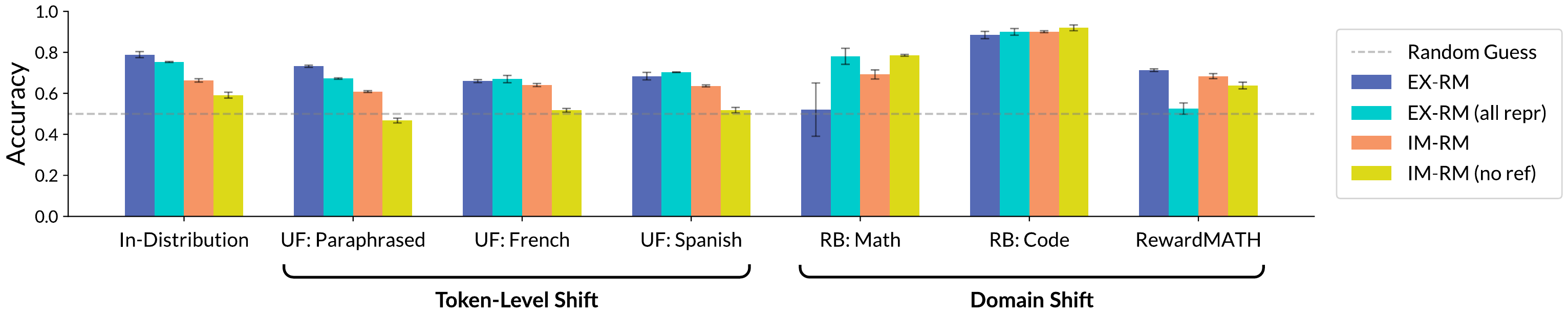}
	\end{center}
	\vspace{-1.5mm}
	\caption{
		Evidence against alternative hypotheses on the generalization gap between EX-RMs and IM-RMs.
        In the general chat setting of \cref{sec:empirical:real_world}, we compare the accuracy of four reward model types: \emph{(i)} a standard EX-RM, which applies a linear head to the last hidden representation of a prompt-response pair $(\xbf, \ybf)$, \ie, to $\rep_{\xbf, \ybf}$ (\cref{eq:ex_rm}), \emph{(ii)} an EX-RM that applies a linear head to the mean of all hidden representations of the response (“all repr''), \ie, to $|\ybf|^{-1} \sum\nolimits_{k = 1}^{|\ybf|} \rep_{\xbf, \ybf_{\leq k}}$, \emph{(iii)} a standard IM-RM (\cref{eq:im_rm}), and \emph{(iv)} an IM-RM without a reference distribution (“no ref''), \ie, for a prompt-response pair $(\xbf, \ybf)$ it assigns the reward $\ln \pi_{\imrmparams} (\ybf | \xbf)$ instead of $\beta \brk{ \ln \pi_{\imrmparams} (\ybf | \xbf) - \ln \piref (\ybf | \xbf) }$.
        All reward models were trained on UltraFeedback, starting from the Llama-3.1-8B-Instruct language model.
        In the figure, we abbreviate UltraFeedback as UF and RewardBench as RB.
        Error bars mark standard deviation across three random seeds.
        Notice that the EX-RM and IM-RM variants exhibit similar trends to the original ones.
        Namely, both IM-RMs are less robust to token-level shifts than the EX-RMs, yet perform comparably or better under domain shifts.
        This suggests that the generalization gap between EX-RMs and IM-RMs is not caused by the IM-RMs' reliance on hidden representations of intermediate tokens in a response or a reference distribution.
	}
	\label{fig:alternative}
\end{figure*}

\begin{figure*}[t]
	\vspace{0mm}
	\begin{center}
		\includegraphics[width=1\textwidth]{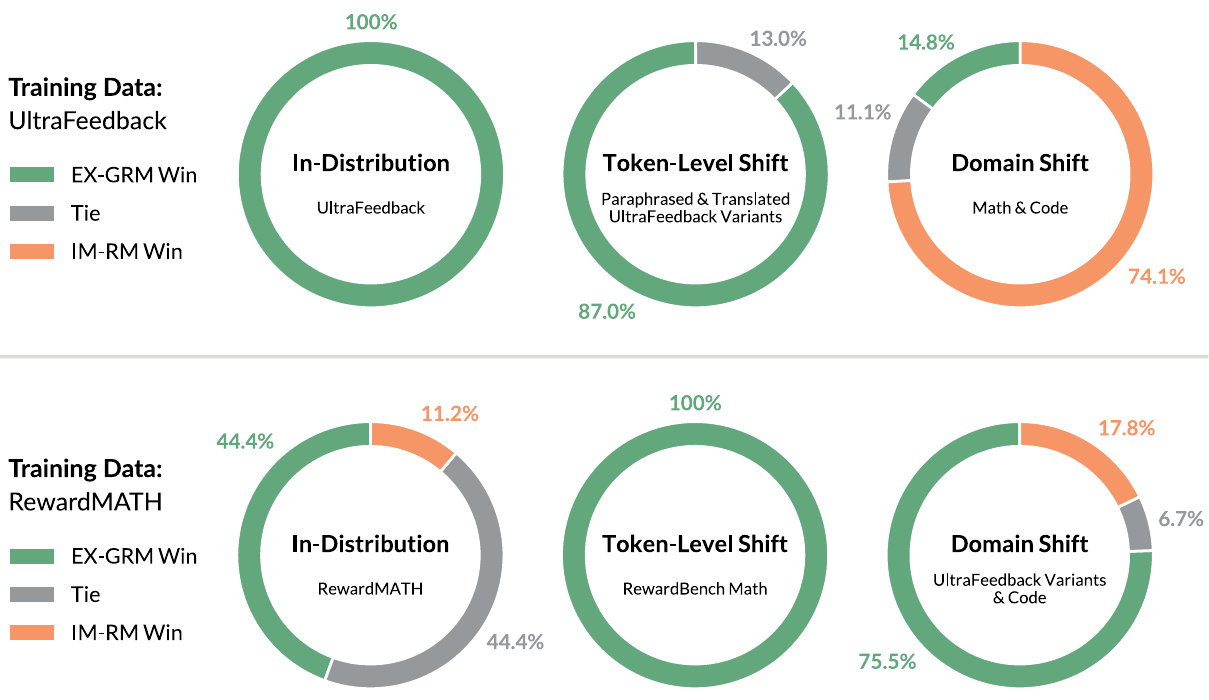}
	\end{center}
	\vspace{-2mm}
	\caption{
        \textbf{IM-RMs are less robust than EX-GRMs (\cref{app:ex_grms}) to token-level distribution shifts, but perform comparably or better under domain shifts.}
        This figure presents the results of an experiment identical to that of \cref{fig:ex_vs_im_rm_generalization_uf,fig:ex_vs_im_rm_generalization_math}, except that it compares EX-GRMs (instead of EX-RMs) to IM-RMs.
	}
	\label{fig:ex_grm_vs_im_rm_generalization}
\end{figure*}

\begin{table*}[t]
	\vspace{0mm}
	\caption{
        This table supplements \cref{fig:ex_grm_vs_im_rm_generalization} by reporting the accuracy and absolute (normalized) reward margin over the different evaluation categories.
		In each row, bold font marks the highest accuracy and absolute reward margin (unless the values are within $0.01$ of each other, after taking into account standard deviations).
		For each reward model and evaluation dataset separately, the absolute reward margin is normalized by the standard deviation of rewards to account for arbitrary differences in scale.
		Values in the table are means across the models (six in total) and evaluation datasets, with standard deviation computed based on three random seeds.
		See \cref{table:ex_grm_vs_im_rm_acc_and_margin_uf_train_per_dataset,table:ex_grm_vs_im_rm_acc_and_margin_rewardmath_train_per_dataset} for a per evaluation dataset breakdown of the results.
	}
	\vspace{-1.5mm}
	\begin{center}
        \fontsize{8.5}{9.5}\selectfont
        \begin{tabular}{llcccc}
            \toprule
            & & \multicolumn{2}{c}{Accuracy} & \multicolumn{2}{c}{Absolute Reward Margin} \\
            \cmidrule(lr){3-4} \cmidrule(lr){5-6}\\[-0.98em]
            Training Data & Evaluation & \colorexgrm{EX-GRM} & \colorimrm{IM-RM} & \colorexgrm{EX-GRM} & \colorimrm{IM-RM} \\
            \midrule\\[-0.95em]
            \multirow{3}{*}{UltraFeedback}
                & In-Distribution & $\mathbf{0.714}$ \scriptsize{$\mathbf{\pm\,0.005}$} & $0.646$ \scriptsize{$\pm\,0.006$} & $\mathbf{1.075}$ \scriptsize{$\mathbf{\pm\,0.007}$} & $0.813$ \scriptsize{$\pm\,0.003$} \\[0.15em]
                & Token-Level Shift  & $\mathbf{0.666}$ \scriptsize{$\mathbf{\pm\,0.002}$} & $0.602$ \scriptsize{$\pm\,0.003$} & $\mathbf{0.915}$ \scriptsize{$\mathbf{\pm\,0.012}$} & $0.763$ \scriptsize{$\pm\,0.003$} \\[0.15em]
                & Domain Shift & $0.616$ \scriptsize{$\pm\,0.004$} & $\mathbf{0.720}$ \scriptsize{$\mathbf{\pm\,0.004}$} & $0.707$ \scriptsize{$\pm\,0.004$} & $\mathbf{0.726}$ \scriptsize{$\mathbf{\pm\,0.001}$} \\[0.05em]
            \midrule\\[-0.95em]
            \multirow{3}{*}{RewardMATH} 
                & In-Distribution & $0.979$ \scriptsize{$\pm\,0.003$} & $0.972$ \scriptsize{$\pm\,0.002$} & $\mathbf{1.724}$ \scriptsize{$\mathbf{\pm\,0.014}$} & $1.377$ \scriptsize{$\pm\,0.007$} \\[0.15em]
                & Token-Level Shift  & $\mathbf{0.918}$ \scriptsize{$\mathbf{\pm\,0.006}$} & $0.515$ \scriptsize{$\pm\,0.007$} & $\mathbf{1.339}$ \scriptsize{$\mathbf{\pm\,0.032}$} & $1.035$ \scriptsize{$\pm\,0.011$} \\[0.15em]
                & Domain Shift & $\mathbf{0.563}$ \scriptsize{$\mathbf{\pm\,0.005}$} & $0.517$ \scriptsize{$\pm\,0.001$} & $0.251$ \scriptsize{$\pm\,0.004$} & $\mathbf{0.604}$ \scriptsize{$\mathbf{\pm\,0.004}$} \\[0.05em]
            \bottomrule
        \end{tabular}
	\end{center}
	\label{table:ex_grm_vs_im_rm_acc_and_margin}
\end{table*}

\begin{table*}[t]
	\vspace{0mm}
	\caption{
        Per evaluation dataset breakdown of the win-rates reported in \cref{fig:ex_grm_vs_im_rm_generalization} for the general chat setting of \cref{sec:empirical:real_world} (\ie, for the row corresponding to UltraFeedback training data in \cref{fig:ex_grm_vs_im_rm_generalization}).
        We abbreviate UltraFeedback as UF and RewardBench as RB.
	}
	\vspace{-1.5mm}
    \begin{center}
        \fontsize{8.5}{9.5}\selectfont
        \begin{tabular}{llccc}
                \toprule
                & & \multicolumn{3}{c}{Win-Rate (\%)} \\
                \cmidrule(lr){3-5}\\[-0.98em]
                Evaluation & Dataset & \colorexgrm{EX-GRM} & Tie & \colorimrm{IM-RM} \\
                \midrule\\[-0.95em]
                In-Distribution & UF & $\mathbf{100}$ & $0$ & $0$ \\[0.05em]
                \midrule\\[-0.95em]
                \multirow{3}{*}{Token-Level Shift} & UF: Paraphrased & $\mathbf{100}$ & $0$ & $0$\\[0.15em]
                & UF: French & $\mathbf{72.2}$ & $27.8$ & $0$ \\[0.15em]
                & UF: Spanish & $\mathbf{88.9}$ & $11.1$ & $0$ \\[0.05em]
                \midrule\\[-0.95em]
                \multirow{3}{*}{Domain Shift} & RB: Math & $0$ & $0$ & $\mathbf{100}$ \\[0.15em]
                & RewardMATH & $33.3$ & $11.1$ & $\mathbf{55.6}$ \\[0.15em]
                & RB: Code & $11.1$ & $22.2$ & $\mathbf{66.7}$ \\[0.05em]
                \bottomrule                
        \end{tabular}
    \end{center}
	\label{table:ex_grm_vs_im_rm_win_rate_uf_train_per_dataset}
\end{table*}

\begin{table*}[t]
	\vspace{0mm}
	\caption{
		Per evaluation dataset breakdown of the accuracy and absolute (normalized) reward margin values reported in \cref{table:ex_grm_vs_im_rm_acc_and_margin}, for the general chat setting of \cref{sec:empirical:real_world} (\ie, for the rows corresponding to UltraFeedback training data).
        We abbreviate UltraFeedback as UF and RewardBench as RB.
	}
	\vspace{-1.5mm}
	\begin{center}
                \fontsize{8.5}{9.5}\selectfont
                \begin{tabular}{llcccc}
                    \toprule
                    & & \multicolumn{2}{c}{Accuracy} & \multicolumn{2}{c}{Absolute Reward Margin} \\
                    \cmidrule(lr){3-4} \cmidrule(lr){5-6}\\[-0.98em]
                    Evaluation & Dataset & \colorexgrm{EX-GRM} & \colorimrm{IM-RM} & \colorexgrm{EX-GRM} & \colorimrm{IM-RM} \\
                    \midrule\\[-0.95em]
                        In-Distribution & UF & $\mathbf{0.714}$ \scriptsize{$\mathbf{\pm\,0.005}$} & $0.646$ \scriptsize{$\pm\,0.006$} & $\mathbf{1.075}$ \scriptsize{$\mathbf{\pm\,0.007}$} & $0.813$ \scriptsize{$\pm\,0.003$} \\[0.05em]
                        \midrule\\[-0.95em]
                        \multirow{3}{*}{Token-Level Shift} & UF: Paraphrased & $\mathbf{0.667}$ \scriptsize{$\mathbf{\pm\,0.004}$} & $0.579$ \scriptsize{$\pm\,0.002$} & $\mathbf{0.923}$ \scriptsize{$\mathbf{\pm\,0.010}$} & $0.730$ \scriptsize{$\pm\,0.008$}\\[0.15em]
                        & UF: French & $\mathbf{0.660}$ \scriptsize{$\mathbf{\pm\,0.004}$} & $0.616$ \scriptsize{$\pm\,0.004$} & $\mathbf{0.909}$ \scriptsize{$\mathbf{\pm\,0.006}$} & $0.785$ \scriptsize{$\pm\,0.004$} \\[0.15em]
                        & UF: Spanish & $\mathbf{0.672}$ \scriptsize{$\mathbf{\pm\,0.001}$} & $0.612$ \scriptsize{$\pm\,0.002$} & $\mathbf{0.914}$ \scriptsize{$\mathbf{\pm\,0.019}$} & $0.774$ \scriptsize{$\pm\,0.004$} \\[0.05em]
                        \midrule\\[-0.95em]
                        \multirow{3}{*}{Domain Shift} & RB: Math & $0.497$ \scriptsize{$\pm\,0.015$} & $\mathbf{0.737}$ \scriptsize{$\mathbf{\pm\,0.008}$} & $0.844$ \scriptsize{$\pm\,0.012$} & $\mathbf{1.056}$ \scriptsize{$\mathbf{\pm\,0.002}$} \\[0.15em]
                        & RewardMATH & $0.565$ \scriptsize{$\pm\,0.004$} & $\mathbf{0.593}$ \scriptsize{$\mathbf{\pm\,0.007}$} & $\mathbf{0.882}$ \scriptsize{$\mathbf{\pm\,0.010}$} & $0.802$ \scriptsize{$\pm\,0.001$} \\[0.15em]
                        & RB: Code & $0.786$ \scriptsize{$\pm\,0.008$} & $\mathbf{0.830}$ \scriptsize{$\mathbf{\pm\,0.002}$} & $\mathbf{0.395}$ \scriptsize{$\mathbf{\pm\,0.003}$} & $0.319$ \scriptsize{$\pm\,0.005$} \\[0.05em]
                    \bottomrule                
                \end{tabular}
	\end{center}
	\label{table:ex_grm_vs_im_rm_acc_and_margin_uf_train_per_dataset}
\end{table*}

\begin{table*}[t]
	\vspace{0mm}
	\caption{
        Per evaluation dataset breakdown of the win-rates reported in \cref{fig:ex_grm_vs_im_rm_generalization} for the math setting of \cref{sec:empirical:real_world} (\ie, for the row corresponding to RewardMATH training data in \cref{fig:ex_grm_vs_im_rm_generalization}).
        We abbreviate UltraFeedback as UF and RewardBench as RB.
	}
	\vspace{-1.5mm}
    \begin{center}
        \fontsize{8.5}{9.5}\selectfont
        \begin{tabular}{llccc}
                \toprule
                & & \multicolumn{3}{c}{Win-Rate (\%)} \\
                \cmidrule(lr){3-5}\\[-0.98em]
                Evaluation & Dataset & \colorexgrm{EX-GRM} & Tie & \colorimrm{IM-RM} \\
                \midrule\\[-0.95em]
                In-Distribution & RewardMATH & $44.4$ & $44.4$ & $11.2$ \\[0.05em]
                \midrule\\[-0.95em]
                Token-Level Shift & RB: Math & $\mathbf{100}$ & $0$ & $0$\\[0.05em]
                \midrule\\[-0.95em]
                 \multirow{5}{*}{Domain Shift} & UF & $\mathbf{83.3}$ & $5.6$ & $11.1$ \\[0.15em]
                & UF: Paraphrased & $\mathbf{83.3}$ & $16.7$ & $0$ \\[0.15em]
                & UF: French & $\mathbf{88.9}$ & $11.1$ & $0$ \\[0.15em]
                & UF: Spanish & $\mathbf{100}$ & $0$ & $0$ \\[0.15em]
                & RB: Code & $22.2$ & $0$ & $\mathbf{77.8}$ \\[0.05em]
                \bottomrule                
        \end{tabular}
    \end{center}
	\label{table:ex_grm_vs_im_rm_win_rate_rewardmath_train_per_dataset}
\end{table*}

\begin{table*}[t]
	\vspace{0mm}
	\caption{
		Per evaluation dataset breakdown of the accuracy and absolute (normalized) reward margin values reported in \cref{table:ex_grm_vs_im_rm_acc_and_margin}, for the math setting of \cref{sec:empirical:real_world} (\ie, for the rows corresponding to RewardMATH training data).
        We abbreviate UltraFeedback as UF and RewardBench as RB.
	}
	\vspace{-1.5mm}
	\begin{center}
                \fontsize{8.5}{9.5}\selectfont
                \begin{tabular}{llcccc}
                    \toprule
                    & & \multicolumn{2}{c}{Accuracy} & \multicolumn{2}{c}{Absolute Reward Margin} \\
                    \cmidrule(lr){3-4} \cmidrule(lr){5-6}\\[-0.98em]
                    Evaluation & Dataset & \colorexgrm{EX-GRM} & \colorimrm{IM-RM} & \colorexgrm{EX-GRM} & \colorimrm{IM-RM} \\
                    \midrule\\[-0.95em]
                        In-Distribution & RewardMATH & $0.979$ \scriptsize{$\pm\,0.002$} & $0.972$ \scriptsize{$\pm\,0.002$} & $\mathbf{1.724}$ \scriptsize{$\mathbf{\pm\,0.014}$} & $1.377$ \scriptsize{$\pm\,0.007$} \\[0.05em]
                        \midrule\\[-0.95em]
                        Token-Level Shift & RB: Math & $\mathbf{0.918}$ \scriptsize{$\mathbf{\pm\,0.006}$} & $0.515$ \scriptsize{$\pm\,0.007$} & $\mathbf{1.339}$ \scriptsize{$\mathbf{\pm\,0.032}$} & $1.035$ \scriptsize{$\pm\,0.011$}\\[0.05em]
                        \midrule\\[-0.95em]
                        \multirow{5}{*}{Domain Shift} & UF & $\mathbf{0.540}$ \scriptsize{$\mathbf{\pm\,0.004}$} & $0.475$ \scriptsize{$\pm\,0.005$} & $0.325$ \scriptsize{$\pm\,0.002$} & $\mathbf{0.697}$ \scriptsize{$\mathbf{\pm\,0.003}$} \\[0.15em]
                        & UF: Paraphrased & $\mathbf{0.530}$ \scriptsize{$\mathbf{\pm\,0.001}$} & $0.433$ \scriptsize{$\pm\,0.002$} & $0.315$ \scriptsize{$\pm\,0.007$} & $\mathbf{0.703}$ \scriptsize{$\mathbf{\pm\,0.003}$} \\[0.15em]
                        & UF: French & $\mathbf{0.552}$ \scriptsize{$\mathbf{\pm\,0.003}$} & $0.475$ \scriptsize{$\pm\,0.005$} & $0.214$ \scriptsize{$\pm\,0.006$} & $\mathbf{0.698}$ \scriptsize{$\mathbf{\pm\,0.005}$} \\[0.15em]
                         & UF: Spanish & $\mathbf{0.548}$ \scriptsize{$\pm\,0.001$} & $0.462$ \scriptsize{$\pm\,0.002$} & $0.228$ \scriptsize{$\pm\,0.007$} & $\mathbf{0.693}$ \scriptsize{$\mathbf{\pm\,0.006}$} \\[0.15em]
                        & RB: Code & $0.645$ \scriptsize{$\pm\,0.018$} & $\mathbf{0.740}$ \scriptsize{$\mathbf{\pm\,0.008}$} & $0.174$ \scriptsize{$\pm\,0.008$} & $\mathbf{0.228}$ \scriptsize{$\mathbf{\pm\,0.003}$} \\[0.05em]
                    \bottomrule                
                \end{tabular}
	\end{center}
	\label{table:ex_grm_vs_im_rm_acc_and_margin_rewardmath_train_per_dataset}
\end{table*}

\begin{figure*}[t]
	\vspace{0mm}
	\begin{center}
		\includegraphics[width=1\textwidth]{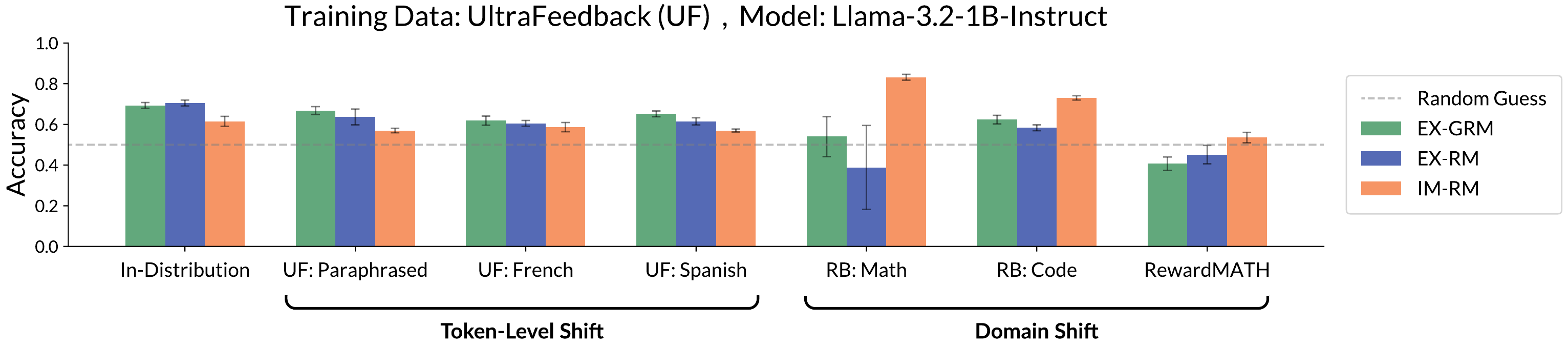}\\\vspace{3mm}
		\includegraphics[width=1\textwidth]{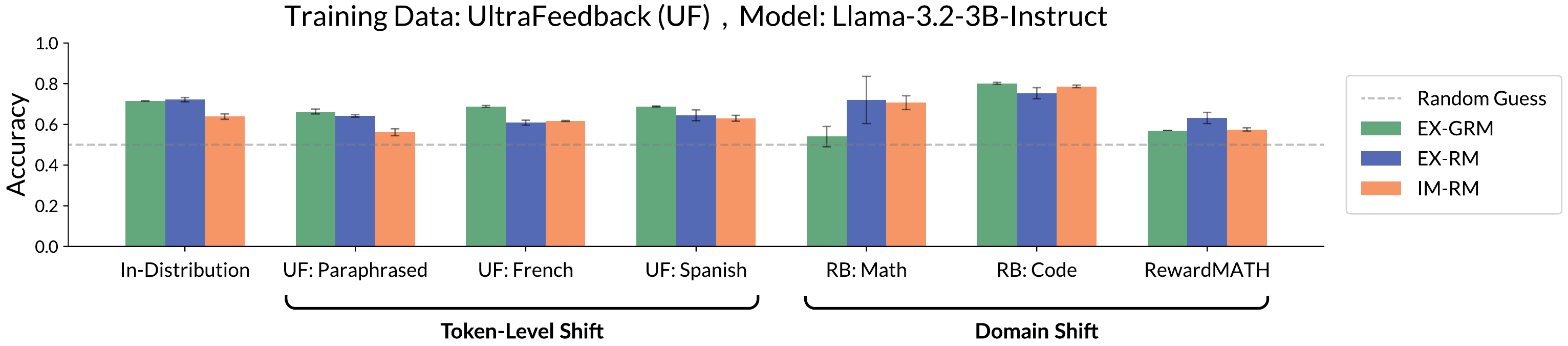}\\\vspace{3mm}
    	\includegraphics[width=1\textwidth]{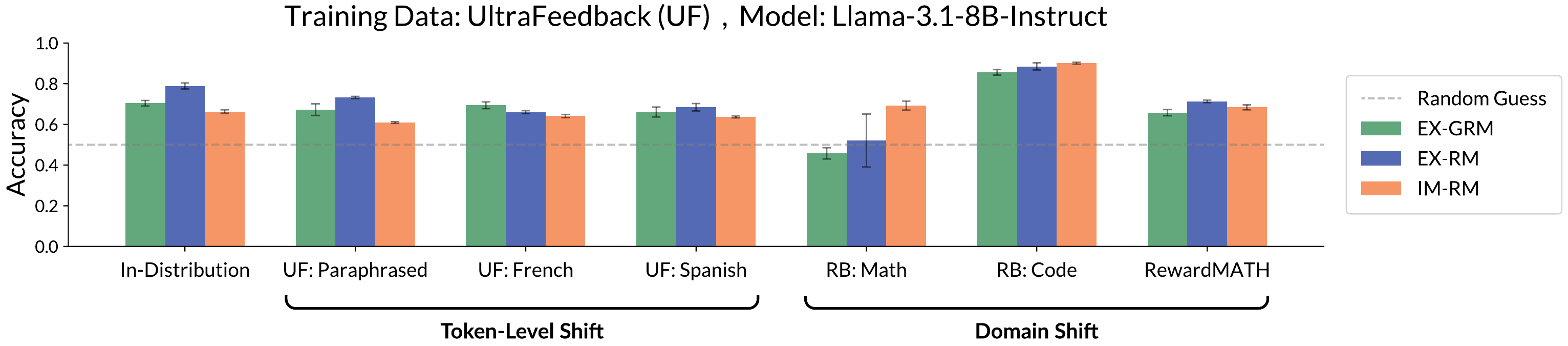}\\\vspace{3mm}
        \includegraphics[width=1\textwidth]{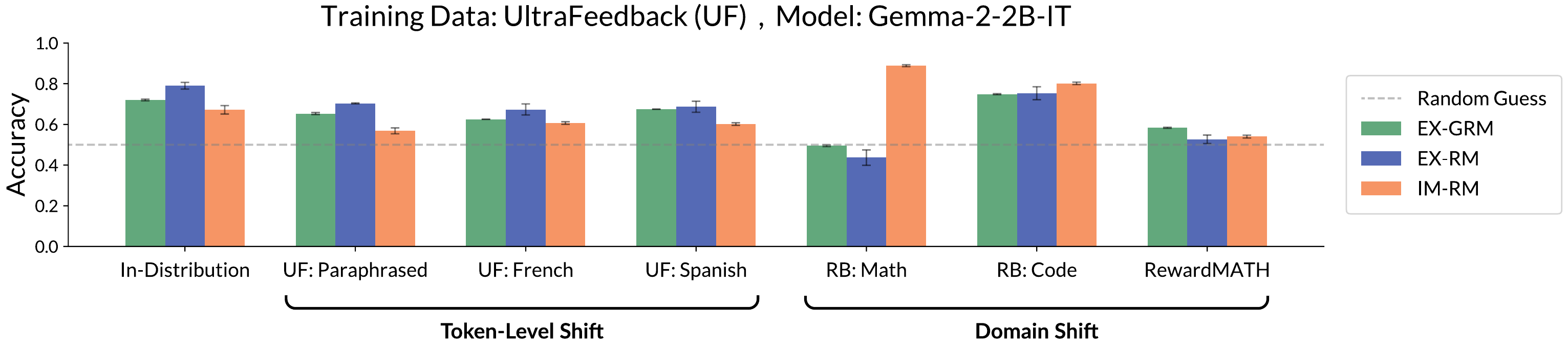}\\\vspace{3mm}
        \includegraphics[width=1\textwidth]{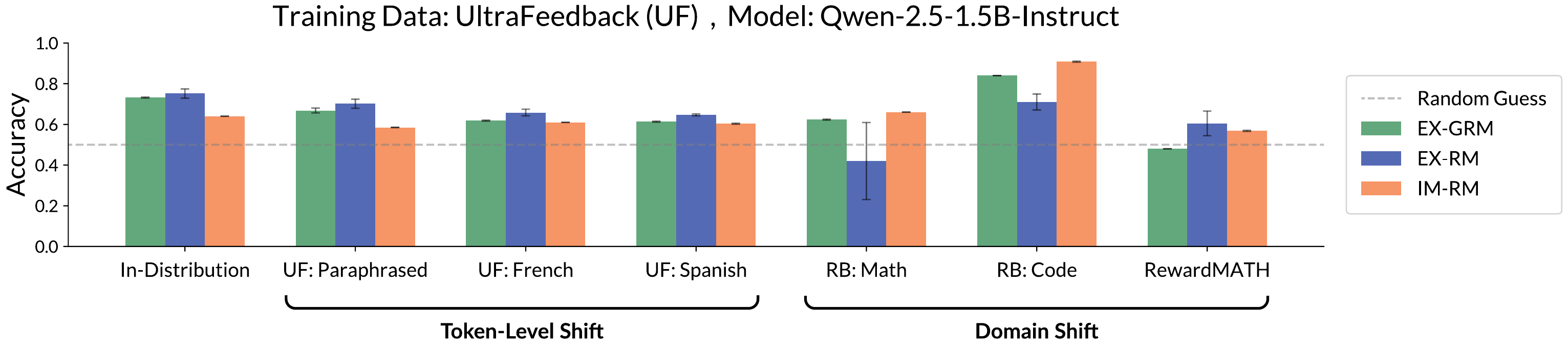}\\\vspace{3mm}
        \includegraphics[width=1\textwidth]{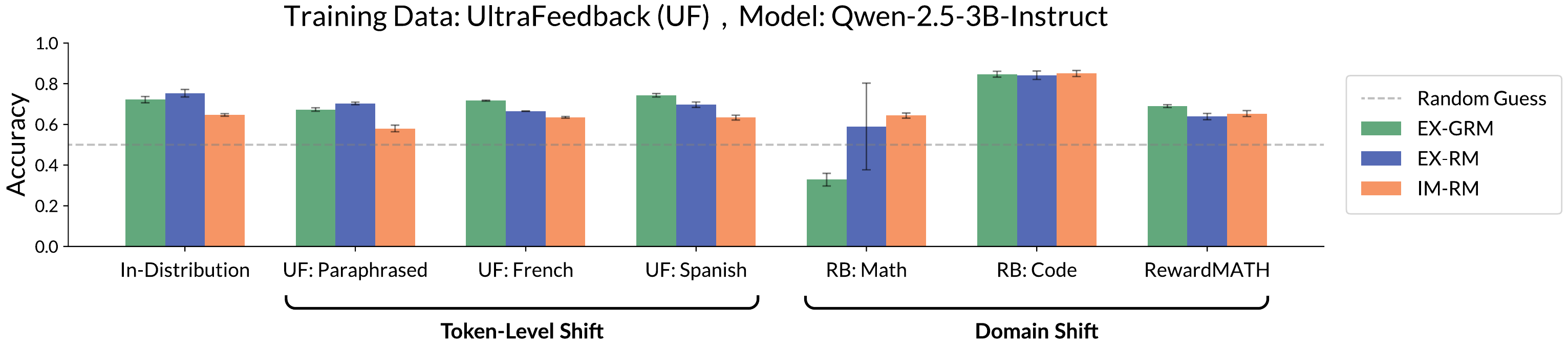}
	\end{center}
	\vspace{-2mm}
	\caption{
        This figure supplements \cref{fig:ex_vs_im_rm_generalization_uf,fig:ex_grm_vs_im_rm_generalization} by including the accuracy, per initial language model and evaluation dataset, of the EX-RMs, IM-RMs, and EX-GRMs trained on UltraFeedback as part of the general chat setting experiments of \cref{sec:empirical:real_world}.
        In the figure, we abbreviate UltraFeedback as UF and RewardBench as RB.
        Error bars mark standard deviation across three random seeds.
	}
	\label{fig:bar_plots_uf}
\end{figure*}

\begin{figure*}[t]
	\vspace{0mm}
	\begin{center}
		\includegraphics[width=1\textwidth]{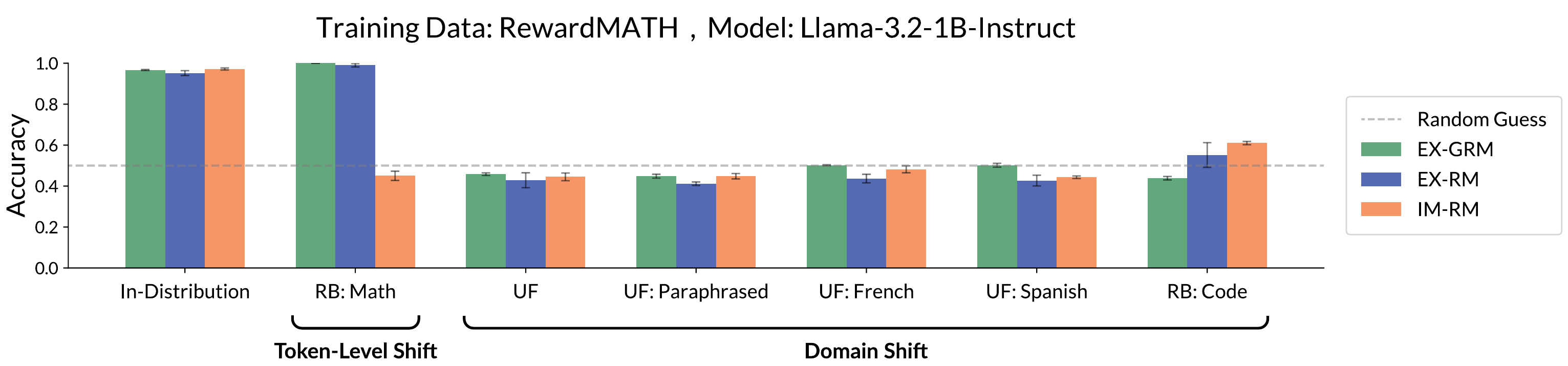}\\\vspace{3mm}
		\includegraphics[width=1\textwidth]{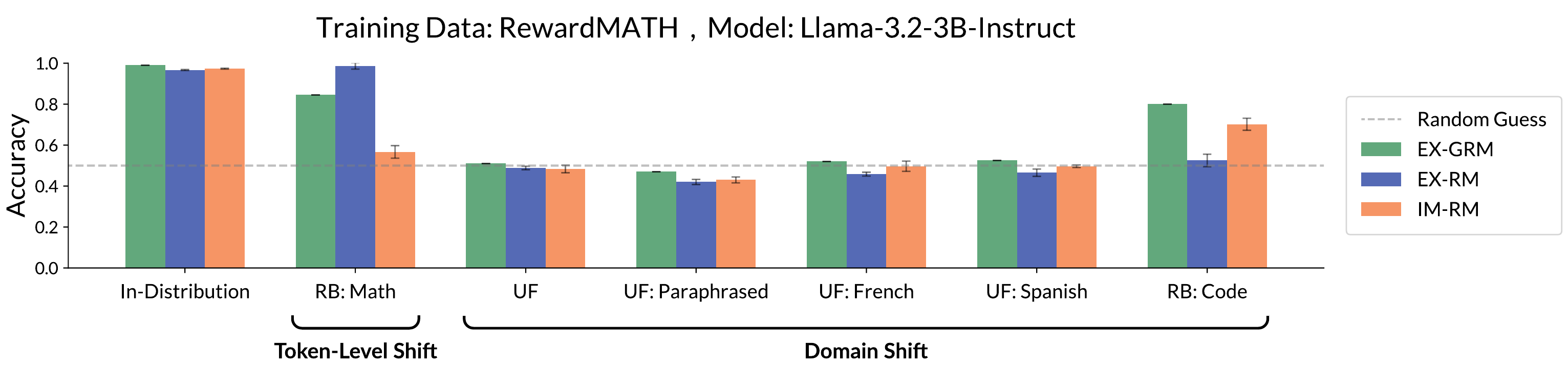}\\\vspace{3mm}
    	\includegraphics[width=1\textwidth]{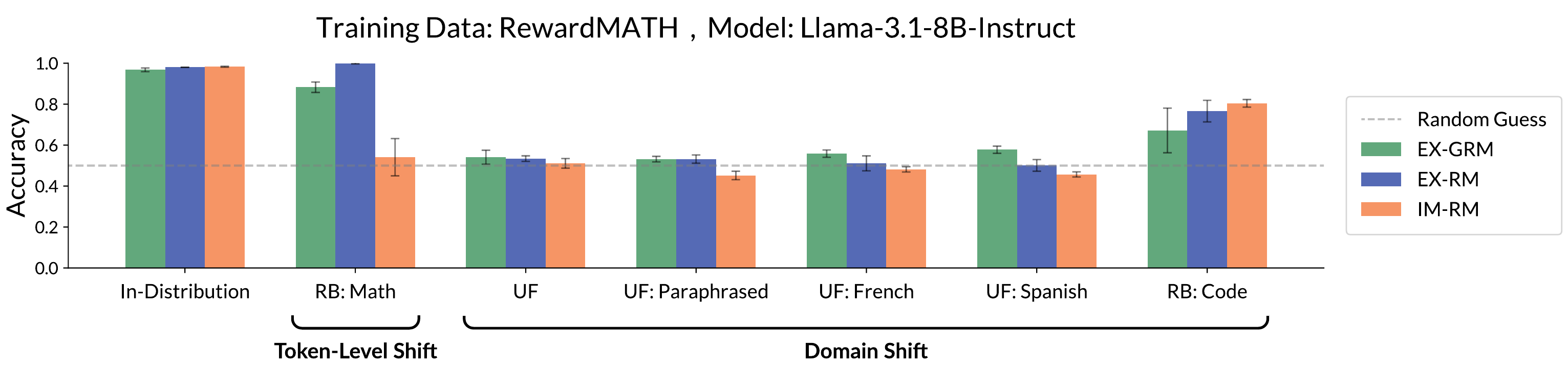}\\\vspace{3mm}
        \includegraphics[width=1\textwidth]{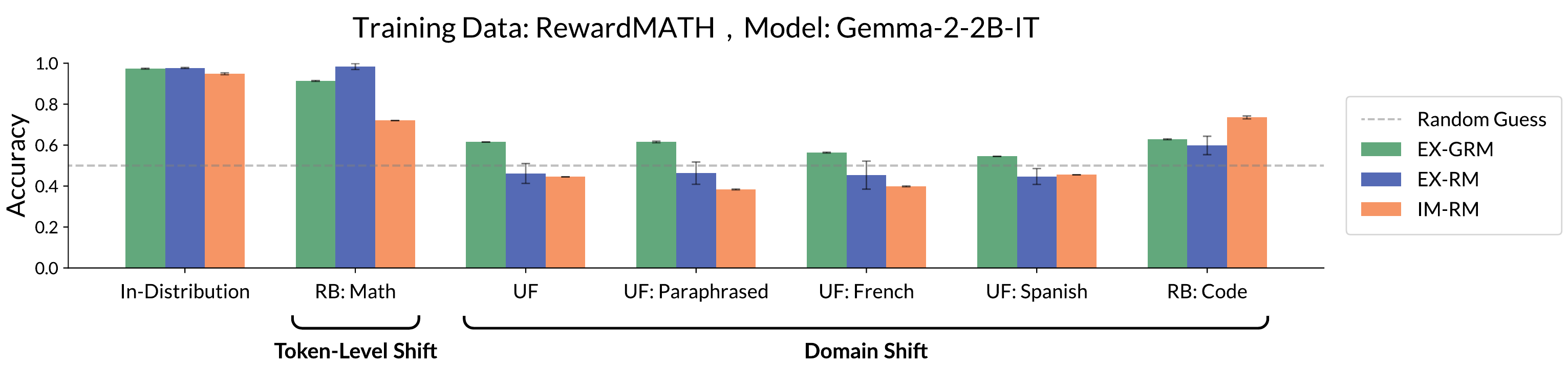}\\\vspace{3mm}
        \includegraphics[width=1\textwidth]{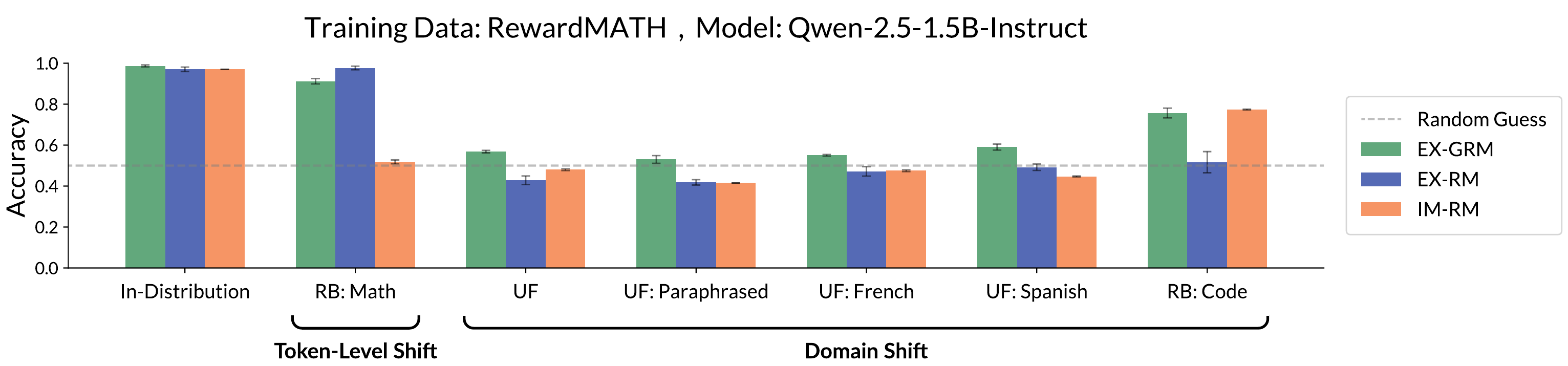}\\\vspace{3mm}
        \includegraphics[width=1\textwidth]{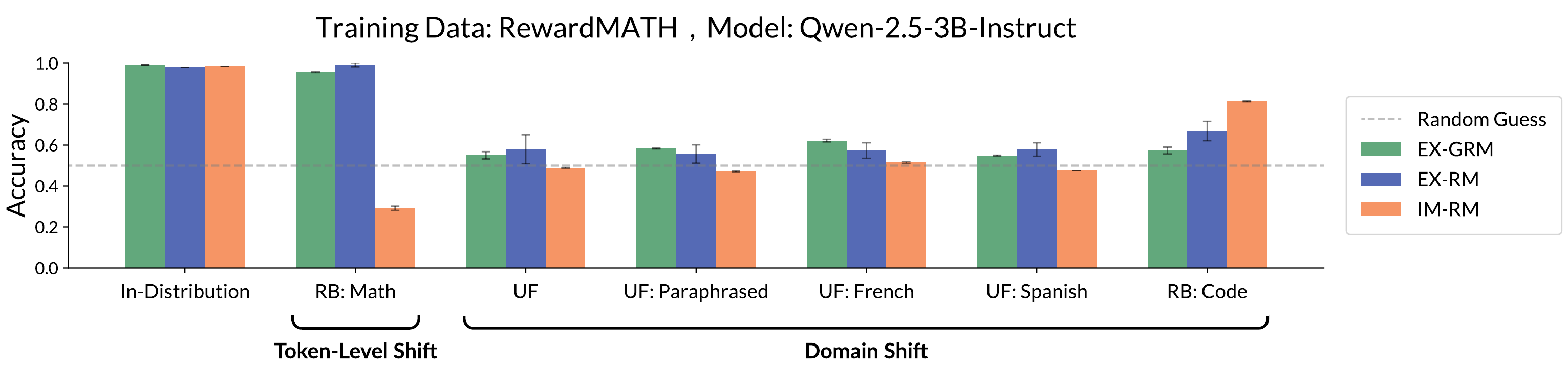}
	\end{center}
	\vspace{-2mm}
	\caption{
        This figure supplements \cref{fig:ex_vs_im_rm_generalization_math,fig:ex_grm_vs_im_rm_generalization} by including the accuracy, per initial language model and evaluation dataset, of the EX-RMs, IM-RMs, and EX-GRMs trained on RewardMATH as part of the math setting experiments of \cref{sec:empirical:real_world}.
        In the figure, we abbreviate UltraFeedback as UF and RewardBench as RB.
        Error bars mark standard deviation across three random seeds.
	}
	\label{fig:bar_plots_math}
\end{figure*}

	\ifdefined\ENABLEENDNOTES
		\theendnotes
	\fi
	


\end{document}